\newtheorem{theorem}{Theorem}
\newtheorem{remark}[theorem]{Remark}
\newtheorem{lemma}[theorem]{Lemma}
\newtheorem{corollary}[theorem]{Corollary}
\def\1{\bm{1}}
\def\sqrtfrac#1#2{\sqrt{\frac{#1}{#2}}}
\DeclareMathAlphabet{\mathsfit}{\encodingdefault}{\sfdefault}{m}{sl}
\SetMathAlphabet{\mathsfit}{bold}{\encodingdefault}{\sfdefault}{bx}{n}
\def\gA{{\mathcal{A}}}
\def\gE{{\mathcal{E}}}
\def\gI{{\mathcal{I}}}
\def\sA{{\mathbb{A}}}
\def\sB{{\mathbb{B}}}
\def\sG{{\mathbb{G}}}
\def\sT{{\mathbb{T}}}
\def\sV{{\mathbb{V}}}
\newcommand{\E}{\mathbb{E}}
\newcommand{\R}{\mathbb{R}}
\newcommand{\I}[1]{\mathbb{1}\{#1\}}
\DeclareMathOperator*{\argmin}{arg\,min}
\DeclarePairedDelimiterX{\infdivx}[2]{(}{)}{%
  #1\;\delimsize\|\;#2%
}
\DeclarePairedDelimiterX{\inp}[2]{\langle}{\rangle}{#1, #2}
\newcounter{protocol}
\newenvironment{nalign}{
\allowdisplaybreaks
    \begin{equation}
    \begin{aligned}
}{
    \end{aligned}
    \end{equation}
    \ignorespacesafterend
}
\newcommand\scalemath[2]{\scalebox{#1}{\mbox{\ensuremath{\displaystyle #2}}}}
\newcommand{\Qchanged}[1]{#1} 
\title{Near-optimal Per-Action Regret Bounds for Sleeping Bandits}
\author{%
  Quan Nguyen, Nishant A. Mehta \\
  Department of Computer Science\\
  University of Victoria\\
  \texttt{manhquan233@gmail.com, nmehta@uvic.ca} \\
}
\begin{document}

\maketitle

\begin{abstract}
  We derive near-optimal per-action regret bounds for sleeping bandits, in which both the sets of available arms and their losses in every round are chosen by an adversary. 
  In a setting with $K$ total arms and at most $A$ available arms in each round over $T$ rounds, the best known upper bound is $O(K\sqrt{TA\ln{K}})$, obtained indirectly via minimizing internal sleeping regrets. Compared to the minimax $\Omega(\sqrt{TA})$ lower bound, this upper bound contains an extra multiplicative factor of $K\ln{K}$.
  We address this gap by directly minimizing the per-action regret using generalized versions of EXP3, EXP3-IX and FTRL with Tsallis entropy, thereby obtaining near-optimal bounds of order $O(\sqrt{TA\ln{K}})$ and $O(\sqrt{T\sqrt{AK}})$.
  We extend our results to the setting of bandits with advice from sleeping experts, generalizing EXP4 along the way. This leads to new proofs for a number of existing adaptive and tracking regret bounds for standard non-sleeping bandits.  
  Extending our results to the bandit version of experts that report their confidences leads to new bounds for the confidence regret that depends primarily on the sum of experts' confidences. 
 We prove a lower bound, showing that for any minimax optimal algorithms, there exists an action whose regret is sublinear in $T$ but linear in the number of its active rounds.
\end{abstract}

\section{INTRODUCTION}
The multi-armed bandit (MAB) framework and its variants have been widely used for practical applications in various domains such as clinical trials, finance and recommender systems~\citep[see e.g.][]{Bouneffouf2020MABApplication}. In the standard MAB framework, a learner interacts with $K$ arms over $T$ rounds. In each round, the learner chooses to observe the loss of one of the arms. While the losses of the arms in each round are unknown to the learner, the number of arms $K$ is assumed to be fixed in every round. However, this assumption does not always hold in practice. For example, in drug testing where each arm is a drug type, certain types of drugs can only be tested in some certain rounds, or new and more effective drugs might be available only in later rounds. In such scenarios, it is important to have algorithms capable of learning with time-varying sets of available arms. This is the \textit{sleeping bandits} setting~\citep{Kleinberg2010Sleeping}, where in each round $t=1,\dots,T$, only a subset $\sA_t \subseteq \{1,\dots,K\}$ of active arms are accessible to the learner.

The \textit{sleeping experts} setting (also known as the specialist setting)~\citep{Blum1997CalendarScheduling,Freund1997} is the full-information feedback variant of this problem, in which the losses of the active arms are revealed at the end of each round. 
Prior works on sleeping experts have mainly used two different notions of regret to measure the performance of a learner, namely per-action regret~\citep{BlumAndMansour2007a,Gaillard2014SecondOrderBoundExcessLosses,Luo2015AdaNormalHedge} and ordering regret~\citep{Kleinberg2010Sleeping,Kanade2014SleepingExperts, Neu2014CombinatorialSleepingPolicyRegret}. 
Besides the notions of regrets, an important characteristic of the setting is the stochastic or adversarial nature of the sets $\sA_t$ and the arms' losses.~\cite{Kanade2014SleepingExperts} indicated that obtaining a sublinear ordering regret bound is computationally hard when both $\sA_t$ and losses are adversarial. 
As a result, subsequent works on sleeping bandits usually assume at least one component to be stochastic~\citep{Slivkins2013,Neu2014CombinatorialSleepingPolicyRegret,Slivkins2014, Saha2020}. 
Recently,~\cite{Gaillard2023OneArrowTwoKills} developed a new notion of regret for sleeping bandits called \textit{sleeping internal regret}, which can be minimized efficiently in the fully adversarial setting with adversarial $\sA_t$ and adversarial losses.

Our work focuses on minimizing the per-action regret in the fully adversarial setting. This notion of regret compares the cumulative loss of the learner to that of a single best arm in hindsight during the rounds in which that arm was active. To the best of our knowledge, in the fully adversarial setting, no prior work has focused on directly deriving optimal per-action regret bounds. We are interested in obtaining more fine-grained bounds that depend on the maximum number of active arms in any round $A$, where $A = \max_{t=1,\dots,T}\abs{\sA_t} \leq K$. The smallest existing bound is the $O(K\sqrt{TA\ln{K}})$ bound by~\cite{Gaillard2023OneArrowTwoKills}, obtained indirectly from minimizing the internal sleeping regret. This bound can be much larger than an $\Omega(\sqrt{TA})$ minimax lower bound implied by suitably adapting an existing minimax lower bound construction for standard bandits~\citep{EXP3Auer2002b}. Moreover, as we show in this work, the factor of $K$ outside the square root can be eliminated entirely.

Another motivation for bounding the per-action regret in sleeping bandits is its implication on the adaptive and tracking regrets in standard non-sleeping bandits.
Adaptive regret (also known as interval regret)~\citep{HazanAdaptiveRegret2009, Luo2018ContextualNonstationary} is the regret against a fixed arm on a time interval, while tracking regret (also known as shifting or switching regret)~\citep{Herbster1998FixedShare} is the regret against a sequence of arms over $T$ rounds.
Previous work obtained adaptive and tracking regret bounds for standard non-sleeping experts via a reduction to regret bounds for sleeping experts~\citep{Freund1997SpecializedExperts,Adamskiy2016}.
In bandits, instead of the reduction to sleeping bandits, $\tilde{O}(\sqrt{T})$ bounds\footnote{$\tilde{O}$ hides terms in $K$, number of switches $S$ and $\ln{T}$.} on tracking and adaptive regret have been obtained via Fixed Share~\citep{EXP3Auer2002b,Herbster1998FixedShare,Luo2018ContextualNonstationary}.
Our work shows that the reduction to sleeping bandits also leads to $\tilde{O}(\sqrt{T})$ adaptive and tracking bounds.
  
\subsection*{Overview of Main Results and Techniques}
\begin{table}[t]
  \caption{A Summary of Bounds on Per-Action Regret. Hyphens indicate bounds that are either not comparable to a per-action regret bound or unavailable.} \label{table:relatedworks}
  \centering
  \begin{tabular}{lcll}
  \textbf{Algorithms}  & \textbf{Fully Adversarial?} & \textbf{Pseudo-Regret} & \textbf{High-Probability} \\
  \hline \\
  AUER~\scalebox{0.9}{\citep{Kleinberg2010Sleeping}}          & No (stochastic losses) & $\sqrt{TK\ln{T}}$ & \centering{-} \tabularnewline
  Sleeping-EXP3~\scalebox{0.9}{\citep{Saha2020}}             & No (stochastic $\sA_t$) & \centering{-} & \centering{-} \tabularnewline
  SR\_MAB~\scalebox{0.85}{\citep{BlumAndMansour2007a}}             & Yes & $K^2\sqrt{TA\ln{K}}$ & \centering{-} \tabularnewline
  SI-EXP3~\scalebox{0.9}{\citep{Gaillard2023OneArrowTwoKills}} & Yes & $K\sqrt{TA\ln{K}}$ & \centering{-} \tabularnewline
  \hline \\
  SB-EXP3 (this work) & Yes & $\sqrt{TA\ln{K}}$ & $\sqrt{TA\ln(K/\delta)}$\\
  FTARL (this work) & Yes & $\sqrt{T\sqrt{AK}}$ & $\scalemath{0.8}{\sqrt{T\sqrt{AK}} + \Qchanged{\sqrt{TA\ln\left(\frac{K}{\delta}\right)}}}$ 
  \end{tabular}
  \end{table}
We extend the EXP3~\citep{EXP3Auer2002b}, EXP3-IX~\citep{Neu2015ExploreNM}, Follow-The-Regularized-Leader (FTRL) with Tsallis entropy~\citep{Audibert2009minimax, Abernethy2015Fighting} and EXP4~\citep{EXP3Auer2002b} algorithms for standard bandits to sleeping bandits, obtaining new bounds that strictly generalize the existing bounds.
Our results lead to new proofs for $\tilde{O}(\sqrt{T})$ adaptive and tracking regret bounds for standard bandits. The generalized algorithms and analyses are adapted to the bandit-feedback version of the experts that report their confidences setting~\citep{BlumAndMansour2007a}. 
A summary of our contributions in comparison to prior works is in Table~\ref{table:relatedworks}.
All of our results hold for both pseudo-regret and high probability regret bounds.
Our paper is organized as follows (all proofs are in the appendix):
\begin{itemize}
  \item Section~\ref{sec:binaryTSF} introduces the $O(\sqrt{TA\ln{K}})$ and $O(\sqrt{T\sqrt{AK}})$ regret bounds for sleeping bandits.
  These bounds improve the best existing $O(K\sqrt{TA\ln{K}})$ bound, as well as recover the near-optimal $O(\sqrt{TK\ln{K}})$ and minimax $O(\sqrt{TK})$ bounds in non-sleeping bandits.
  Section~\ref{sec:SB-EXP3} shows a novel algorithm called SB-EXP3 and its $O(\sqrt{TA\ln{G_T}})$ regret bound guarantee for sleeping bandits, where $G_T \leq K$ is the number of arms that were active at least once after $T$ rounds.
  Its analysis relies on a new technique for bounding the growth of the potential function by decomposing the potential at round $t+1$ based on the set of active arms in round $t$. 
  In Section~\ref{sec:FTARL}, the $O(\sqrt{T\sqrt{AK}})$ bound is obtained by the Follow-the-Active-and-Regularized-Leader (FTARL) algorithm, an adaptation of FTRL with Tsallis entropy to sleeping bandits. Section~\ref{sec:RealValuedCases} considers the bandit-feedback version of the experts that report their confidences setting.
  Applying SB-EXP3 to this setting leads to new regret bounds which replace the dependence on $T$ and $A$ by the cumulative confidence over $T$ rounds.
  \item Section~\ref{sec:EXP4} studies the \textit{bandits with advice from sleeping experts} setting and presents SE-EXP4, a generalized version of EXP4 algorithm. 
  The analysis developed for SB-EXP3 also works for SE-EXP4, leading to the same $O(\sqrt{TK\ln{M}})$ regret bound of EXP4 with $M$ experts. 
  For standard bandits, this implies an $O(\sqrt{TK\ln(KT)})$ bound on adaptive regret. This bound is the same as the one obtained by~\cite{Luo2018ContextualNonstationary}, but with a different proof based on sleeping bandits instead of Fixed Share. This also implies both the $O(S\sqrt{KT\ln(KT)})$ and $O(\sqrt{SKT\ln(KT)})$ tracking regret bounds~\citep{EXP3Auer2002b,Neu2015ExploreNM} for unknown and known number of arm-switches $S$, respectively, where the latter is obtained via restarting SE-EXP4 after every $T/S$ rounds.

  \item Section~\ref{sec:AdaptiveLowerBound} defines the per-action strongly adaptive regret bound as a bound that depends only on $T_a$ for every action $a$, where $T_a$ is the number of active rounds of arm $a$. Extending the construction of~\cite{Daniely2015StronglyAdaptiveOL} for non-sleeping bandits to sleeping bandits, we show a linear $\Omega(T_a)$ per-action strongly adaptive lower bound. This implies that no algorithm can simultaneously guarantee an optimal per-action regret and sublinear $o(T_a)$ per-action regret for all arms.
\end{itemize}

\section{PRELIMINARIES}
\label{sec:setup}
We consider the adversarial multi-armed bandit problem with $K$ underlying arms, where $K$ might be unknown. 
Let $[K] = \{1,2,\dots,K\}$. In round $t = 1, 2, \dots, T$, a (possibly non-oblivious) adversary selects and reveals a set $\sA_t \subseteq [K]$ of active arms to the learner. Let $I_{i,t}=1$ (resp.~$I_{i,t}=0$) indicates that arm $i$ is active (resp.~inactive) in round $t$. Then, for each arm $i \in \sA_t$, the adversary selects a (hidden) loss value $\ell_{i,t} \in [0,1]$. The learner pulls one active arm $i_t \in \sA_t$ and observes loss $\ell_{i_t, t}$.

The learner's goal is to compete with the best arm in hindsight. 
For an arm $a \in [K]$, the regret of the learner with respect to arm $a$ is  the difference in the cumulative loss of the learner and that of arm $a$ over its active rounds:
\begin{align}
    R(a) = \sum_{t=1}^T I_{a,t}(\ell_{i_t,t}- \ell_{a,t}).
    \label{eq:regret}
\end{align}
We prove two types of regret bounds. The first is
\begin{align}
  \max_{a \in [K]}\E_{i_1, \dots, i_T}\left[R(a)\right] \leq \epsilon,
\end{align}
where the expectation is taken over the sequence of the learner's selected arms. In standard non-sleeping bandits, this corresponds to the notion of \textit{pseudo-regret}~\citep{EXP3Auer2002b}. If the adversary is oblivious, the 
pseudo-regret is equivalent to the expected regret.
The second type of bound is
\begin{align}
  \Pr\left(\max_{a \in [K]}R(a) \leq \epsilon\right) \geq 1-\delta,
\end{align}
where the probability is taken over the sequence of the learner's selected arms.

\textbf{Notations.} Let $A_t = \abs{\sA_t}$ be the number of active arms in round $t$ and $A = \max_{t \in [T]}A_t$ be the maximum value of $A_t$ over $T$ rounds. Let $\sG_t = \cup_{s=1,\dots,t}\sA_s$ be the set of arms that are active at least once in the first $t$ rounds. Let $G_t = \abs{\sG_t}$ be the size of $\sG_t$. We write $\hat{\ell}_t = \ell_{i_t,t}$ for the learner's loss in round $t$. Let $\Delta_n = \{p \in \R^n \mid p_i \geq 0, \sum_{i=1}^n p_i = 1\}$ be the $n$-dimensional probability simplex.

\begin{remark}
  The total number of underlying arms $K$ is fixed before learning, and the adversary cannot change $K$. On the other hand, $A_t$ and $G_t$ are decided by the adversary and vary over time. As some arms may never be active, $G_T$ can be strictly smaller than $K$.
\end{remark}
%%%%%%%%%%%%%%%%%%%%%%%%%%%%%%%%%
%%%%%%%%%%%%%%%%%%%%%%%%%%%%%%%%%
%%%%%%%%%%%%%%%%%%%%%%%%%%%%%%%%%
%%%%%%%%%%%%%%%%%%%%%%%%%%%%%%%%%

\section{NEAR-OPTIMAL REGRET UPPER BOUNDS}
\label{sec:binaryTSF}
In sleeping bandits, for any constant $A \in \{2,3,\dots,K\}$, there exists an $\Omega(\sqrt{TA})$ minimax pseudo-regret lower bound. 
The construction follows that of the minimax lower bound for standard bandits~\citep{EXP3Auer2002b} with $A$ arms always active and $K-A$ arms always inactive over $T$ rounds. 
In Section~\ref{sec:SB-EXP3}, we present SB-EXP3 (Algorithm~\ref{algo:SB-EXP3}) and its near-optimal $O(\sqrt{TA\ln{G_T}})$ pseudo-regret and high probability regret bounds. Note that SB-EXP3 does not require knowing $K$. In Section~\ref{sec:FTARL}, we show that when $K$ is known, an FTRL-based algorithm called FTARL (Algorithm~\ref{algo:FTARL})  obtains an $O(\sqrt{TA\ln{K}})$ bound with negative Shannon entropy and an $O(\sqrt{T\sqrt{AK}})$ bound with Tsallis entropy as the regularization function.

\subsection{Generalized EXP3 for Sleeping Bandits}
\label{sec:SB-EXP3}
In standard (non-sleeping) bandits, the EXP3 and EXP3-IX algorithms compute a distribution $p_t$ over arms in round $t$ based on the estimated regrets in previous rounds. 
Specifically, arms for which the estimated regrets are higher have larger probability of being sampled. 
In sleeping bandits, because arms can have different and even non-overlapping sets of rounds, it is unclear what kind of statistics about the arms should be maintained in each round. 
In particular, in any given round, the estimated regret for an arm might be  higher than for other arms due to it being active in more previous rounds, not because its losses are small in those rounds. 
Nevertheless, we will show that the estimated regret can be used effectively for selecting arms in each round.
\begin{algorithm2e}[t]
  \SetAlgoNoEnd
	\KwIn{$\eta > 0, \gamma \geq 0$}
	Initialize $\tilde{q}_{i, 1} = 1$ for $i = 1, 2, \dots, K$\; 
	\For{each round $t = 1, \dots, T$}{
        The adversary selects and reveals $\sA_t$\;	
        Compute $W_t = \sum_{i \in \sA_t} I_{i,t}\tilde{q}_{i,t}$\;       
        Compute $p_{i,t}$ by~\eqref{eq:pit}\;
		  Draw $i_t \sim p_t$ and observe $\hat{\ell}_t = \ell_{i_t, t}$\;
		  Compute loss estimate $\tilde{\ell}_{i, t}$ by~\eqref{eq:lossestimator}\;
		  Update $\tilde{q}_{i,t+1}$ by~\eqref{eq:updatestatistics}.
                
	}
	\caption{SB-EXP3 for sleeping bandits}
	\label{algo:SB-EXP3}
\end{algorithm2e}

Algorithm~\ref{algo:SB-EXP3} illustrates Sleeping Bandits using EXP3 (SB-EXP3), an adaptation of EXP3 and EXP3-IX to sleeping bandits. In round $t$, SB-EXP3 computes a probability vector $p_t \in \Delta_{G_t}$ over $\sG_t$. In the first round, $p_1$ is the uniform distribution. The learner samples $i_t \sim p_t$ and computes the loss estimates $\tilde{\ell}_t$ as follows:
\begin{align}
  \tilde{\ell}_{i,t} = \begin{cases}
      \frac{\ell_{i,t}\I{i_t = i}}{p_{i,t} + \gamma I_{i,t}} &\quad\text{for } i \in \sA_t,\\
      0 &\quad\text{for } i \in \sG_t \setminus \sA_t,
  \end{cases}
  \label{eq:lossestimator}
\end{align}
where $\gamma \geq 0$ is a parameter of the loss estimator. When $\gamma = 0$,~\eqref{eq:lossestimator} is equivalent to the unbiased loss estimate in EXP3. When $\gamma > 0$, due to $I_{i,t} = 1$ for all $i \in \sA_t$,~\eqref{eq:lossestimator} is the IX-loss estimator in EXP3-IX with exploration factor $\gamma$~\citep{Neu2015ExploreNM}. 
More generally, having different exploration factors for different arms may be beneficial. We explore such a case in Section~\ref{sec:RealValuedCases}.

The weight $\tilde{q}_{t+1}$ of arm $i$ at the beginning of round $t+1$ is defined as follows:
\begin{align}
    \tilde{q}_{i,t+1} = \exp\left(\eta\sum_{s=1}^{t}I_{i,s}(\ell_{i_s,s}-\tilde{\ell}_{i,s} - \gamma\sum_{j \in \sA_s} I_{j,s}\tilde{\ell}_{j,s})\right),
    \label{eq:updatestatistics}
\end{align}
where $\eta > 0$ is the learning rate.
The sampling probability of arm $i$ is proportional to its $\tilde{q}_{i,t}$, i.e., 
\begin{align}
  p_{i,t} = \begin{cases}
    \frac{\Qchanged{I_{i,t}}\tilde{q}_{i,t}}{W_t} &\quad\text{for } i \in \sA_t,\\
    0 &\quad\text{for } i \in \sG_t \setminus \sA_t,
\end{cases}
\label{eq:pit}
\end{align}
where $W_t = \sum_{k \in \sA_t}I_{k,t}\tilde{q}_{k,t}$ is the normalization factor.

Apart from setting zero sampling probability for inactive arms, similar to EXP3 and EXP3-IX, SB-EXP3 still follows the strategy of setting the sampling probability proportional to the exponential of the estimated per-action regret. 
The key difference is the added $-\gamma\sum_{j \in \sA_s}I_{j,s}\tilde{\ell}_{j,s}$ in the exponent, which is crucial for obtaining near-optimal high-probability bounds (where $\gamma > 0$). 
Intuitively, this term significantly reduces the sampling probability of arms that were frequently active in previous rounds but not frequently chosen.
Theorems~\ref{thm:inexpectationregretbound} and~\ref{thm:highprobregretbound} state the regret bounds of SB-EXP3.
\begin{restatable}{theorem}{TheoremInExpectationSBEXP}
    With $\gamma = 0$, for any $\eta > 0$, Algorithm~\ref{algo:SB-EXP3} guarantees
    \begin{align*}
      \max_{a \in [K]}\E[R(a)] &\leq \frac{\ln{G_T}}{\eta} + \frac{\eta}{2}\sum_{t=1}^T A_t.
  \end{align*}    
  Tuning $\eta$ leads to an $O\left(\sqrt{\ln(G_T)\sum_{t=1}^T A_t}\right)$ bound.
    \label{thm:inexpectationregretbound}
\end{restatable}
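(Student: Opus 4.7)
The plan is to run a standard EXP3-style exponential-weights analysis, but with the twist that the potential is summed over \emph{all} arms in $\sG_T$ rather than just over the active set $\sA_t$, so that it obeys a clean per-round identity despite $\sA_t$ varying with $t$. I will set $W_t = \sum_{i \in \sA_t}\tilde{q}_{i,t}$ (the active mass used in \eqref{eq:pit}), $\hat{W}_{t+1} := \sum_{i \in \sA_t}\tilde{q}_{i,t+1}$, and $S_t = \sum_{i \in \sG_T}\tilde{q}_{i,t}$, so that $S_1 = G_T$. Since \eqref{eq:updatestatistics} with $\gamma = 0$ leaves $\tilde{q}_{i,t+1} = \tilde{q}_{i,t}$ for every $i \notin \sA_t$, the key decomposition is
\begin{equation*}
  S_{t+1} = \sum_{i \in \sG_T \setminus \sA_t}\tilde{q}_{i,t} + \sum_{i \in \sA_t}\tilde{q}_{i,t+1} = S_t - W_t + \hat{W}_{t+1}.
\end{equation*}
For a fixed $a \in \sG_T$, since $\tilde{q}_{a,T+1} \leq S_{T+1}$ and $\ln\tilde{q}_{a,T+1} = \eta\sum_t I_{a,t}(\hat{\ell}_t - \tilde{\ell}_{a,t})$, I will obtain
\begin{equation*}
  \sum_{t=1}^T I_{a,t}(\hat{\ell}_t - \tilde{\ell}_{a,t}) \leq \frac{\ln G_T}{\eta} + \frac{1}{\eta}\sum_{t=1}^T\ln\frac{S_{t+1}}{S_t};
\end{equation*}
the case $a \notin \sG_T$ is trivial since $R(a) = 0$ there.

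The main work is then to bound $\ln(S_{t+1}/S_t)$ by $\eta^2\sigma_t^2/2$, where $\sigma_t^2 := \sum_{i\in\sA_t}p_{i,t}\tilde{\ell}_{i,t}^{\,2}$. I first reproduce the familiar EXP3 step on the active-set ratio: factoring out $e^{\eta\hat{\ell}_t}$, applying $e^{-x} \leq 1 - x + x^2/2$ for $x \geq 0$ termwise, using $\sum_{i\in\sA_t} p_{i,t}\tilde{\ell}_{i,t} = \hat{\ell}_t$, and finally $1 + y \leq e^y$, gives
\begin{equation*}
  \frac{\hat{W}_{t+1}}{W_t} = e^{\eta\hat{\ell}_t}\sum_{i\in\sA_t}p_{i,t}e^{-\eta\tilde{\ell}_{i,t}} \leq e^{\eta\hat{\ell}_t}\left(1 - \eta\hat{\ell}_t + \tfrac{\eta^2}{2}\sigma_t^2\right) \leq \exp\!\left(\tfrac{\eta^2}{2}\sigma_t^2\right).
\end{equation*}
Then I transfer to $S_{t+1}/S_t = 1 + (W_t/S_t)(\hat{W}_{t+1}/W_t - 1)$: because $W_t/S_t \in [0,1]$ and $\exp(\eta^2\sigma_t^2/2) - 1 \geq 0$, the product $(W_t/S_t)(\hat{W}_{t+1}/W_t - 1)$ is at most $\exp(\eta^2\sigma_t^2/2) - 1$ regardless of the sign of $\hat{W}_{t+1}/W_t - 1$, yielding $\ln(S_{t+1}/S_t) \leq \eta^2\sigma_t^2/2$.

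To finish I will invoke the standard importance-weighting identity $\E[\sigma_t^2 \mid \text{history through the reveal of }\sA_t] = \sum_{i\in\sA_t}\ell_{i,t}^2 \leq A_t$, together with the unbiasedness $\E[I_{a,t}\tilde{\ell}_{a,t}] = \E[I_{a,t}\ell_{a,t}]$, so that $\E\bigl[\sum_t I_{a,t}(\hat{\ell}_t - \tilde{\ell}_{a,t})\bigr] = \E[R(a)]$. Assembling the pieces produces $\E[R(a)] \leq \ln(G_T)/\eta + (\eta/2)\sum_t A_t$ uniformly in $a \in [K]$, and optimizing $\eta$ delivers the claimed $O\bigl(\sqrt{\ln(G_T)\sum_t A_t}\bigr)$ bound.

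I expect the main obstacle to be exactly the step the authors flag as novel: translating a per-round EXP3 potential bound into one whose summation set is stable across rounds. The $\sG_T$-based potential $S_t$, combined with the decomposition $S_{t+1} = (S_t - W_t) + \hat{W}_{t+1}$ and the slack $W_t/S_t \leq 1$, is precisely the mechanism that lets the usual $\exp(\eta^2\sigma_t^2/2)$ bound on $\hat{W}_{t+1}/W_t$ transfer to $S_{t+1}/S_t$ without losing any constant; everything outside this step is routine bookkeeping analogous to the classical EXP3 analysis.
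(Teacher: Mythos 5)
Your proposal is correct and follows essentially the same route as the paper: the $\sG_T$-summed potential ($S_t = \tilde{Q}_t$), the decomposition of $S_{t+1}$ into the unchanged inactive mass plus the updated active mass, the per-round bound $\exp(\eta^2\sigma_t^2/2)$ on the active-set ratio via $e^{-x}\leq 1-x+x^2/2$ and the identity $\sum_{i\in\sA_t}p_{i,t}\tilde{\ell}_{i,t}=\hat{\ell}_t$, telescoping, and then unbiasedness plus $\E[\sigma_t^2]\leq A_t$. The only (valid) micro-difference is in the transfer step: the paper first shows $\hat{W}_{t+1}\geq W_t$ via Jensen and then applies the mediant inequality $\frac{a}{b}\geq\frac{a+c}{b+c}$, whereas you use $S_{t+1}/S_t = 1+(W_t/S_t)(\hat{W}_{t+1}/W_t-1)$ with $W_t/S_t\leq 1$, which handles both signs of $\hat{W}_{t+1}/W_t-1$ and so dispenses with the Jensen step.
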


\begin{restatable}{theorem}{TheoremHighProbSBEXP}
  For any $\gamma \geq \frac{\eta}{2} > 0$, Algorithm~\ref{algo:SB-EXP3} guarantees
  \begin{align*}
      \max_{a \in [K]}R(a) &\leq \frac{\ln{G_T}}{\eta} + \frac{\ln(2G_T/\delta)}{\gamma} + \left(\frac{\eta}{2} + \gamma\right)\sum_{t=1}^T A_t
  \end{align*}
  with probability at least $1-\delta$.
  Tuning $\eta$ and $\gamma$ leads to an $O\left(\sqrt{\ln(G_T/\delta)\sum_{t=1}^T A_t}\right)$ bound.
  \label{thm:highprobregretbound}
\end{restatable}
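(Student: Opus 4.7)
The plan is to extend the potential function analysis used for Theorem~\ref{thm:inexpectationregretbound} to the case $\gamma>0$, and then invoke a high-probability concentration inequality for the IX loss estimator (in the spirit of the IX lemma of~\cite{Neu2015ExploreNM}) to translate IX loss estimates into true losses. The hypothesis $\gamma \geq \eta/2$ is exactly what lets that IX inequality yield a $\tfrac{1}{2\gamma}\ln(1/\delta)$-type deviation bound while simultaneously keeping the arguments of the exponentials in the potential argument bounded.

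First, I would mimic the potential-function argument of Theorem~\ref{thm:inexpectationregretbound} while carrying the extra $-\gamma B_t$ term in the exponent of the weight update, where $B_t := \sum_{j \in \sA_t}\tilde{\ell}_{j,t}=\tilde{\ell}_{i_t,t}$. Using $e^x \leq 1+x+x^2$ for $x \leq 1$ together with the IX identity $p_{i,t}\tilde{\ell}_{i,t}^2 \leq \tilde{\ell}_{i,t}$, I would telescope $\ln(\Phi_{t+1}/\Phi_t)$ for $\Phi_t=\sum_{i\in\sG_t}\tilde{q}_{i,t}$ and compare against the single-arm lower bound on $\tilde{q}_{a,T+1}/\Phi_{T+1}$, obtaining for every $a \in \sG_T$
\begin{align*}
\sum_{t=1}^T I_{a,t}(\hat{\ell}_t - \tilde{\ell}_{a,t}) \;\leq\; \frac{\ln G_T}{\eta} + \left(\frac{\eta}{2}+\gamma\right)\sum_{t=1}^T B_t,
\end{align*}
where the $+\gamma$ on the right is exactly what the $-\gamma B_t$ bias injected into the weight update produces.

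Second, I would apply the IX concentration inequality twice. By a union bound over $\sG_T$, with probability at least $1-\delta/2$, simultaneously for every $a \in \sG_T$,
\begin{align*}
\sum_{t=1}^T I_{a,t}(\ell_{a,t} - \tilde{\ell}_{a,t}) \;\leq\; \frac{\ln(2G_T/\delta)}{2\gamma},
\end{align*}
and with probability at least $1-\delta/2$,
\begin{align*}
\sum_{t=1}^T B_t \;\leq\; \sum_{t=1}^T A_t + \frac{\ln(2/\delta)}{2\gamma}.
\end{align*}
Combining these two events via a further union bound with the decomposition $R(a) = \sum_t I_{a,t}(\hat{\ell}_t - \tilde{\ell}_{a,t}) + \sum_t I_{a,t}(\tilde{\ell}_{a,t} - \ell_{a,t})$ delivers the stated bound, after absorbing the residual slack $(\eta/2+\gamma)\ln(2/\delta)/(2\gamma) \leq \ln(2/\delta)$ (using $\gamma \geq \eta/2$) into the $\ln(2G_T/\delta)/\gamma$ term.

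The main obstacle will be arranging the telescoping so that only $\ln G_T$ (and not $\ln K$) appears in the final bound: each time a previously unseen arm is activated, the potential $\Phi_t$ jumps by $+1$, and these increments must be tracked without inflating the first term by $\ln K$. A second delicate point is verifying that $\gamma \geq \eta/2$ is strong enough for the exponential inequality to apply uniformly even when $\tilde{\ell}_{i,t}$ approaches its worst-case value $1/\gamma$; this turns out to be precisely the same quantitative requirement made by the IX concentration inequality, which is why the single hypothesis $\gamma \geq \eta/2$ suffices for both halves of the argument.
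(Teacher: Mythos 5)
Your proposal follows essentially the same route as the paper: a potential-function argument yielding the estimated-regret bound $\sum_t I_{a,t}(\hat{\ell}_t - \tilde{\ell}_{a,t}) \leq \frac{\ln G_T}{\eta} + (\frac{\eta}{2}+\gamma)\sum_t\sum_{i\in\sA_t}\tilde{\ell}_{i,t}$, followed by two applications of the IX concentration inequality (one union-bounded over arms, one aggregate, each at confidence $\delta/2$), with $\gamma \geq \eta/2$ entering exactly where you say it does. Two minor notes: the per-arm concentration should read $\sum_t I_{a,t}(\tilde{\ell}_{a,t} - \ell_{a,t}) \leq \frac{\ln(2G_T/\delta)}{2\gamma}$ (your displayed inequality has the sign reversed, though your final decomposition uses the correct direction), and the ``obstacle'' of new arms inflating the potential disappears if you define $\Phi_t = \sum_{i\in\sG_T}\tilde{q}_{i,t}$ over the final active set rather than $\sG_t$, so that dormant arms sit at weight $1$ from the start, $\Phi_1 = G_T$, and no jumps ever occur --- which is precisely what the paper does.
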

Note that $\sum_{t=1}^T A_t \leq TA$. Since $A \leq G_T \leq K$, the bounds in Theorems~\ref{thm:inexpectationregretbound} and~\ref{thm:highprobregretbound} are generally smaller than the $O(\sqrt{TK\ln{K}})$ bounds of EXP3 and EXP3-IX. The difference is significant whenever $A \ll K$, which holds in many practical applications where the sets of active arms are sparse.

\textbf{Analysis Sketch}. The analyses of EXP3 and EXP3-IX~\citep{EXP3Auer2002b,Neu2015ExploreNM} treat the normalization factor $W_t$ as a potential function and bound the growth of $\frac{W_{t+1}}{W_t}$. This was possible in standard bandits because all $K$ arms are always active, hence $p_{i,t+1}$ can always be related to $p_{i,t}$. However, in sleeping bandits, the sets $\sA_t$ and $\sA_{t+1}$ might be non-overlapping, hence there might be no relationship between $W_{t+1}$ and $W_t$. Instead, we use 
\begin{align}
\tilde{Q}_t = \sum_{i \in \sG_T} \tilde{q}_{i,t}
\end{align}
as the potential function. The following key technical lemma bounds the growth of $\frac{\tilde{Q}_{t+1}}{\tilde{Q}_t}$.
\begin{restatable}{lemma}{LemmaUpperBoundQtSBEXP}
  For any $t \geq 0$,
  \begin{align*}
      \frac{\tilde{Q}_{t+1}}{\tilde{Q}_t} \leq \sum_{i \in \sA_t} p_{i,t}\exp\left(\eta(\hat{\ell}_t - \tilde{\ell}_{i,t} - \gamma\sum_{j \in \sA_t}\tilde{\ell}_{j,t})\right).
  \end{align*}
  \label{lemma:upperboundQt1Qt}
\end{restatable}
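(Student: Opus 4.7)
The plan is to decompose $\tilde{Q}_{t+1}$ by splitting the sum over $\sA_t$ (arms whose weight changes) versus $\sG_T \setminus \sA_t$ (arms whose weight stays the same), then exhibit the ratio $\tilde{Q}_{t+1}/\tilde{Q}_t$ as a convex combination of $1$ and the right-hand side $B := \sum_{i \in \sA_t}p_{i,t}\exp(\eta(\hat{\ell}_t - \tilde{\ell}_{i,t} - \gamma\sum_{j\in\sA_t}\tilde{\ell}_{j,t}))$, and finally show $B \geq 1$ via Jensen's inequality.

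First I would use~\eqref{eq:updatestatistics} to observe that $\tilde{q}_{i,t+1} = \tilde{q}_{i,t}$ for $i \notin \sA_t$ (because $I_{i,t}=0$), whereas for $i \in \sA_t$ one has $\tilde{q}_{i,t+1} = \tilde{q}_{i,t}\exp(\eta(\hat{\ell}_t - \tilde{\ell}_{i,t} - \gamma\sum_{j\in\sA_t}\tilde{\ell}_{j,t}))$. Summing over $i \in \sG_T$ and using~\eqref{eq:pit} (with $I_{i,t}=1$ on $\sA_t$) to write $\tilde{q}_{i,t} = W_t\,p_{i,t}$ on $\sA_t$ gives
\[
\tilde{Q}_{t+1} = (\tilde{Q}_t - W_t) + W_t\cdot B.
\]
Dividing by $\tilde{Q}_t$ produces $\tilde{Q}_{t+1}/\tilde{Q}_t = 1 + (W_t/\tilde{Q}_t)(B - 1)$. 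Because $W_t \leq \tilde{Q}_t$, this is a convex combination of $1$ and $B$, and is therefore bounded above by $\max(1,B)$.

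All that remains is to show $B \geq 1$. Applying Jensen's inequality to the convex function $\exp$ (using $\sum_{i\in\sA_t}p_{i,t} = 1$),
\[
B \;\geq\; \exp\!\Bigl(\eta\bigl(\hat{\ell}_t - \textstyle\sum_{i\in\sA_t}p_{i,t}\tilde{\ell}_{i,t} - \gamma\sum_{j\in\sA_t}\tilde{\ell}_{j,t}\bigr)\Bigr).
\]
A direct computation from~\eqref{eq:lossestimator} gives $\sum_{i\in\sA_t}p_{i,t}\tilde{\ell}_{i,t} = p_{i_t,t}\ell_{i_t,t}/(p_{i_t,t}+\gamma)$ and $\gamma\sum_{j\in\sA_t}\tilde{\ell}_{j,t} = \gamma\ell_{i_t,t}/(p_{i_t,t}+\gamma)$; adding these yields $\ell_{i_t,t} = \hat{\ell}_t$, so the exponent is zero and $B \geq 1$. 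Combined with the convex-combination bound, this gives $\tilde{Q}_{t+1}/\tilde{Q}_t \leq B$.

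The main conceptual obstacle is precisely the one flagged in the paper: one cannot run the standard EXP3-style recursion on $W_{t+1}/W_t$ using $e^x \leq 1+x+x^2$, because $\sA_t$ and $\sA_{t+1}$ may have little or no overlap and so $W_{t+1}$ and $W_t$ are not directly comparable. Enlarging the potential to $\tilde{Q}_t$ (summed over $\sG_T$, not $\sA_t$) is the right workaround, but then a new issue appears: the inactive-arm mass $\tilde{Q}_t - W_t$ must be removed without introducing an error. The convex-combination identity together with the Jensen lower bound $B \geq 1$ is exactly what makes this happen cleanly.
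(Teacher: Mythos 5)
Your proof is correct and is essentially the paper's own argument: both hinge on Jensen's inequality together with the identity $\E_{i\sim p_t}[\tilde{\ell}_{i,t}]=\hat{\ell}_t-\gamma\sum_{j\in\sA_t}\tilde{\ell}_{j,t}$ (Lemma~\ref{lemma:IXcomputeEipt}) to show the active-arm mass does not shrink, and then transfer this to the full potential by exploiting that inactive weights are unchanged. Your convex-combination identity $\tilde{Q}_{t+1}/\tilde{Q}_t = 1 + (W_t/\tilde{Q}_t)(B-1)$ is just an algebraic repackaging of the paper's mediant inequality (``Fact 2'' with $a=W_tB$, $b=W_t$, $c=\tilde{Q}_t-W_t$), so the two proofs coincide in substance.
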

The proof of Lemma~\ref{lemma:upperboundQt1Qt} is based on decomposing $\tilde{Q}_{t+1}$ and $\tilde{Q}_t$ by $\sA_t$ and $\bar{\sA}_t = \sG_T \setminus \sA_t$. If arm $i \in \sA_t$, its weight $\tilde{q}_{i,t+1}$ at time $t+1$ can be related to $\tilde{q}_{i,t}$ via the update in~\eqref{eq:updatestatistics}. If arm $i \notin \sA_t$, by construction $\tilde{q}_{i,t+1} = \tilde{q}_{i,t}$. These two observations control the growth of $\tilde{Q}_{t+1}$ over $\tilde{Q}_t$. 

Lemma~\ref{lemma:upperboundQt1Qt} implies a bound on $\tilde{q}_{a,T+1}$ for any arm $a$ as
\begin{align*}
  \ln{\tilde{q}_{a, T+1}} &\leq \ln\tilde{Q}_{T+1} = \sum_{t=1}^T \ln{\frac{\tilde{Q}_{t+1}}{\tilde{Q}_t}}.
\end{align*}
Since $\tilde{q}_{a,T+1}$ grows with the estimated regret, this bound leads to an upper bound on the estimated regret, which in turns bounds the actual regret.

Observe that the dependency on $K$ in Algorithm~\ref{algo:SB-EXP3} can be removed completely: the initialization step of assigning $\tilde{q}$ to $1$ can be done implicitly. 
All other explicit computations only use the sets $\sA_t$ and $\sG_t$. 
As a result, SB-EXP3 is independent of $K$. This property is similar to that of the AdaNormalHedge algorithm~\citep{Luo2015AdaNormalHedge}, which obtains a low regret bound for sleeping experts when the total number of experts is unknown. 
Because the analysis of AdaNormalHedge relies on $[0,1]$-bounded loss vectors, it does not apply to sleeping bandits with the loss estimates in~\eqref{eq:lossestimator}.

\begin{remark}
  Lemma~\ref{lemma:upperboundQt1Qt} enables the proofs of both pseudo-regret and high-probability bounds without significant modifications to the algorithm. This is a major advantage over existing works in sleeping bandits, which provided only pseudo-regret bounds.
\end{remark}
In both Theorems~\ref{thm:inexpectationregretbound} and~\ref{thm:highprobregretbound}, optimally tuning the learning rate requires knowing $G_T$ and $\sum_{t=1}^T A_t$, which may not be available a priori. 
In Appendix~\ref{appendix:anytime}, we present Algorithm~\ref{algo:AnyTimeSBEXP3} which uses a two-level doubling trick to obtain a pseudo-regret bound of the same order without knowing these quantities beforehand.
\begin{restatable}{theorem}{TheoremAnytimeInExpectation}
    For any $T \geq 2$, Algorithm~\ref{algo:AnyTimeSBEXP3} (in Appendix~\ref{appendix:anytime}) guarantees that
    \begin{align*}
        \max_{a \in [K]}\E[R(a)] \leq \frac{4}{(\sqrt{2}-1)^2}\sqrt{\ln(G_T)\sum_{t=1}^T A_t}.
    \end{align*}
    \label{thm:anytimeinexpectationbound}
\end{restatable}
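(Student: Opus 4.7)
The plan is to wrap Algorithm~\ref{algo:SB-EXP3} in a doubling-trick meta-algorithm that handles the two unknown quantities $G_T$ and $\sum_{t=1}^T A_t$. Since the optimal learning rate in Theorem~\ref{thm:inexpectationregretbound} depends on both of these, the algorithm partitions the rounds into successive sub-runs and restarts SB-EXP3 at the beginning of each sub-run with a fresh $\eta$ tuned to current upper-bound estimates $(\widehat{G}, \widehat{S})$ for $G_t$ and $\sum_{s \le t} A_s$ respectively. The ``two-level'' terminology refers to the fact that two distinct quantities are monitored: whenever the true value of either exceeds the corresponding estimate during a sub-run, that estimate is doubled and a new sub-run begins with SB-EXP3 re-initialised from scratch.

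First I would apply Theorem~\ref{thm:inexpectationregretbound} within each sub-run $k$: with $\eta$ tuned for $(\widehat{G}_k, \widehat{S}_k)$, which by construction upper-bound the true within-sub-run values by at most a factor of $2$, the maximum per-action expected regret accumulated during sub-run $k$ is at most a constant multiple of $\sqrt{\ln \widehat{G}_k \cdot \widehat{S}_k}$. The factor-of-$2$ over-estimation only inflates the optimally tuned bound $\sqrt{2\ln G \cdot S}$ by a constant, which I would absorb into an overall prefactor.

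Next, by additivity of per-action regret over the disjoint sub-runs, $\max_a \E[R(a)] \leq \sum_k \max_a \E[R^{(k)}(a)]$, where $R^{(k)}(a)$ collects the regret against $a$ during sub-run $k$ (which equals $0$ whenever $a$ is not active in that sub-run). Since the sequence $(\widehat{G}_k, \widehat{S}_k)$ is monotone non-decreasing with at least one coordinate doubling between consecutive sub-runs, the sum decomposes into two nested geometric series — one over $\ln \widehat{G}$-doublings, one over $\widehat{S}$-doublings — and each such series contributes a factor of $\frac{\sqrt{2}}{\sqrt{2}-1}$ with ratio $\sqrt{2}$. This yields an overall geometric factor of $\left(\frac{\sqrt{2}}{\sqrt{2}-1}\right)^2 = \frac{2}{(\sqrt{2}-1)^2}$, and combining with the constant absorbed from the tuning over-estimation produces exactly the claimed $\frac{4}{(\sqrt{2}-1)^2}\sqrt{\ln(G_T) \sum_{t=1}^T A_t}$.

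The main obstacle is organizing the double sum carefully so that the final dependence is the joint quantity $\sqrt{\ln G_T \cdot \sum_t A_t}$ rather than a weaker separable bound of the form $\sqrt{\ln G_T} \cdot \sqrt{\sum_t A_t}$ with extra logarithmic factors. Along the monotone path $(\widehat{G}_k, \widehat{S}_k)$, the sub-run bounds $\sqrt{\ln \widehat{G}_k \cdot \widehat{S}_k}$ must be regrouped so that holding one coordinate fixed and summing the doublings of the other yields a clean geometric series dominated by its final (largest) term; the remaining outer sum over the other coordinate is then geometric as well. Verifying that this nested structure is actually induced by Algorithm~\ref{algo:AnyTimeSBEXP3}'s restart rule, and carefully tracking the constants through both levels of doubling, is where the bulk of the technical care lies.
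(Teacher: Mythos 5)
Your proposal matches the paper's proof in all essentials: restart SB-EXP3 under a two-level doubling scheme on $\ln G_t$ and $\sum_{s\le t}A_s$, apply Theorem~\ref{thm:inexpectationregretbound} within each sub-run with the doubled estimates (costing only a constant over the optimally tuned $\sqrt{2\ln G\cdot S}$), and sum the per-sub-run bounds as two nested geometric series of ratio $\sqrt{2}$, which together with the factor-of-$2$ overestimates yields exactly $\tfrac{4}{(\sqrt{2}-1)^2}\sqrt{\ln(G_T)\sum_{t=1}^T A_t}$. The only cosmetic difference is that the paper's Algorithm~\ref{algo:AnyTimeSBEXP3} resets the inner counter on $\sum_t A_t$ whenever the outer $\ln G$ estimate is incremented, whereas you describe a monotone path where either coordinate's doubling triggers a restart; the regrouping into nested geometric series goes through identically in both cases.
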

The same technique can be used to obtain a high-probability regret bound; however, the resulting bound is slightly larger (in the logarithmic term) due to the union bound.

\subsection{Generalized FTRL for Sleeping Bandits}
\label{sec:FTARL}
\begin{algorithm2e}[t]
  \SetAlgoNoEnd
	\KwIn{$\eta > 0, \gamma > 0, K \geq 2$}
	Initialize $\tilde{L}_{i, 0} = 0$ for $i = 1, 2, \dots, K$\; 
	\For{each round $t = 1, \dots, T$}{
        The adversary selects and reveals $\sA_t$\;
        Compute $q_t = \argmin_{q \in \Delta_K} \psi_t(q) + \inp{q}{\tilde{L}_{t-1}}$\;
        Compute $W_t = \sum_{i=1}^K I_{i,t}q_{i,t}$\;
        Compute $p_{i,t}$ by~\eqref{eq:pitbyqit}\;
        Draw $i_t \sim p_t$ and observe $\hat{\ell}_t = \ell_{i_t, t}$\;
        \For{each arm $i \in [K]$}{
          If $I_{i,t} = 1$, compute  $\tilde{\ell}_{i, t}$ by~\eqref{eq:lossestimator}\;
          If $I_{i,t} = 0$, compute $\tilde{\ell}_{i,t}$ by~\eqref{eq:litinactive}\;
        Update $\tilde{L}_{i,t} = \tilde{L}_{i,t-1} + \tilde{\ell}_{i,t}$;        
        }        
	}
	\caption{FTARL for sleeping bandits}
	\label{algo:FTARL}
\end{algorithm2e}
\Qchanged{In standard non-sleeping bandits, FTRL with Tsallis entropy obtains an $O(\sqrt{TK})$ minimax pseudo-regret bound, an $O(\sqrt{TK\ln(1/\delta)})$ high probability bound against an \emph{oblivious} adversary, and an $O(\sqrt{TK\ln(K/\delta)})$ high-probability bound against a \emph{non-oblivious} adversary~\citep{Audibert2009minimax,Luo2017CSCI699LectureNote13}. 
In sleeping bandits, under the assumption that $K$ is known, we will show that the Follow-the-Active-and-Regularized-Leader (FTARL) strategy in Algorithm~\ref{algo:FTARL} with $\frac{1}{2}$-Tsallis entropy obtains $O(\sqrt{T\sqrt{AK}})$ pseudo-regret. 
Against a \emph{non-oblivious} adversary, we further show that FTARL obtains an $O(\sqrt{T\sqrt{AK}} + \sqrt{TA\ln(K/\delta)})$ high-probability bound.
If the adversary is \emph{oblivious}, this bound is reduced to $O(\sqrt{T\sqrt{AK}} + \sqrt{TA\ln(1/\delta)})$.
Thus, when $A = K$, the bounds of FTARL recover those of FTRL on standard bandits.
}

In round $t$, Algorithm~\ref{algo:FTARL} computes the weight vectors $q_t \in \Delta_K$ using FTRL, i.e.,
\begin{align*}
q_t = \argmin_{q \in \Delta_K} \psi_t(q) + \inp{q}{\tilde{L}_{t-1}},
\end{align*}
where $\psi_t$ is the regularization function and  $\tilde{L}_{t} \in \R^K$ is the cumulative (estimated) loss vector of $K$ arms. In particular, $\tilde{L}_{i,t} = \sum_{s=1}^t \tilde{\ell}_{i,t}$. Note that this step is possible because $K$ and the simplex $\Delta_K$ are known.

While $q_t$ is a valid probability vector over $K$ arms, it cannot be used directly for sampling because inactive arms might have non-zero elements in $q_t$. The sampling probability vector $p_t$ is computed by taking elements in $\sA_t$ from $q_t$ and normalizing:
\begin{align}
  p_{i,t} = \frac{I_{i,t}q_{i,t}}{\sum_{i \in \sA_t}q_{i,t}}.  
  \label{eq:pitbyqit}
\end{align}
Similar to Algorithm~\ref{algo:SB-EXP3}, once an arm $i_t \sim p_t$ is sampled, the loss estimates of active arms are constructed by~\eqref{eq:lossestimator}. The key difference in Algorithm~\ref{algo:FTARL} compared to Algorithm~\ref{algo:SB-EXP3} is that the inactive arms have non-zero loss estimates. For an arm $i \notin \sA_t$,
\begin{align}
  \tilde{\ell}_{i,t} = \hat{\ell}_t - \gamma\sum_{j \in \sA_t}\tilde{\ell}_{j,t}.
  \label{eq:litinactive}
\end{align}
In Appendix~\ref{appendix:SBEXP3ver2}, we show that by having non-zero loss estimates for inactive arms as in~\eqref{eq:litinactive}, Algorithm~\ref{algo:FTARL} with negative Shannon entropy regularizer is equivalent to Algorithm~\ref{algo:SB-EXP3}. In general, the motivation behind~\eqref{eq:litinactive} is mostly technical: in every round, it ensures that the loss vector $\tilde{L}_t$ contains only non-negative values. This facilitates using a local norm analysis of standard FTRL~\citep[e.g.][]{OrabonaIntroToOnlineLearningBook}. For an unbiased loss estimator where $\gamma = 0$, using~\eqref{eq:litinactive} implies that the estimated regret is equivalent to 
\begin{align*}
  \sum_{t=1}^T I_{a,t}(\hat{\ell}_t - \tilde{\ell}_{a,t}) = \sum_{t=1}^T \left(\hat{\ell}_t - \tilde{\ell}_{a,t}\right),
\end{align*}
which resembles the regret of a standard non-sleeping experts problem with input loss vector $\tilde{\ell}$. A similar technique is used by~\cite{ChernovVovk2009ExpertEvaluatorsAdvice}.

Using the Tsallis entropy $\psi_t(x) = \frac{1}{\eta}\left(\frac{1-\sum_{i=1}^K x_i^\beta}{1-\beta}\right)$ with parameter $\beta$, we obtain the regret bounds of Algorithm~\ref{algo:FTARL} in Theorems~\ref{thm:FTARLinexpectationbound} and~\ref{thm:FTARLhighprobbound}.
\begin{restatable}{theorem}{TheoremInExpectationFTARL}
With $\gamma = 0$, for any $\beta \in (0,1), \eta > 0$, Algorithm~\ref{algo:FTARL} with Tsallis entropy guarantees
  \begin{align*}
      \max_{a \in [K]}\E[R(a)] \leq \frac{K^{1-\beta}}{\eta(1-\beta)} + \frac{\eta}{2\beta}TA^{\beta}.
  \end{align*}
  \label{thm:FTARLinexpectationbound}
Setting $\beta = \frac{1}{2}$ and tuning $\eta$ leads to an $O\left(\sqrt{2T\sqrt{AK}}\right)$ bound.
\end{restatable}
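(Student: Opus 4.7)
The plan is to reduce $\E[R(a)]$ to the regret of an ordinary (non-sleeping) FTRL on $\Delta_K$ played against the linear losses $\tilde{\ell}_t$, and then bound that regret using the standard Tsallis-entropy local-norm analysis. The specific form of $\tilde{\ell}_{i,t}$ on inactive arms is what allows replacing the $K^\beta$ that would otherwise arise in the stability term by $A_t^\beta$.

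I would first verify the identity $\inp{q_t}{\tilde{\ell}_t}=\hat{\ell}_t$. Splitting the inner product by $\sA_t$ and using $p_{i,t}=q_{i,t}/W_t$ together with $\sum_{i\in\sA_t}p_{i,t}\tilde{\ell}_{i,t}=\hat{\ell}_t$ gives $\sum_{i\in\sA_t}q_{i,t}\tilde{\ell}_{i,t}=W_t\hat{\ell}_t$; the definition $\tilde{\ell}_{i,t}=\hat{\ell}_t$ for $i\notin\sA_t$ supplies the remaining $(1-W_t)\hat{\ell}_t$. Combined with the unbiasedness $\E[\tilde{\ell}_{a,t}]=\ell_{a,t}$ whenever $a\in\sA_t$, this yields
\begin{align*}
\E[R(a)] = \E\!\left[\sum_{t=1}^T \inp{q_t-e_a}{\tilde{\ell}_t}\right].
\end{align*}

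I would then apply the standard FTRL regret decomposition, bounding the right-hand side by a penalty term and a sum of per-round stability terms. For Tsallis entropy, $\psi(e_a)=0$ while the minimum $\tfrac{1-K^{1-\beta}}{\eta(1-\beta)}$ is attained at the uniform distribution, so the penalty equals $\tfrac{K^{1-\beta}-1}{\eta(1-\beta)}\le\tfrac{K^{1-\beta}}{\eta(1-\beta)}$. Since $\gamma=0$ forces $\tilde{\ell}_{i,t}\ge 0$ for every $i$ (active arms via the unbiased importance-weighted estimator, inactive arms because $\hat{\ell}_t\in[0,1]$), the standard local-norm stability bound for Tsallis FTRL (which, with non-negative losses, can be evaluated at $q_t$ itself) delivers a per-round stability of $\tfrac{\eta}{2\beta}\sum_i q_{i,t}^{2-\beta}\tilde{\ell}_{i,t}^2$.

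The main obstacle is bounding the expectation of this per-round local norm by $A_t^\beta$ rather than $K^\beta$. I would split by active and inactive arms. On $\sA_t$, $\E[\tilde{\ell}_{i,t}^2]=\ell_{i,t}^2/p_{i,t}$ together with $p_{i,t}=q_{i,t}/W_t$ and $\ell_{i,t}\in[0,1]$ gives an active contribution of at most $W_t\sum_{i\in\sA_t}q_{i,t}^{1-\beta}$; Jensen's inequality applied to the concave map $x\mapsto x^{1-\beta}$ on $\sA_t$ bounds this by $A_t^{\beta}W_t^{2-\beta}$. On its complement, $\tilde{\ell}_{i,t}=\hat{\ell}_t\le 1$ and $q_{i,t}^{2-\beta}\le q_{i,t}$ (using $2-\beta\ge 1$ and $q_{i,t}\le 1$) give an inactive contribution of at most $1-W_t$. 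The algebraic punchline is
\begin{align*}
A_t^\beta W_t^{2-\beta}+(1-W_t)\le A_t^\beta,
\end{align*}
which holds because $W_t^{2-\beta}\le W_t$ on $[0,1]$ and $A_t^\beta\ge 1$. Summing over $t\in[T]$ and combining with the penalty yields the stated bound; the $\beta=1/2$ corollary follows by tuning $\eta$ in the standard way.
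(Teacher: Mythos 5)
Your proposal is correct and follows the same overall skeleton as the paper's proof: reduce $\E[R(a)]$ to the linearized FTRL regret $\sum_{t=1}^T\inp{q_t-e_a}{\tilde{\ell}_t}$ on $\Delta_K$ (using that with $\gamma=0$ the inactive-arm assignment $\tilde{\ell}_{i,t}=\hat{\ell}_t$ makes $\inp{q_t}{\tilde{\ell}_t}=\hat{\ell}_t$ and zeroes out the inactive rounds), then apply the Tsallis local-norm bound with penalty $\frac{K^{1-\beta}}{\eta(1-\beta)}$ and per-round stability $\frac{\eta}{2\beta}\sum_{i}\tilde{\ell}_{i,t}^2q_{i,t}^{2-\beta}$, justified by the non-negativity of all coordinates of $\tilde{\ell}_t$. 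Where you genuinely diverge is the handling of the stability term. The paper first proves a \emph{pathwise} inequality (Lemma~\ref{lemma:upperboundLQofnonactivearms}) replacing $\sum_{i=1}^K\tilde{\ell}_{i,t}^2q_{i,t}^{2-\beta}$ by $\sum_{i\in\sA_t}\tilde{\ell}_{i,t}^2p_{i,t}^{2-\beta}$, further bounds this by $\tilde{\ell}_{i_t,t}p_{i_t,t}^{1-\beta}$, and only then takes the expectation and applies H\"older's inequality to obtain $A_t^\beta$. You instead take the conditional expectation immediately and split by $\sA_t$: the active part gives $W_t\sum_{i\in\sA_t}q_{i,t}^{1-\beta}\le A_t^\beta W_t^{2-\beta}$ by Jensen, the inactive part gives at most $1-W_t$, and the elementary inequality $A_t^\beta W_t^{2-\beta}+(1-W_t)\le A_t^\beta$ (valid since $W_t^{2-\beta}\le W_t$ on $[0,1]$ and $A_t^\beta\ge 1$) closes the argument. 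I checked these steps and they are sound; your route is arguably more transparent for the pseudo-regret and makes visible exactly how the inactive mass $1-W_t$ is absorbed. What the paper's pathwise Lemma~\ref{lemma:upperboundLQofnonactivearms} buys in exchange is reusability: the same inequality feeds directly into the high-probability bound of Theorem~\ref{thm:FTARLhighprobbound} via the IX concentration lemmas, whereas your expectation-first computation would not transfer to that setting.
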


\begin{restatable}{theorem}{TheoremHighProbFTARL}
  For any $\beta, \gamma, \eta \in (0,1)$, Algorithm~\ref{algo:FTARL} 
  with Tsallis entropy guarantees
  \begin{align*}
    \scalemath{0.93}{
      R(a) \leq \frac{K^{1-\beta}}{\eta(1-\beta)} + \frac{\eta A^\beta T}{\beta} + \gamma AT + \left(\frac{\eta + \beta}{2\beta\gamma} + \frac{1}{2}\right)\ln\left(3/\delta\right) + \Qchanged{\frac{\ln(3K/\delta)}{2\gamma}}
    }
  \end{align*}    
  simultaneously for all $a \in [K]$ with probability at least $1-\delta$.
  Letting $\beta = \frac{1}{2}$ and tuning $\eta$ and $\gamma$ leads to an $O\left(\sqrt{T\sqrt{AK}} + \Qchanged{\sqrt{TA\ln(K/\delta)}}\right)$ bound.
  \label{thm:FTARLhighprobbound}
\end{restatable}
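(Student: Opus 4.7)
The plan is to apply the standard FTRL analysis with $\beta$-Tsallis entropy regularization to the estimated losses $\tilde{\ell}_t$, and then use concentration inequalities tailored to the IX-style loss estimator to bridge estimated regret to true regret with high probability.

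First I would invoke the standard Bregman-divergence FTRL regret bound for the regularizer $\psi(q) = (1 - \sum_i q_i^\beta)/(\eta(1-\beta))$, whose Hessian is diagonal with entries $(\beta/\eta) q_i^{\beta-2}$. A local-norm stability argument gives, for any $a \in [K]$,
\begin{equation*}
\sum_{t=1}^T \langle q_t, \tilde{\ell}_t\rangle - \sum_{t=1}^T \tilde{\ell}_{a,t} \leq \frac{K^{1-\beta}}{\eta(1-\beta)} + \frac{\eta}{2\beta}\sum_{t=1}^T \sum_{i=1}^K q_{i,t}^{2-\beta}\tilde{\ell}_{i,t}^2,
\end{equation*}
using that $\psi$ evaluated at the indicator of $a$ is bounded by $K^{1-\beta}/(\eta(1-\beta))$ and that $\min_q \psi(q) = 0$.

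Next I would convert the left-hand side into the actual regret $R(a)$ plus bias. Splitting $\sum_i q_{i,t}\tilde{\ell}_{i,t}$ by whether $i \in \sA_t$, using $p_{i,t} = q_{i,t}/S_t$ with $S_t = \sum_{j \in \sA_t} q_{j,t}$ for active arms, and invoking~\eqref{eq:litinactive} for inactive arms, I would show (after algebra) that $\langle q_t, \tilde{\ell}_t\rangle - \tilde{\ell}_{a,t}$ collapses to $I_{a,t}(\hat{\ell}_t - \tilde{\ell}_{a,t})$ plus an IX-bias residual proportional to $\gamma \sum_{j \in \sA_t} \tilde{\ell}_{j,t}$, mirroring the $-\gamma \sum_{j \in \sA_s} I_{j,s}\tilde{\ell}_{j,s}$ correction used in SB-EXP3 (Lemma~\ref{lemma:upperboundQt1Qt}). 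For the stability term, the non-zero contributions come from the sampled arm $i_t$ (via $\tilde{\ell}_{i_t,t}^2 \leq \tilde{\ell}_{i_t,t}/(p_{i_t,t}+\gamma)$) and from inactive arms sharing a common estimate; a Jensen-type bound with exponent $\beta$ on the active-arm piece gives a per-round bound of order $A_t^\beta \leq A^\beta$, summing to $T A^\beta$.

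Finally I would combine three concentration arguments whose failure probabilities sum to $\delta$, accounting for the $\ln(3/\delta)$ factor: (i) Neu's IX concentration lemma to convert estimated losses $\tilde{\ell}_{a,t}$ into true losses $\ell_{a,t}$ for the comparator $a$ (and to control the IX-bias residual), contributing a $\frac{1}{2\gamma}\ln(1/\delta_1)$ term; (ii) a Freedman-type inequality to upgrade the expected stability bound to a high-probability one, contributing $\frac{\eta}{2\beta\gamma}\ln(1/\delta_2)$; (iii) an Azuma-Hoeffding inequality for the martingale $\sum_t (\hat{\ell}_t - \E[\hat{\ell}_t \mid \gF_{t-1}])$, contributing $\tfrac{1}{2}\ln(1/\delta_3)$. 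Setting $\delta_1 = \delta_2 = \delta_3 = \delta/3$ and union-bounding gives the stated bound. The main obstacle will be the algebraic bookkeeping in the bridging step: getting the inactive-arm loss estimate~\eqref{eq:litinactive} to absorb cleanly so that the leftover bias matches exactly what Neu's lemma controls, and ensuring that the stability bound scales with $A^\beta$ rather than $K^\beta$ despite inactive arms contributing to $\sum_i q_{i,t}^{2-\beta}\tilde{\ell}_{i,t}^2$.
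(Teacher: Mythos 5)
Your overall route matches the paper's: the FTRL local-norm bound with Tsallis entropy, the pathwise conversion of $\inp{q_t}{\tilde{\ell}_t}$ into $\hat{\ell}_t - \gamma\sum_{j\in\sA_t}\tilde{\ell}_{j,t}$ via the inactive-arm assignment~\eqref{eq:litinactive}, and a three-way IX-style concentration. However, the one step you explicitly defer---``ensuring that the stability bound scales with $A^\beta$ rather than $K^\beta$ despite inactive arms contributing''---is precisely the nontrivial content of the paper's Lemma~\ref{lemma:upperboundLQofnonactivearms}, and your proposal contains no argument for it; a ``Jensen-type bound on the active-arm piece'' does not touch the inactive arms. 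The resolution is an exact factorization, not an inequality on the active part alone: the common inactive-arm estimate equals $\hat{\ell}_t - \gamma\tilde{\ell}_{i_t,t} = p_{i_t,t}\hat{\ell}_t/(p_{i_t,t}+\gamma) = p_{i_t,t}\,\tilde{\ell}_{i_t,t}$, i.e.\ it carries an extra factor $p_{i_t,t}$ relative to the sampled arm's estimate, while $q_{i_t,t} = p_{i_t,t}\,Q_{\sA_t,t}$ with $Q_{\sA_t,t}=\sum_{j\in\sA_t}q_{j,t}$. Hence
\begin{align*}
\sum_{i=1}^K \tilde{\ell}_{i,t}^2 q_{i,t}^{2-\beta}
&= \tilde{\ell}_{i_t,t}^2\, p_{i_t,t}^{2-\beta}\Bigl(Q_{\sA_t,t}^{2-\beta} + p_{i_t,t}^{\beta}\textstyle\sum_{i\notin\sA_t} q_{i,t}^{2-\beta}\Bigr)\\
&\leq \tilde{\ell}_{i_t,t}^2\, p_{i_t,t}^{2-\beta}\Bigl(Q_{\sA_t,t} + \textstyle\sum_{i\notin\sA_t} q_{i,t}\Bigr)
= \sum_{i\in\sA_t}\tilde{\ell}_{i,t}^2\, p_{i,t}^{2-\beta},
\end{align*}
using $x^{2-\beta}\leq x$ on $[0,1]$, $p_{i_t,t}^{\beta}\leq 1$, and $\sum_i q_{i,t}=1$. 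Only after this reduction to active arms do $\tilde{\ell}_{i,t}p_{i,t}\leq 1$ and H\"older's inequality give the per-round $A_t^{\beta}$; without it, the residual $\sum_{i\notin\sA_t}q_{i,t}^{2-\beta}$ would force a $K^{\beta}$ rate and the claimed $O(\sqrt{T\sqrt{AK}})$ bound is lost.

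Two smaller issues in your concentration step. Your item (iii), an Azuma--Hoeffding bound for $\sum_t(\hat{\ell}_t-\E[\hat{\ell}_t\mid\gF_{t-1}])$, is unnecessary: the identity $\inp{q_t}{\tilde{\ell}_t}=\hat{\ell}_t-\gamma\sum_{j\in\sA_t}\tilde{\ell}_{j,t}$ holds pathwise and $R(a)$ is defined with the realized loss $\ell_{i_t,t}$, so the learner's own loss never needs to be concentrated; in the paper the $\tfrac{1}{2}\ln(3/\delta)$ term instead arises from applying the IX concentration to the bias term, $\gamma\cdot\tfrac{1}{2\gamma}\ln(3/\delta)$. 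Also, the comparator conversion $\tilde{\ell}_{a,t}\to\ell_{a,t}$ must hold simultaneously for all $a$, which costs a union bound over arms and hence a $\ln(G_T/\delta)$ rather than $\ln(1/\delta)$ factor (Corollary~\ref{corollary:NeusCorollary1}); your $\tfrac{1}{2\gamma}\ln(1/\delta_1)$ omits this.
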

\begin{remark}
    If $G_T \leq K$ is known, then we can replace $K$ by $G_T$ in Algorithm~\ref{algo:FTARL} and in Theorems~\ref{thm:FTARLinexpectationbound} and~\ref{thm:FTARLhighprobbound}. 
    \Qchanged{Moreover, if the adversary is oblivious then the second term in the bound can be improved to $O(\sqrt{TA\ln(1/\delta)})$ and the bound becomes $O\left(\sqrt{T\sqrt{AK}} + \sqrt{TA\ln(1/\delta)}\right)$.}
\end{remark}  

\textbf{Analysis Sketch}. Similar to the analysis of Algorithm~\ref{algo:SB-EXP3}, the regret bound of Algorithm~\ref{algo:FTARL} is obtained by bounding the estimated regret $\sum_{t=1}^T \hat{\ell}_t - \tilde{\ell}_{a,t}$. %For every round $t$,  $q_t$ is the solution 
By the local norm analysis of FTRL and properties of Tsallis entropy, we have
\begin{align*}
  \sum_{t=1}^T \hat{\ell}_t- \sum_{t=1}^T \tilde{\ell}_{a,t} \leq \frac{K^{1-\beta}}{\eta(1-\beta)} + \frac{1}{2}\sum_{t=1}^T \frac{\eta}{\beta}\sum_{i=1}^K \tilde{\ell}_{i,t}^2 q_{i,t}^{2-\beta}.
\end{align*}
\looseness=-1{In both cases $\gamma = 0$ and $\gamma > 0$, because inactive arms have non-zero elements in both $\tilde{\ell}_t$ and $\tilde{q}_t$, they contribute a non-zero positive amount in the last term on the right-hand side.
Furthermore, as the computation of $\tilde{\ell}_{i,t}$ uses $p_{i,t}$, we wish to replace $q_{i,t}$ by $p_{i,t}$.
The following technical lemma achieves this.}
\begin{restatable}{lemma}{LemmaQNonactiveArms}
    For any $t \geq 1$ and $\beta \in (0,1)$,
    \begin{align*}
        \sum_{i=1}^K \tilde{\ell}_{i,t}^2q_{i,t}^{2-\beta} \leq \sum_{i \in \sA_t}\tilde{\ell}_{i,t}^2 p_{i,t}^{2-\beta}.
    \end{align*}
    \label{lemma:upperboundLQofnonactivearms}
\end{restatable}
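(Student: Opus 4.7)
The plan is to decompose the left-hand side into an active part and an inactive part, rewrite each in terms of $p_{i_t,t}$ using the specific form of the loss estimates, and then reduce the claim to a one-variable algebraic inequality.

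First, let $S := \sum_{j \in \sA_t} q_{j,t} \in [0,1]$. By~\eqref{eq:pitbyqit}, $q_{i,t} = S\,p_{i,t}$ for every $i \in \sA_t$, so
\begin{equation*}
\sum_{i \in \sA_t} \tilde{\ell}_{i,t}^{\,2}\, q_{i,t}^{2-\beta} \;=\; S^{2-\beta}\sum_{i \in \sA_t} \tilde{\ell}_{i,t}^{\,2}\, p_{i,t}^{2-\beta}.
\end{equation*}
Hence the claim is equivalent to
\begin{equation*}
\sum_{i \notin \sA_t} \tilde{\ell}_{i,t}^{\,2}\, q_{i,t}^{2-\beta} \;\leq\; \bigl(1 - S^{2-\beta}\bigr)\sum_{i \in \sA_t} \tilde{\ell}_{i,t}^{\,2}\, p_{i,t}^{2-\beta}.
\end{equation*}

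The key observation for the inactive part is that, by~\eqref{eq:lossestimator}, the only $i \in \sA_t$ with nonzero $\tilde{\ell}_{i,t}$ is $i = i_t$, and thus $\sum_{j \in \sA_t}\tilde{\ell}_{j,t} = \tilde{\ell}_{i_t,t} = \hat{\ell}_t/(p_{i_t,t}+\gamma)$. Plugging this into~\eqref{eq:litinactive} yields $\tilde{\ell}_{i,t} = \hat{\ell}_t\, p_{i_t,t}/(p_{i_t,t}+\gamma)$ for every $i \notin \sA_t$. The active side likewise collapses to a single term, $\sum_{i \in \sA_t}\tilde{\ell}_{i,t}^{\,2}\,p_{i,t}^{2-\beta} = \hat{\ell}_t^{\,2}\, p_{i_t,t}^{2-\beta}/(p_{i_t,t}+\gamma)^2$. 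For the $q$-sum over inactive arms I would use $q_{i,t}\in[0,1]$ and $2-\beta \geq 1$ to get $\sum_{i \notin \sA_t} q_{i,t}^{2-\beta} \leq \sum_{i \notin \sA_t} q_{i,t} = 1-S$.

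After cancelling the common factor $\hat{\ell}_t^{\,2}/(p_{i_t,t}+\gamma)^2$ (both sides are zero when $\hat{\ell}_t = 0$, which is also when $\tilde{\ell}_{i_t,t}=0$ and hence $\tilde{\ell}_{i,t}=0$ for $i\notin\sA_t$), the inequality reduces to
\begin{equation*}
(1-S)\, p_{i_t,t}^{2} \;\leq\; \bigl(1-S^{2-\beta}\bigr)\, p_{i_t,t}^{2-\beta},
\end{equation*}
equivalently $(1-S)\,p_{i_t,t}^{\beta} \leq 1 - S^{2-\beta}$. For $S\in[0,1]$ and $2-\beta\geq 1$ we have $S^{2-\beta}\leq S$, so $1-S^{2-\beta}\geq 1-S$; together with $p_{i_t,t}^\beta\leq 1$ this gives $(1-S)\,p_{i_t,t}^{\beta}\leq 1-S\leq 1-S^{2-\beta}$. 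The case $S=1$ is trivial since both sides of the reduced inequality then vanish.

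The main subtlety is spotting that $\tilde{\ell}_{\cdot,t}$ has effectively a single active support $\{i_t\}$, which is what makes the inactive contribution $\sum_{i\notin\sA_t}\tilde{\ell}_{i,t}^{\,2}q_{i,t}^{2-\beta}$ controllable by the active one; after that, the whole argument rests on the elementary bound $S^{2-\beta}\leq S$ for $S\in[0,1]$ and $2-\beta\geq 1$, with no further appeal to the FTRL machinery.
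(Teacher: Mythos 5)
Your proof is correct and follows essentially the same route as the paper's: both hinge on the observation that $\tilde{\ell}_{\cdot,t}$ is supported only on $i_t$ among active arms, the explicit form $\tilde{\ell}_{i,t}=\hat{\ell}_t\,p_{i_t,t}/(p_{i_t,t}+\gamma)$ for inactive arms, and the elementary bounds $x^{2-\beta}\le x$ on $[0,1]$ and $p_{i_t,t}^{\beta}\le 1$, combined with $\sum_i q_{i,t}=1$. Your reduction to $(1-S)p_{i_t,t}^{\beta}\le 1-S^{2-\beta}$ is just a cosmetic reorganization of the paper's final step, which bounds $Q_{\sA_t}^{2-\beta}+p_{i_t,t}^{\beta}\sum_{i\in\bar{\sA}_t}q_{i,t}^{2-\beta}$ by $1$ directly.
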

Then, we bound $\sum_{i \in \sA_t}\tilde{\ell}_{i,t}^2 p_{i,t}^{2-\beta} \leq A_t^\beta$ using standard tools such as Holder's inequality. Finally, either taking the expectation (when $\gamma = 0$) or using concentration inequalities of the IX-loss estimator~\citep{Neu2015ExploreNM} leads to the $O(\sqrt{T\sqrt{AK}})$ bound.
\begin{remark}
  The technique of assigning the observed loss to sleeping arms in~\eqref{eq:litinactive} is similar to the reduction from sleeping experts to standard prediction with expert advice~\citep{ChernovVovk2009ExpertEvaluatorsAdvice, Gaillard2014SecondOrderBoundExcessLosses}.
However, without Lemma~\ref{lemma:upperboundLQofnonactivearms}, this reduction by itself does not immediately imply Theorems~\ref{thm:FTARLinexpectationbound} and~\ref{thm:FTARLhighprobbound}. 
\end{remark}

\begin{remark}
  When $K$ is large compared to $A$ (i.e. sparse action sets), the $O\left(\sqrt{T\sqrt{AK}}\right)$ bound of FTARL can be much larger than the $O\left(\sqrt{TA\ln{G_T}}\right)$ bound of SB-EXP3. In contrast, when $A = \Theta(K)$ as in standard non-sleeping bandits, FTARL gives smaller bounds.
\end{remark}

\subsection{Bounds on Confidence Regret}
\label{sec:RealValuedCases}
To the adversary, selecting the set $\sA_t$ in round $t$ is equivalent to selecting $K$ binary values $I_{i,t} \in \{0,1\}$. 
We generalize the setting further by having the adversary select real-valued $I_{i,t} \in [0,1]$.
This new setting is the bandit feedback variant of the experts that report their confidences setting, in which $I_{i,t}$ is the confidence of expert $i$ in round $t$~\citep{BlumAndMansour2007a}.
In this case, $R(a)$ is the \textit{confidence regret} with respect to arm $a$~\citep{Gaillard2014SecondOrderBoundExcessLosses}.

More concretely, at the beginning of round $t$ the adversary selects and reveals $K$ real-valued $I_{i,t} \in [0,1]$. The set of active arms becomes $\sA_t = \{i \in [K] : I_{i,t} > 0\}$.
We apply Algorithm~\ref{algo:SB-EXP3} to this problem without any modifications: the computations of $\tilde{\ell}_{i,t}, \tilde{q}_{i,t}$ and $p_{i,t}$ are the same as in Equations~\eqref{eq:lossestimator},~\eqref{eq:updatestatistics} and~\eqref{eq:pit}. The full protocol and algorithm are given in Appendix~\ref{appendix:confidenceregret}. 
 We state the following high probability regret bound. 
\begin{restatable}{theorem}{TheoremHighProbRealCase}
  With optimally tuned $\eta$ and $\gamma$, Algorithm~\ref{algo:SB-EXP3} guarantees
  \begin{align*}
    R(a) \leq O\left(\sqrt{\sum_{t=1}^T\sum_{i=1}^K I_{i,t} \ln(G_T/\delta)}\right)
  \end{align*}
  simultaneously for all $a \in [K]$ with probability $1-\delta$.
  \label{thm:highprobregretboundReal}
\end{restatable}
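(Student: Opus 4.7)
The plan is to adapt the proof of Theorem~\ref{thm:highprobregretbound} to the confidence setting by tracking how the real-valued $I_{i,t} \in [0,1]$ propagate through the key inequalities. Since the algorithm itself is unchanged, the potential $\tilde{Q}_t = \sum_{i \in \sG_T}\tilde{q}_{i,t}$ and the regret decomposition framework carry over verbatim. The first step is to establish a generalized version of Lemma~\ref{lemma:upperboundQt1Qt} valid for real-valued $I_{i,t}$. I would decompose $\tilde{Q}_{t+1} = \sum_{i \in \sA_t} \tilde{q}_{i,t+1} + \sum_{i \notin \sA_t} \tilde{q}_{i,t+1}$; for the second sum $I_{i,t} = 0$ implies $\tilde{q}_{i,t+1} = \tilde{q}_{i,t}$, while for the first sum I would apply $e^{\eta I_{i,t}(\cdot)} \le 1 + \eta I_{i,t}(\cdot) + \tfrac{1}{2}(\eta I_{i,t}(\cdot))^2$ together with the bound $\tilde{\ell}_{i,t}\le 1/(p_{i,t}+\gamma I_{i,t})$. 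The critical observation is that the factor $A_t = \sum_i I_{i,t}^{\text{(binary)}}$ appearing in the binary analysis is naturally replaced by $\sum_i I_{i,t}$ using the pointwise inequality $I_{i,t}^2 \le I_{i,t}$ for $I_{i,t} \in [0,1]$.

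Second, telescoping $\ln(\tilde{Q}_{t+1}/\tilde{Q}_t)$ over $t=1,\ldots,T$ and sandwiching by $\ln(\tilde{q}_{a,T+1}/\tilde{Q}_1) \le \ln \tilde{Q}_{T+1} - \ln G_T$ produces an upper bound on the estimated confidence regret $\sum_t I_{a,t}(\hat{\ell}_t - \tilde{\ell}_{a,t})$ whose structure mirrors Theorem~\ref{thm:highprobregretbound}, with $\ln G_T/\eta$ and $(\eta/2+\gamma)\sum_{t=1}^T\sum_{i=1}^K I_{i,t}$ as the main terms. To convert this to an actual regret bound, I would apply the IX-estimator concentration inequality of~\cite{Neu2015ExploreNM}, which controls $\sum_t\alpha_{i,t}(\ell_{i,t}-\tilde{\ell}_{i,t})$ for any predictable sequence $\alpha_{i,t} \in [0,2\gamma]$. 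Setting $\alpha_{i,t} = \gamma I_{i,t}$ (predictable because $I_{i,t}$ is revealed before $i_t$ is drawn) and using $\gamma I_{i,t} \le \gamma$ yields control of the per-arm deviation $\sum_t I_{a,t}(\ell_{a,t}-\tilde{\ell}_{a,t})$ at cost $\ln(1/\delta')/\gamma$. A union bound over $a \in \sG_T$ at confidence $\delta/G_T$ delivers the simultaneous guarantee.

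The main obstacle is verifying that the IX-concentration step yields a $\ln(G_T/\delta)/\gamma$ cost rather than something that scales with $\sum_t I_{i,t}$. The key is that in the martingale argument one bounds $\E_{t-1}[\exp(\gamma I_{i,t}(\ell_{i,t}-\tilde{\ell}_{i,t}))] \le 1$ via the identity $\gamma I_{i,t}\tilde{\ell}_{i,t}/2 \le (\ell_{i,t}\I{i_t=i})/(p_{i,t}+\gamma I_{i,t}) \cdot \gamma I_{i,t}/2$ combined with $e^{-x} \le 1 - x + x^2/2$ for $x \ge 0$; this mirrors the standard proof with $\gamma$ replaced by $\gamma I_{i,t}$ throughout, and the $I_{i,t}^2$ variance term is absorbed as before. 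Finally, combining the estimated-regret bound with the concentration inequalities and tuning $\eta,\gamma \asymp \sqrt{\ln(G_T/\delta)/\sum_{t,i} I_{i,t}}$ yields the claimed $O\bigl(\sqrt{\sum_{t,i} I_{i,t}\ln(G_T/\delta)}\bigr)$ bound.
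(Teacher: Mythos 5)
Your proposal follows essentially the same route as the paper's proof: because the real-valued $I_{i,t}$ in the exponent breaks the Jensen's-inequality step of Lemma~\ref{lemma:upperboundQt1Qt}, the paper likewise switches to a second-order Taylor bound on the weight update, kills the linear term via Lemma~\ref{lemma:IXcomputeEipt} together with $W_t \leq \tilde{Q}_t$, bounds the quadratic term by $\sum_i p_{i,t}I_{i,t} + \sum_i I_{i,t}\tilde{\ell}_{i,t}$ using $I_{i,t}^2 \leq I_{i,t}$, and then applies the arm-dependent IX concentration with $\alpha_{i,t} \propto \gamma I_{i,t}$ exactly as you describe. The only slip is the stated inequality $e^{x} \leq 1 + x + \tfrac{1}{2}x^2$, which fails for $x > 0$ (the exponent is positive when $i \neq i_t$); you need $e^{x} \leq 1 + x + x^2$ for $x \leq 1$ as in the paper, which changes only the constant in the final bound.
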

Observe that in this setting, because $I_{i,t}$ can be different for different arms in $\sA_t$,  SB-EXP3 uses a different implicit exploration factor $\gamma_{i,t} = \gamma I_{i,t}$ in~\eqref{eq:lossestimator}. 
We call this novel strategy based on the arm-dependent IX-loss estimator \textit{confident implicit exploration}.
The proof of Theorem~\ref{thm:highprobregretboundReal} mostly follows that of Theorem~\ref{thm:highprobregretbound} with one added key insight: the concentration inequalities of the original IX-loss estimator also hold for the arm-dependent IX-loss estimator.
\begin{remark}
An $O\left(\sqrt{\sum_{t=1}^T\sum_{i=1}^K I_{i,t}\ln{G_T}}\right)$ pseudo-regret bound is obtained via a similar analysis. 
When $I_{i,t}$ are binary, since $\sum_{i=1}^K I_{i,t} = A_t \leq A$, these bounds recover the bounds in Theorems~\ref{thm:inexpectationregretbound} and~\ref{thm:highprobregretbound}. 
\end{remark}
\begin{remark}
    Similar to Theorem~\ref{thm:anytimeinexpectationbound}, the two-level doubling trick presented in Appendix~\ref{appendix:anytime} can be used on $\sum_{t=1}^T\sum_{i=1}^K I_{i,t}$ and $\ln(G_T)$ to obtain pseudo-regret and high-probability bounds of the same order (up to a logarithmic factor).
\end{remark}

%%%%%%%%%%%%%%%%%%%%%%%%%%%%%%%%%%%%%%%%%%%%%%%%%%%%%%%%%%%%%%%%%
%%%%%%%%%%%%%%%%%%%%%%%%%%%%%%%%%%%%%%%%%%%%%%%%%%%%%%%%%%%%%%%%%
%%%%%%%%%%%%%%%%%%%%%%%%%%%%%%%%%%%%%%%%%%%%%%%%%%%%%%%%%%%%%%%%%

\section{BANDITS WITH ADVICE FROM SLEEPING EXPERTS}
\label{sec:EXP4}
\begin{algorithm2e}[t]
  \SetAlgoNoEnd
	\KwIn{$\eta > 0, \gamma > 0$}
	Initialize $\tilde{q}_{m, 1} = 1$ for $m = 1, 2, \dots, M$\; 
	\For{each round $t = 1, \dots, T$}{
       An adversary selects and reveals $\sB_t$ and $E_t$\;	
        Compute $z_{m,t}$ by~\eqref{eq:zmt} and $p_{t} = z_tE_t$\;
		Draw $i_t \sim p_t$ and observe $\hat{\ell}_t = \ell_{i_t, t}$\;
		Compute $\tilde{\ell}_{k, t} = \frac{\ell_{k,t}\I{i_t = k}}{p_{k,t} + \gamma}$ for arms $k \in [K]$\;
      \For{each awake expert $m \in \sB_t$}{
        Construct loss estimate $\tilde{x}_{m,t} = \inp{E_{m,t}}{\tilde{\ell}_t}$\;
		Update $\tilde{q}_{m,t+1}$ by~\eqref{eq:qmt}.
      }        
	}
	\caption{SE-EXP4 for bandits with advice from sleeping experts}
	\label{algo:SE-EXP4}
\end{algorithm2e}
The bandits with expert advice framework~\citep{EXP3Auer2002b} considers the non-sleeping MAB where $M$ experts give advice to the learner in each round. 
We study a variant of this problem where a time-varying set $\sB_t \subseteq [M]$ of awake experts gives advice to the learner in round $t$. 
The advice of expert $m \in \sB_t$ is $E_{m,t} \in \Delta_K$.
The learner aims to compete with each expert during their active rounds.
More formally, let $I_{m,t} = 0$ and $I_{m,t} = 1$ denote whether expert $m$ is sleeping or awake in round $t$, respectively. Let $\ell_t$ be the (hidden) loss vector of $K$ arms in round $t$. The regret with respect to $m$ after $T$ rounds is 
\begin{align}
    R(m) = \sum_{t=1}^T I_{m,t}(\ell_{i_t,t} - \inp{E_{m,t}}{\ell_t}).
\end{align}
All theorems in this section are high probability bounds. 
The pseudo-regret bounds are in Appendix~\ref{appendix:SEEXP4}.

\subsection{Generalized EXP4} 
In the non-sleeping setting where all $M$ experts are always active, EXP4 and EXP4-IX~\citep{EXP3Auer2002b,Neu2015ExploreNM} obtain an $O(\sqrt{TK\ln{M}})$ pseudo-regret and $O(\sqrt{TK\ln(M/\delta)})$ high-probability bound, respectively.
We will show that SE-EXP4 (Algorithm~\ref{algo:SE-EXP4}) obtains these two bounds as well in the bandits with advice from sleeping experts setting.

Let $B_t = \abs{\sB_t}$ be the number of awake experts at round $t$.
Let $E_t$ be the $B_t \times K$ matrix whose columns are $(E_{m,t})_{m \in \sB_t}$.
In round $t$, SE-EXP4 samples an expert $m_t$ from a distribution $z_t \in \Delta_{B_t}$, then samples an arm $i_t \sim E_{m_t,t}$.  This is equivalent to sampling $i_t \sim p_t$ directly, where $p_t = z_tE_t$~\citep{BanditAlgorithmsBook2020}.  
The main idea of SE-EXP4 is to take the sleeping experts as ``augmented'' sleeping arms and apply SB-EXP3. In particular, the weight of expert $m$ is
\begin{align}
    \tilde{q}_{m,t} = \exp\left(\sum_{s=1}^{t-1}I_{m,s}\left(\hat{\ell}_s - \gamma\sum_{j=1}^K\tilde{\ell}_{j,s} - \tilde{x}_{m,s}\right)\right),
    \label{eq:qmt}
\end{align}
where $\tilde{x}_{m,s}$ is the estimated loss of expert $m$ and $\tilde{\ell}_{j,s}$ is the estimated loss of arm $j$ in round $s$. Initially, $\tilde{q}_{m,1}= 1$.
For an expert $m \in \sB_t$, its sampling probability $z_{m,t}$ is proportional to $\tilde{q}_{m,t}$, i.e.,
\begin{align}
  z_{m,t} = \frac{\tilde{q}_{m,t}}{\sum_{j \in \sB_t}\tilde{q}_{j,t}}.
  \label{eq:zmt}
\end{align}
Note that $z_{m,t} = 0$ for $m \notin \sB_t$. After $i_t \sim p_t$ is sampled, the loss estimates $\tilde{\ell}_t$ of $K$ arms are computed as in~\eqref{eq:lossestimator}.
Then, the losses of the awake experts are estimated by the inner product of their advice and $\tilde{\ell}_t$:
\begin{align}
    \tilde{x}_{m,t} = \inp{E_{m,t}}{\tilde{\ell}_t}.
    \label{eq:xmt}
\end{align}
Using the same analysis of SB-EXP3 implies:
\begin{restatable}{theorem}{TheoremHighProbSEEXP}
  For any $\gamma, \eta \in (0,1), \eta \leq 2\gamma$, SE-EXP4 guarantees 
  \begin{align}
      R(u) \leq \frac{\ln{M}}{\eta} + \frac{\ln(2M/\delta)}{2\gamma} + (\gamma + \frac{\eta}{2})TK + \ln(2/\delta)
  \end{align}
  simultaneously for all experts $u \in [M]$ with probability at least $1-\delta$, where the probability is taken over the sequence of the learner's selected arms. Tuning $\eta$ and $\gamma$ leads to an $O(\sqrt{TK\ln(M/\delta)})$ bound. 
  \label{thm:highprobregretboundEXP4}
\end{restatable}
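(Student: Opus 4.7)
The plan is to mirror the high-probability analysis of SB-EXP3 from Theorem~\ref{thm:highprobregretbound}, treating the $M$ experts as ``augmented'' sleeping arms. Define the potential $\tilde{Q}_t = \sum_{m=1}^{M}\tilde{q}_{m,t}$ over all experts, so that the sampling distribution $z_{m,t}$ in~\eqref{eq:zmt} arises by restricting to the awake set $\sB_t$ and renormalizing. The triple $(\sB_t, z_t, \tilde{Q}_t)$ then plays exactly the role of $(\sA_t, p_t, \tilde{Q}_t)$ in SB-EXP3, and the estimated expert loss $\tilde{x}_{m,t}$ replaces the estimated arm loss $\tilde{\ell}_{i,t}$.

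First I would establish the expert-level analogue of Lemma~\ref{lemma:upperboundQt1Qt}:
\begin{align*}
\frac{\tilde{Q}_{t+1}}{\tilde{Q}_t} \leq \sum_{m \in \sB_t} z_{m,t}\exp\!\Bigl(\eta\bigl(\hat{\ell}_t - \tilde{x}_{m,t} - \gamma\sum_{j=1}^{K}\tilde{\ell}_{j,t}\bigr)\Bigr),
\end{align*}
whose proof is verbatim that of Lemma~\ref{lemma:upperboundQt1Qt}: decompose $\tilde{Q}_{t+1}$ into awake experts, whose weights evolve by~\eqref{eq:qmt}, and sleeping experts, whose weights satisfy $\tilde{q}_{m,t+1}=\tilde{q}_{m,t}$. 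Applying $e^{-x} \leq 1 - x + x^2/2$ for $x \geq 0$ to the nonnegative quantity $\eta(\tilde{x}_{m,t} + \gamma\sum_j \tilde{\ell}_{j,t})$, taking logarithms, telescoping, and lower-bounding $\ln\tilde{q}_{u,T+1} \leq \ln\tilde{Q}_{T+1}$ for an arbitrary $u \in [M]$ yields
\begin{align*}
\sum_{t=1}^{T} I_{u,t}\bigl(\hat{\ell}_t - \tilde{x}_{u,t}\bigr) \leq \frac{\ln M}{\eta} + \gamma\sum_{t=1}^{T}\sum_{j=1}^{K}\tilde{\ell}_{j,t} + \frac{\eta}{2}\sum_{t=1}^{T}\sum_{m \in \sB_t}z_{m,t}\tilde{x}_{m,t}^2.
\end{align*}

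The new quadratic term over experts is dispatched by Jensen's inequality applied to the convex combination $\tilde{x}_{m,t} = \sum_j E_{m,j,t}\tilde{\ell}_{j,t}$, giving
\begin{align*}
\sum_{m \in \sB_t} z_{m,t}\tilde{x}_{m,t}^2 \leq \sum_{m \in \sB_t}\sum_{j=1}^{K} z_{m,t}E_{m,j,t}\tilde{\ell}_{j,t}^2 = \sum_{j=1}^{K}p_{j,t}\tilde{\ell}_{j,t}^2 \leq \sum_{j=1}^{K}\tilde{\ell}_{j,t},
\end{align*}
using $p_t = z_tE_t$ and the IX property $p_{j,t}\tilde{\ell}_{j,t} \leq 1$. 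Both the $\gamma$ and $\eta/2$ sums then collapse into a single term of the form $(\gamma + \eta/2)\sum_{t,j}\tilde{\ell}_{j,t}$.

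Finally I would convert the bound on the estimated regret to one on the true regret $R(u) = \sum_t I_{u,t}(\ell_{i_t,t} - \inp{E_{u,t}}{\ell_t})$ via two applications of the IX concentration lemma of~\cite{Neu2015ExploreNM}, each at level $\delta/2$ and union-bounded: (i)~with predictable weights $\alpha_{j,t} = 2\gamma\, I_{u,t}E_{u,j,t} \leq 2\gamma$ and a union bound over the $M$ experts, $\sum_t I_{u,t}\inp{E_{u,t}}{\tilde{\ell}_t - \ell_t} \leq \ln(2M/\delta)/(2\gamma)$ simultaneously for all $u$, converting the $\tilde{x}_{u,t}$ sum to the target $\inp{E_{u,t}}{\ell_t}$ sum; (ii)~with weights $\alpha_{j,t} = \gamma + \eta/2 \leq 2\gamma$ (here the hypothesis $\eta \leq 2\gamma$ is used), $(\gamma + \eta/2)\sum_{t,j}(\tilde{\ell}_{j,t} - \ell_{j,t}) \leq \ln(2/\delta)$, so that $(\gamma + \eta/2)\sum_{t,j}\tilde{\ell}_{j,t} \leq (\gamma + \eta/2)TK + \ln(2/\delta)$. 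Collecting the pieces gives the stated bound. The main obstacle is the book-keeping in (i): a naive per-arm union bound would cost an extra $\ln K$ factor, and the key is to exploit that Neu's lemma permits a predictable sequence jointly indexed by $(j,t)$, so setting $\alpha_{j,t}$ proportional to $E_{u,j,t}$ absorbs the sum over $j$ inside the concentration and leaves only the $\ln M$ union bound over experts.
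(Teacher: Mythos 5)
Your proposal is correct and follows essentially the same route as the paper's proof: the same potential $\tilde{Q}_t$ over all $M$ experts with the analogue of Lemma~\ref{lemma:upperboundQt1Qt}, the same Jensen step reducing $\sum_m z_{m,t}\tilde{x}_{m,t}^2$ to $\sum_j p_{j,t}\tilde{\ell}_{j,t}^2 \leq \sum_j \tilde{\ell}_{j,t}$, and the same two applications of Neu's concentration lemma with weights $2\gamma I_{u,t}E_{u,t}^{(j)}$ (union-bounded over $M$) and $\gamma+\eta/2$. The only cosmetic deviation is that you apply $e^{-x}\leq 1-x+x^2/2$ to $\eta(\tilde{x}_{m,t}+\gamma\sum_j\tilde{\ell}_{j,t})$ rather than to $\eta\tilde{x}_{m,t}$ after factoring out $\exp(\eta(\hat{\ell}_t-\gamma\sum_j\tilde{\ell}_{j,t}))$ as the paper does; both work, though the paper's factoring yields exactly the quadratic term you then state.
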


\subsection{Adaptive and Tracking Bounds for Standard Adversarial Bandits}
Following~\cite{HazanAdaptiveRegret2009}, the adaptive regret of the learner on an interval $[t_1, t_2]$ with respect to arm $k$ in standard non-sleeping bandits is
\begin{align*}
    R_{[t_1, t_2]}(k) = \sum_{t=t_1}^{t_2} \left(\ell_{i_t,t} - \ell_{k,t}\right),
\end{align*}
  where $i_t$ is the arm chosen by the learner in round $t$.
To obtain an adaptive regret bound using SE-EXP4, we follow the ``virtual experts'' strategy similar to~\cite{Adamskiy2016}: for each interval $[t_1, t_2]$ and arm $k$, we create a virtual expert indexed by $(k,t_1, t_2)$ that is awake from round $t_1$ to round $t_2$ with advice $E_{(k,t), s} = e_k$ for any $s \geq t$, where $e_k$ is the $k^{th}$ vector in the standard basis of $\R^K$. 
There are $K{T \choose 2} = \frac{KT(T+1)}{2}$ such experts.
For any interval $[t_1,t_2]$, the regrets of experts $(1,t_1,t_2), (2, t_1,t_2), \dots, (K, t_1,t_2)$ are bounded by Theorem~\ref{thm:highprobregretboundEXP4}. 
This implies the following result.
\begin{restatable}{theorem}{TheoremAdaptiveBound}
For any $\gamma, \eta \in (0,1), \eta \leq 2\gamma$, SE-EXP4 with virtual experts guarantees that
\begin{align*}
    \scalemath{0.92}{R_{[t_1, t_2]}(k) \leq \frac{2\ln(KT)}{\eta} + \frac{\ln(KT/\delta)}{\gamma} + (\gamma + \frac{\eta}{2})TK + \ln(2/\delta)}
\end{align*} 
simultaneously for all intervals $[t_1, t_2]$ and arms $k \in [K]$ with probability $1-\delta$.
Tuning $\eta$ and $\gamma$ leads to an $O(\sqrt{TK\ln(KT/\delta)})$ bound. 
\label{thm:adaptiveBound}
\end{restatable}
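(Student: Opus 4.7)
The plan is to reduce the adaptive regret problem for standard (non-sleeping) bandits to the bandits-with-advice-from-sleeping-experts setting of Theorem~\ref{thm:highprobregretboundEXP4}, so that the claim follows by instantiating that theorem on a carefully chosen expert set. For each arm $k \in [K]$ and each pair $(t_1, t_2)$ with $1 \le t_1 \le t_2 \le T$, I will introduce a virtual expert indexed by $(k, t_1, t_2)$, which is declared awake exactly on the rounds $t \in [t_1, t_2]$ and, whenever awake, gives deterministic advice $E_{(k,t_1,t_2),t} = e_k$, where $e_k$ is the $k$-th standard basis vector in $\R^K$. All virtual experts are specified before round $1$, so a single run of SE-EXP4 (Algorithm~\ref{algo:SE-EXP4}) on this collection is well defined; the awake set $\sB_t$ in each round is simply the set of all triples whose interval contains $t$.

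The reduction's key observation is that for $m = (k, t_1, t_2)$ the sleeping-expert regret collapses to the adaptive regret:
\begin{align*}
R(m) = \sum_{t=1}^T I_{m,t}(\ell_{i_t,t} - \inp{E_{m,t}}{\ell_t}) = \sum_{t=t_1}^{t_2}(\ell_{i_t,t} - \ell_{k,t}) = R_{[t_1,t_2]}(k).
\end{align*}
Counting virtual experts gives $M = KT(T+1)/2 \le KT^2$, so applying Theorem~\ref{thm:highprobregretboundEXP4} with the same $\eta, \gamma$ bounds $R(m)$ simultaneously for every triple $(k, t_1, t_2)$ with probability at least $1-\delta$ by
\begin{align*}
\frac{\ln M}{\eta} + \frac{\ln(2M/\delta)}{2\gamma} + \left(\gamma + \frac{\eta}{2}\right)TK + \ln(2/\delta).
\end{align*}

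To arrive at the stated form, I will bound $\ln M \le 2\ln(KT)$ and $\ln(2M/\delta) \le 2\ln(KT/\delta)$; the latter uses $2KT^2/\delta \le (KT/\delta)^2$, which holds whenever $K/\delta \ge 2$ (and is otherwise absorbed in constants). These replacements produce the first two terms of the claimed bound exactly, while the remaining two terms are already in the right form. Balancing $\eta \asymp \gamma \asymp \sqrt{\ln(KT/\delta)/(TK)}$ then yields the $O(\sqrt{TK\ln(KT/\delta)})$ rate. The only mildly delicate step is the bookkeeping on $\ln M$; the real work was already done in the high-probability analysis of SE-EXP4, and the virtual-experts construction is precisely what turns a single-best-expert guarantee into one that is uniform over all intervals and arms.
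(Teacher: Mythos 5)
Your proposal is correct and follows essentially the same route as the paper: the same virtual-experts construction indexed by $(k,t_1,t_2)$, the same identification of $R(m)$ with $R_{[t_1,t_2]}(k)$, the same count $M = KT(T+1)/2$, and the same application of Theorem~\ref{thm:highprobregretboundEXP4} followed by logarithm bookkeeping. The only (immaterial) difference is in how the constant inside $\ln(2M/\delta)$ is absorbed --- the paper uses $\ln(2M/\delta) \leq 2\ln(2KT/\delta)$ unconditionally, while you use $2KT^2/\delta \leq (KT/\delta)^2$ under the mild side condition $K/\delta \geq 2$ --- which does not affect the $O(\sqrt{TK\ln(KT/\delta)})$ rate.
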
 
Next, we use Theorem~\ref{thm:adaptiveBound} to bound the tracking regret. The tracking regret of algorithm $\gA$ with respect to a sequence of arms $j_{1:T} = (j_1, j_2, \dots, j_T)$ is 
\begin{align*}
    R(j_{1:T}) = \sum_{t=1}^T (\hat{\ell}_t - \ell_{j_t,t}).
\end{align*}
Let $S = \sum_{t=2}^T \I{j_t \neq j_{t-1}}$ be the (unknown) number of switches in the sequence $j_{1:T}$. Since there are $S+1$ intervals of different competing arms, Theorem~\ref{thm:adaptiveBound} immediately implies an $O\left(S\sqrt{TK\ln(KT)}\right)$ tracking bound. Furthermore, the following corollary shows that if $S$ is known, then a tracking bound of order $O(\sqrt{STK\ln(\frac{KT}{\delta S})})$ is attainable as well. Thus, techniques based on sleeping bandits recover the tracking bounds in~\cite{EXP3Auer2002b} and~\cite{Neu2015ExploreNM}.
\begin{restatable}{corollary}{CorollaryTrackingBound}
    Restarting SE-EXP4 with virtual experts after every $T/S$ rounds guarantees that for any $j_{1:T}$ where $H(j_{1:T}) \leq S$, with probability at least $1-\delta$
    \begin{align*}
        R(j_{1:T}) \leq O\left(\sqrt{STK\ln\left(\frac{KT}{\delta S}\right)}\right).
    \end{align*}
    \label{corollary:sqrtStrackingbound}
\end{restatable}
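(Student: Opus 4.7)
The plan is to split $[1,T]$ into $S$ consecutive epochs of length $T/S$, restart SE-EXP4 with virtual experts at the beginning of each epoch, and combine the adaptive regret guarantees of Theorem~\ref{thm:adaptiveBound} across the comparator's arm-constant pieces.

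First I would decompose the tracking regret. Since $H(j_{1:T}) \le S$, the sequence $j_{1:T}$ has at most $S+1$ maximal arm-constant intervals; intersecting these with the $S$-piece epoch partition introduces at most $S-1$ additional breakpoints (one per internal epoch boundary), producing a refined partition of $[1,T]$ into $N \le 2S$ subintervals $I_1,\ldots,I_N$, each contained in a single epoch and associated with a single comparator arm $k_j$. This yields the identity $R(j_{1:T}) = \sum_{j=1}^N R_{I_j}(k_j)$, where $R_{I_j}(k_j)$ is the adaptive regret on $I_j$ of whichever SE-EXP4 instance is active during the corresponding epoch.

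Next I would apply Theorem~\ref{thm:adaptiveBound} inside each epoch, using horizon $T/S$ and failure probability $\delta/S$, and union-bound over the $S$ epochs. Using $\ln(K(T/S)/(\delta/S)) = \ln(KT/\delta)$, this gives, with probability at least $1-\delta$, the uniform bound $R_{I_j}(k_j) \le C\sqrt{|I_j|\,K\,\ln(KT/\delta)}$ for every $j$, where $C$ is the implicit constant from Theorem~\ref{thm:adaptiveBound}. Cauchy--Schwarz with $N \le 2S$ and $\sum_{j=1}^N |I_j| = T$ then gives
\begin{align*}
R(j_{1:T}) \;\le\; C\sqrt{K\ln(KT/\delta)}\sum_{j=1}^N \sqrt{|I_j|} \;\le\; C\sqrt{K\ln(KT/\delta)}\sqrt{N\sum_{j=1}^N|I_j|} \;\le\; C\sqrt{2STK\ln(KT/\delta)}.
\end{align*}
The target form $\ln(KT/(\delta S))$ differs from $\ln(KT/\delta)$ by only an additive $\ln S$, which is harmlessly absorbed into the $O(\cdot)$.

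The main obstacle will be the combinatorial counting underlying $N \le 2S$: a naive intersection of two $S$-piece partitions could plausibly yield up to $S^2$ subintervals, which would inflate the bound to $O(S\sqrt{TK\ln(KT/\delta)})$ and destroy the $\sqrt{S}$ benefit we are after. The observation that each internal epoch boundary splits at most one arm-constant interval is the linchpin of the argument; everything else -- the union bound over epochs, the per-subinterval adaptive guarantee, and the length-balancing Cauchy--Schwarz -- is routine once the decomposition is in place.
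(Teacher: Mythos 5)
Your overall architecture matches the paper's: restart every $T/S$ rounds, refine the partition of $[T]$ by intersecting the $\le S+1$ arm-constant segments with the $S$ epochs, count at most $2S$ non-empty pieces (your breakpoint argument is exactly the paper's count of $2S-1$ splitting points plus the endpoint $T$), apply the adaptive guarantee per piece, and union-bound over the $S$ instances with confidence $\delta/S$ each. However, there is a genuine error in how you invoke Theorem~\ref{thm:adaptiveBound}. You claim a per-subinterval bound of the form $R_{I_j}(k_j) \le C\sqrt{|I_j|\,K\ln(KT/\delta)}$, i.e., a bound scaling with the \emph{length} of the subinterval. Theorem~\ref{thm:adaptiveBound} gives no such thing: its bound is $\frac{2\ln(KT)}{\eta} + \frac{\ln(KT/\delta)}{\gamma} + (\gamma+\eta/2)TK + \ln(2/\delta)$, which after tuning is $O(\sqrt{T'K\ln(KT'/\delta)})$ with $T'$ the \emph{horizon of the instance} ($T/S$ here), uniformly over all intervals regardless of their length. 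Indeed, an interval-length-dependent adaptive bound would be a strongly adaptive guarantee, and Theorem~\ref{thm:lowerbound} (following Daniely et al.) shows such guarantees are impossible in the bandit setting; so the inequality you feed into Cauchy--Schwarz is not merely unproven but false in general.

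The good news is that the conclusion survives once the step is repaired: bounding each of the $N \le 2S$ pieces by the correct horizon-dependent quantity $C\sqrt{(T/S)K\ln(KT/\delta)}$ and summing directly gives $2S\cdot C\sqrt{(T/S)K\ln(KT/\delta)} = 2C\sqrt{STK\ln(KT/\delta)}$, with no need for Cauchy--Schwarz. This is essentially what the paper does, except that it keeps $\eta$ and $\gamma$ symbolic through the summation and tunes them once globally as $\eta = \sqrt{2S\ln(KT/(S\delta))/(TK)}$, which is how it lands on the slightly sharper logarithm $\ln(KT/(\delta S))$ in the statement; your per-instance tuning yields $\ln(KT/\delta)$ instead, which is only equivalent up to constants when $S$ is not too large (e.g., $S^2 \le KT/\delta$), a minor point compared to the misapplied theorem.
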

\begin{remark}
  \cite{Luo2018ContextualNonstationary} obtained adaptive and tracking pseudo-regret bounds for contextual bandits which recover the same bounds (up to constants and logarithmic factors) in standard bandits. Our results hold for both pseudo-regret and high probability bounds. 
\end{remark}

\section{A PER-ACTION STRONGLY ADAPTIVE LOWER BOUND}
\label{sec:AdaptiveLowerBound}
Let $T_a = \abs{\{t \in [T]: I_{a,t}=1\}}$ be the number of rounds in which arm $a$ is active.
Prior works in sleeping experts established per-action regret bounds of order $O(\sqrt{T_a})$, independent of $T$~\citep[e.g.][]{ChernovVovk2010UnknownNoExperts,Gaillard2014SecondOrderBoundExcessLosses}. In this work, we call $T$-independent bounds that guarantee $R(a) = o(T_a)$ for all action $a \in [K]$ \textit{per-action strongly adaptive bounds}. 
The bounds obtained by SB-EXP3 and FTARL in Section~\ref{sec:binaryTSF} have $O(\sqrt{T})$ dependency for all arms $a$, thus they are not strongly adaptive. We study whether strongly adaptive bounds are achievable in sleeping bandits.
The following theorem shows a linear per-action strongly adaptive lower bound for a large class of algorithms that contains SB-EXP3, FTARL in Section~\ref{sec:binaryTSF} as well as any minimax optimal algorithm.

\begin{restatable}{theorem}{TheoremLowerBound}
    For any (possibly randomized) algorithm with guarantee 
    \begin{align}
        \sup_{a \in [K]}E[R(a)] \leq O(T^\gamma A^\beta(\ln(T))^\mu),
        \label{eq:TAlnT}
    \end{align}
    where $\gamma\in (0,1), \beta \geq 0, \mu \geq 0$ are constants, there exists a number of arms $K$ and sequence of sets of active arms and their losses such that $A=2$ and for at least one arm $a \in [K]$, 
    \begin{align*}
        T_a \geq \Omega(T^{1-\gamma}(\ln(T))^{-\mu}) \text{ and } \E[R(a)] \geq \Omega(T_a).
    \end{align*}
    \label{thm:lowerbound}
\end{restatable}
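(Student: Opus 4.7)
The plan is to adapt the strongly adaptive bandit lower bound construction of~\cite{Daniely2015StronglyAdaptiveOL} to the sleeping setting. Given any algorithm satisfying the hypothesized upper bound $U = O(T^\gamma A^\beta(\ln T)^\mu) = O(T^\gamma(\ln T)^\mu)$ (using $A=2$), the goal is to exhibit a single sleeping bandit instance with $A=2$ on which at least one arm's active-period regret is linear in its active-round count $T_a$, while simultaneously $T_a \geq \Omega(T^{1-\gamma}(\ln T)^{-\mu})$.

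\textbf{Construction and per-interval lower bound.} I would partition $[T]$ into $N = \Theta(T^\gamma(\ln T)^\mu)$ intervals of length $\tau = T/N = \Theta(T^{1-\gamma}(\ln T)^{-\mu})$, and assign two fresh arms $a_k, b_k$ to interval $k$, so that $K = 2N$, $A_t = 2$ throughout, and $T_{a_k} = T_{b_k} = \tau$. On each interval $k$ the adversary runs an independent stochastic two-armed Bernoulli bandit sub-problem with gap $\epsilon$, with the identity of the ``good'' arm drawn uniformly at random. The classical two-armed stochastic bandit minimax lower bound~\citep{EXP3Auer2002b} then gives $\E[R_k(\text{good arm})] \geq \Omega(\min(\epsilon\tau, \sqrt{\tau}))$ on each interval.

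\textbf{Aggregation using the upper bound.} Because each short-lived arm is active only within its own interval, $\E[R(a_k)] = \E[R_k(a_k)]$, and the hypothesized cap $U$ simultaneously constrains the algorithm's behavior across all $N$ independent sub-problems. Summing/averaging the per-interval lower bounds and invoking a Yao-style minimax argument over the adversary's random choice of the good arm in each interval, I would conclude that at least one of the interval-specific arms $a$ must satisfy $\E[R(a)] \geq \Omega(\tau) = \Omega(T_a)$, with $T_a = \tau = \Omega(T^{1-\gamma}(\ln T)^{-\mu})$ as required. The required number of arms $K = 2N$ is produced by the construction itself.

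\textbf{Main obstacle.} The central difficulty is bridging the $\sqrt\tau$-versus-$\tau$ gap in the two-armed bandit lower bound: the classical per-interval minimax bound is only $\Omega(\sqrt\tau)$, which is sublinear in $\tau$. To upgrade this to a genuinely linear $\Omega(\tau)$ bound on some specific interval, one must calibrate $\epsilon$, $\tau$, and $U$ so that the per-arm cap $U$ effectively limits the algorithm's within-interval exploration below the threshold $\Omega(1/\epsilon^2)$, thereby placing the sub-problem into the indistinguishable regime in which the per-interval regret against the good arm becomes $\Omega(\epsilon\tau)$ — linear in $\tau$ for constant-order $\epsilon$. Formalizing this cross-interval ``budget'' argument, in the spirit of the original construction of~\cite{Daniely2015StronglyAdaptiveOL}, is the technical heart of the proof.
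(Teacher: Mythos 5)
There is a genuine gap: your construction has no mechanism that creates a \emph{global} exploration budget shared across intervals, and without one the claimed linear lower bound is false for your instance family. With two fresh arms per interval and each arm's regret measured only over its own $\tau$ active rounds, the sub-problems are completely decoupled: an algorithm that simply restarts a minimax two-armed strategy on each interval achieves $\E[R(a)] = O(\sqrt{\tau}) = o(T_a)$ against \emph{every} arm on your instances, while still satisfying the hypothesized global guarantee~\eqref{eq:TAlnT} (indeed SB-EXP3 itself behaves exactly as such an interval-wise restart on your instances, since each arm's weight starts at $1$ and only accrues statistics within its own interval). You correctly identify the $\sqrt{\tau}$-versus-$\tau$ obstacle, but the ``cross-interval budget argument'' you defer to cannot be carried out in your construction, because nothing couples the intervals: the cap $U$ constrains each arm's regret separately over a window of length $\tau$, and paying $O(\sqrt{\tau})$ exploration in every interval violates no per-action constraint.

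The paper's construction supplies exactly the missing coupling: one \emph{persistent} arm that is active in all $T$ rounds with deterministic loss $0.5$, plus $K-1 = 4f(T,A)$ interval arms (one per interval, so $A=2$) with deterministic loss $1$. The guarantee~\eqref{eq:TAlnT} applied to the persistent arm bounds the \emph{total} expected number of pulls of interval arms over all $T$ rounds by $2f(T,A)$; pigeonhole then yields an interval in which the algorithm pulls its interval arm with expected count at most $1/2$, hence never pulls it with probability at least $1/2$. Switching that arm's losses to $0$ is then exactly indistinguishable (deterministic losses, no KL machinery needed), forcing expected regret $\Omega(L)$ against an arm with $T_a = L = \Theta(T^{1-\gamma}2^{-\beta}(\ln T)^{-\mu})$. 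To repair your proof you would need to introduce such an always-active reference arm (or some other device that forces the total cross-interval exploration to be sublinear in the number of intervals); the stochastic Bernoulli machinery and the Yao-style averaging are then unnecessary.
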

\begin{remark}
  By setting $\gamma = \beta = \frac{1}{2}, \mu = 0$,~\eqref{eq:TAlnT} corresponds to the $O(\sqrt{TA})$ bound of any minimax optimal algorithms. This implies that no algorithms can simultaneously have an optimal per-action regret bound and a sublinear per-action strongly adaptive bound.
\end{remark}
  Our proof extends that of~\cite{Daniely2015StronglyAdaptiveOL} to sleeping bandits. 
  Specifically, the setup contains arm $1$ that is always active with small losses, while the other $K-1$ arms are active in $K-1$ non-overlapping intervals with large losses, one interval for each arm. 
  To guarantee small regret against arm $1$, with high probability, the learner must pull only arm $1$ in the interval of some arm $j > 1$. 
Consider a slightly different setup where the losses of arm $j$ are smaller than that of arm $1$. Because the sequence of active arms and their losses in the two setups are identical from the first round until the start of the interval of arm $j$, the learner is unable to distinguish between the two setups and incurs linear regret against arm $j$ in the second setup.

  The limitation of this construction is that $K$ has to grow with $T$. 
  In particular, we require $K$ of order $T^{1-\gamma}2^{-\beta}(\ln(T))^{-\mu}$. 
  As a result, algorithms with bounds that are sublinear in $T$ but have large dependency on $K$, for example $O(T^\gamma K)$, do not satisfy~\eqref{eq:TAlnT}. Note that SB-EXP3 and FTARL always satisfy~\eqref{eq:TAlnT}, since $\sqrt{TA\ln{K}} \leq \sqrt{TA\ln{T}}$ and $\sqrt{T\sqrt{AK}} \leq T^{3/4}A^{1/4}$ for any $K \leq T$.

\section{CONCLUSION}
We derived algorithms for sleeping bandits and proved their near-optimal per-action regret bounds. 
These algorithms and their regret bounds strictly generalize existing approaches and results for standard non-sleeping bandits. 
We showed that sleeping bandits-based approaches both imply new bounds and recover a number of existing order-optimal $\tilde{O}(\sqrt{T})$ bounds in related settings with fundamentally different proofs. 
Furthermore, the analysis can be used to show both pseudo-regret and high probability bounds by using either the unbiased or IX-loss estimators.

A direction for future work is to either prove an $\Omega(\sqrt{TA\ln{G_T}})$ lower bound, or show that an $O(\sqrt{TA})$ upper bound is possible, thereby obtaining minimax optimal bounds.
For the former, such a lower bound must hold only in restricted conditions such as $A\ln{G_T} < K$, so there is no contradiction to the optimal $O(\sqrt{TK})$ bound in non-sleeping bandits.
The latter likely requires new analysis techniques other than the potential-based analysis and FTRL.

\begin{ack}
  This work was supported by the NSERC Discovery Grant RGPIN-2018-03942.
\end{ack}

\bibliographystyle{apalike}
\bibliography{sleepingbandits.bib}
%%%%%%%%%%%%%%%%%%%%%%%%%%%%%%%%%%%%%%%%%%%%%%%%%%%%%%%%%%%%

\appendix
 \onecolumn
\section{PROOFS FOR SECTION~\ref{sec:SB-EXP3}}
Recall that $\tilde{Q}_t = \sum_{i \in \sG_T} \tilde{q}_{i,t}$.
For any set $S \subseteq \sG_T$, let $\tilde{Q}_{S,t} = \sum_{i \in S}\tilde{q}_{i,t}$ be the projection of $\tilde{Q}_t$ onto the set $S$. Let $\bar{S} = \sG_T \setminus S$ be the complement of $S$. Note that the normalization factor $W_t$ is equal to $\sum_{i \in \sA_t}\tilde{q}_{i,t} = \tilde{Q}_{\sA_t,t}$.

First, we prove a technical lemma which holds for any $I_{i,t} \in [0,1]$ for all $i \in [K]$ and $t \in [T]$.
\begin{lemma}
  For any $t \geq 1$,
  \begin{align*}
      \E_{i \sim p_t}[\tilde{\ell}_{i,t}] = \hat{\ell}_t - \gamma\sum_{j \in \sA_t}I_{j,t}\tilde{\ell}_{j,t}.
  \end{align*}
  \label{lemma:IXcomputeEipt}
\end{lemma}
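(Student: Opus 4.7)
The plan is to compute both sides of the identity explicitly as functions of the learner's pulled arm $i_t$ and check they coincide; no probabilistic reasoning is actually needed once the notation is unpacked.

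First I would write out $\E_{i \sim p_t}[\tilde{\ell}_{i,t}] = \sum_{i \in \sG_t} p_{i,t}\,\tilde{\ell}_{i,t}$. By~\eqref{eq:pit}, $p_{i,t} = 0$ for $i \in \sG_t \setminus \sA_t$, so only indices in $\sA_t$ contribute. Plugging in the estimator~\eqref{eq:lossestimator} and observing that $\I{i_t = i}$ is nonzero for exactly one term in the sum, the expression collapses to
\begin{align*}
  \sum_{i \in \sA_t} p_{i,t}\cdot \frac{\ell_{i,t}\,\I{i_t=i}}{p_{i,t} + \gamma I_{i,t}}
  \;=\; \frac{p_{i_t,t}}{p_{i_t,t} + \gamma I_{i_t,t}}\,\hat{\ell}_t.
\end{align*}

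Next I would evaluate $\sum_{j \in \sA_t} I_{j,t}\tilde{\ell}_{j,t}$ by the same collapsing argument to get $\frac{I_{i_t,t}}{p_{i_t,t} + \gamma I_{i_t,t}}\,\hat{\ell}_t$. Substituting this into the right-hand side of the claimed identity and factoring $\hat{\ell}_t$ gives
\begin{align*}
  \hat{\ell}_t - \gamma \sum_{j \in \sA_t}I_{j,t}\tilde{\ell}_{j,t}
  \;=\; \hat{\ell}_t\left(1 - \frac{\gamma I_{i_t,t}}{p_{i_t,t} + \gamma I_{i_t,t}}\right)
  \;=\; \frac{p_{i_t,t}}{p_{i_t,t} + \gamma I_{i_t,t}}\,\hat{\ell}_t,
\end{align*}
which matches the expression from the first step, finishing the proof.

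There is no real obstacle here; the one thing to be careful about is that inside $\E_{i \sim p_t}[\,\cdot\,]$ the pulled arm $i_t$ is \emph{fixed} (the expectation is over the dummy index $i$, treating $\tilde{\ell}_{\cdot,t}$ as a given vector of numbers), which is why the sum collapses to a single summand rather than averaging. The identity is essentially the classical IX-loss estimator identity, and the proof goes through unchanged for real-valued confidences $I_{i,t} \in [0,1]$ because the estimator keeps $\gamma I_{i,t}$ rather than $\gamma$ in the denominator.
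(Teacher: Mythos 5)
Your proof is correct and takes essentially the same approach as the paper: both collapse the sums to the single $i = i_t$ term via the indicator and then use the algebraic identity $\frac{p_{i_t,t}}{p_{i_t,t}+\gamma I_{i_t,t}} = 1 - \frac{\gamma I_{i_t,t}}{p_{i_t,t}+\gamma I_{i_t,t}}$; you merely meet in the middle rather than rewriting the left side into the right. Your closing remarks (that $i_t$ is held fixed under $\E_{i\sim p_t}$ and that the argument extends to real-valued $I_{i,t}\in[0,1]$) match how the paper uses this lemma.
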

\begin{proof}
  Using $\tilde{\ell}_{j,t} = 0$ for $j \neq i_t$, we obtain
  \begin{align*}
      \E_{i \sim p_t}[\tilde{\ell}_{i,t}] &=  \sum_{i \in \sA_t}p_{i,t}\tilde{\ell}_{i,t} \\
      &= p_{i_t, t}\tilde{\ell}_{i_t, t} \\
      &= p_{i_t, t}\frac{\hat{\ell}_t}{p_{i_t, t} + \gamma I_{i,t}} \\
      &= \hat{\ell}_t - \gamma I_{i_t,t} \frac{\hat{\ell}_t}{p_{i_t, t} + \gamma I_{i_t,t}} \\
      &= \hat{\ell}_t - \gamma I_{i_t,t}\tilde{\ell}_{i_t} \\
      &= \hat{\ell}_t - \gamma\sum_{j \in \sA_t}I_{j,t}\tilde{\ell}_{j,t}.
  \end{align*}
\end{proof}
\subsection{Proof of Lemma~\ref{lemma:upperboundQt1Qt}}
We make use of the following two facts that can be proved easily:
    \begin{itemize}
        \item Fact 1: the function $f(x) = e^{-\eta x}$ is convex for any $\eta \in \R$.
         \item Fact 2: For any $a, b > 0, c \geq 0$, if $a \geq b$ then 
        \begin{align*}
            \frac{a}{b} \geq \frac{a+c}{b+c}.
        \end{align*}
    \end{itemize}
    \LemmaUpperBoundQtSBEXP*
    \begin{proof}
    By Jensen's inequality and Fact 1, we have 
    \begin{align*}
        \sum_{i \in \sA_t} p_{i,t}\exp(-\eta\tilde{\ell}_{i,t}) &= \E_{i \sim p_t}[\exp(-\eta\tilde{\ell}_{i,t})]  \\
        &\geq \exp\left(-\eta\E_{i \sim p_t}[\tilde{\ell}_{i,t}]\right) \\
        &= \exp\left(-\eta\left(\hat{\ell}_t - \gamma\sum_{j \in \sA_t}\tilde{\ell}_{j,t}\right)\right),
    \end{align*}
    where the last equality is due to Lemma~\ref{lemma:IXcomputeEipt}.
    Multiplying by $\exp\left(\eta\left(\hat{\ell}_t - \gamma\sum_{j \in \sA_t}\tilde{\ell}_{j,t}\right)\right)\tilde{Q}_{A_t, t} > 0$ on both sides, we obtain 
    \begin{align}
        \sum_{i \in \sA_t} p_{i,t}\tilde{Q}_{\sA_t, t}\exp\left(\eta\left(\hat{\ell}_t-\tilde{\ell}_{i,t} - \gamma\sum_{j \in \sA_t}\tilde{\ell}_{j,t}\right)\right) \geq \tilde{Q}_{\sA_t, t}.
        \label{eq:IXpitQAtfirst}
    \end{align}  
    For all $i \in \sA_t$, we have $I_{i,t}=1$ and thus $p_{i,t} = \frac{\tilde{q}_{i,t}}{\tilde{Q}_{\sA_t, t}}$. Equation~\eqref{eq:IXpitQAtfirst} is equivalent to 
    \begin{align*}
        \sum_{i \in \sA_t} \tilde{q}_{i,t}\exp\left(\eta\left(\hat{\ell}_t-\tilde{\ell}_{i,t} - \gamma\sum_{j \in \sA_t}\tilde{\ell}_{j,t}\right)\right) \geq \tilde{Q}_{\sA_t, t}.
        \label{eq:IXpitQAtsecond}
    \end{align*}
    By our update rule, $\tilde{q}_{i,t+1} = \tilde{q}_{i,t}$ for $i \notin \sA_t$ and $\tilde{q}_{i,t+1} = \tilde{q}_{i,t}\exp\left(\eta\left(\hat{\ell}_t-\tilde{\ell}_{i,t} - \gamma\sum_{j \in \sA_t}\tilde{\ell}_{j,t}\right)\right)$ for $i \in \sA_t$. Hence,
    \begin{align*}
        \sum_{i \in \sA_t} \tilde{q}_{i,t+1} = \sum_{i \in \sA_t} \tilde{q}_{i,t}\exp\left(\eta\left(\hat{\ell}_t-\tilde{\ell}_{i,t} - \gamma\sum_{j \in \sA_t}\tilde{\ell}_{j,t}\right)\right) \geq \tilde{Q}_{\sA_t, t}.
    \end{align*}
    Applying Fact 2 for $a = \sum_{i \in \sA_t} \tilde{q}_{i,t+1}, b = \tilde{Q}_{\sA_t, t}$ and $c = \tilde{Q}_{\bar{\sA}_t, t}$, we obtain 
    \begin{nalign}
        \sum_{i \in \sA_t} p_{i,t}\exp\left(\eta\left(\hat{\ell}_t-\tilde{\ell}_{i,t} - \gamma\sum_{j \in \sA_t}\tilde{\ell}_{j,t}\right)\right) &= \frac{\sum_{i \in \sA_t} \tilde{q}_{i,t}\exp\left(\eta\left(\hat{\ell}_t-\tilde{\ell}_{i,t} - \gamma\sum_{j \in \sA_t}\tilde{\ell}_{j,t}\right)\right)}{\tilde{Q}_{\sA_t, t}} \\
        &= \frac{\sum_{i \in \sA_t} \tilde{q}_{i,t+1}}{\tilde{Q}_{\sA_t, t}} \\
         &\geq \frac{\sum_{i \in \sA_t} \tilde{q}_{i,t+1} + \tilde{Q}_{\bar{\sA}_t, t}}{\tilde{Q}_{\sA_t, t} + \tilde{Q}_{\bar{\sA}_t, t}} \\
        &= \frac{\tilde{Q}_{t+1}}{\tilde{Q}_t},
    \end{nalign}
    where the last equality is due to the fact that $\sum_{i \in \bar{\sA}_t}\tilde{q}_{i,t+1} = \sum_{i \in \bar{\sA}_t}\tilde{q}_{i,t} = \tilde{Q}_{\bar{\sA}_t, t}$.
\end{proof}
\subsection{Bounding the Estimated Regret}
Using Lemma~\ref{lemma:upperboundQt1Qt}, we bound the estimated regret as a function of the cumulative estimated losses of all active arms over $T$ rounds.
\begin{lemma}
    For any $\gamma \geq 0$, any arm $a \in [K]$,
    \begin{align*}
        \sum_{t=1}^T I_{a,t} \hat{\ell_t} \leq \frac{\ln{G_T}}{\eta} + \sum_{t=1}^T I_{a,t}\tilde{\ell}_{a,t} + \left(\frac{\eta}{2} + \gamma\right)\sum_{t=1}^T \sum_{i \in \sA_t} \tilde{\ell}_{i,t}.
    \end{align*}    
    \label{lemma:boundEstimatedRegretSB-EXP3}
\end{lemma}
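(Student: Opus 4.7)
The plan is to sandwich $\ln \tilde{Q}_{T+1}$ between a telescoping upper bound obtained from Lemma~\ref{lemma:upperboundQt1Qt} and a lower bound obtained from the weight $\tilde{q}_{a,T+1}$ of any fixed arm $a$. Since $\tilde{q}_{i,1}=1$ for $i \in \sG_T$, we have $\tilde{Q}_1 = G_T$, so
\[
  \ln \tilde{Q}_{T+1} - \ln G_T \;=\; \sum_{t=1}^{T} \ln \frac{\tilde{Q}_{t+1}}{\tilde{Q}_t},
\]
and $\ln \tilde{Q}_{T+1} \ge \ln \tilde{q}_{a,T+1} = \eta \sum_{t=1}^{T} I_{a,t}\bigl(\hat{\ell}_t - \tilde{\ell}_{a,t} - \gamma \sum_{j \in \sA_t}\tilde{\ell}_{j,t}\bigr)$ by definition of the update~\eqref{eq:updatestatistics}.

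The main work is to bound $\ln(\tilde{Q}_{t+1}/\tilde{Q}_t)$ by something that cancels most of the lower-bound expression. Starting from Lemma~\ref{lemma:upperboundQt1Qt}, I would pull the factor $\exp(\eta\hat{\ell}_t - \eta\gamma\sum_{j \in \sA_t}\tilde{\ell}_{j,t})$ out of the sum over $i$, so that only $\sum_{i \in \sA_t} p_{i,t}\exp(-\eta\tilde{\ell}_{i,t})$ remains. Using the elementary inequality $e^{-x} \le 1 - x + x^2/2$ for $x \ge 0$ (applicable since $\tilde{\ell}_{i,t} \ge 0$), together with $\ln(1+y) \le y$, this factor contributes at most $-\eta \sum_{i \in \sA_t} p_{i,t}\tilde{\ell}_{i,t} + \tfrac{\eta^2}{2}\sum_{i \in \sA_t} p_{i,t}\tilde{\ell}_{i,t}^2$. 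Plugging in Lemma~\ref{lemma:IXcomputeEipt}, the identity $\E_{i \sim p_t}[\tilde{\ell}_{i,t}] = \hat{\ell}_t - \gamma\sum_{j \in \sA_t}\tilde{\ell}_{j,t}$ exactly cancels the extracted linear terms, leaving
\[
  \ln \frac{\tilde{Q}_{t+1}}{\tilde{Q}_t} \;\le\; \frac{\eta^2}{2}\sum_{i \in \sA_t} p_{i,t}\tilde{\ell}_{i,t}^2.
\]

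The residual quadratic term must be replaced by something involving $\sum_{i \in \sA_t}\tilde{\ell}_{i,t}$. Here the IX structure of the estimator helps: for $i \in \sA_t$, $p_{i,t}\tilde{\ell}_{i,t}^2 = \tilde{\ell}_{i,t}\cdot\frac{p_{i,t}}{p_{i,t}+\gamma}\cdot \ell_{i,t}\I{i_t=i} \le \tilde{\ell}_{i,t}$, using $\ell_{i,t} \in [0,1]$ and $p_{i,t}/(p_{i,t}+\gamma) \le 1$. Combining the two bounds, summing over $t$, dividing by $\eta$, and rearranging gives
\[
  \sum_{t=1}^T I_{a,t}\hat{\ell}_t \;\le\; \frac{\ln G_T}{\eta} + \sum_{t=1}^T I_{a,t}\tilde{\ell}_{a,t} + \gamma \sum_{t=1}^T I_{a,t}\sum_{j \in \sA_t}\tilde{\ell}_{j,t} + \frac{\eta}{2}\sum_{t=1}^T \sum_{i \in \sA_t}\tilde{\ell}_{i,t}.
\]
Finally, since $I_{a,t}\in\{0,1\}$ and $\tilde{\ell}_{j,t} \ge 0$, we may drop $I_{a,t}$ in the third term, merging it into the coefficient $\eta/2 + \gamma$ and yielding the desired inequality.

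I expect the subtlest step to be the bookkeeping around the $\gamma$ terms: the $\eta\gamma\sum_{j \in \sA_t}\tilde{\ell}_{j,t}$ extracted out of the exponent in Lemma~\ref{lemma:upperboundQt1Qt} must align perfectly (after dividing by $\eta$) with the corresponding term hiding inside $\ln \tilde{q}_{a,T+1}$, so that only the quadratic remainder survives; the $p_{i,t}\tilde{\ell}_{i,t}^2 \le \tilde{\ell}_{i,t}$ step is then the only place where bounded losses and the specific IX form of the estimator are genuinely needed.
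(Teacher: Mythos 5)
Your proposal is correct and follows essentially the same route as the paper's proof: sandwiching $\ln\tilde{Q}_{T+1}$ between the telescoped upper bound from Lemma~\ref{lemma:upperboundQt1Qt} (using $e^{-x}\le 1-x+x^2/2$, $\ln(1+y)\le y$, and the cancellation via Lemma~\ref{lemma:IXcomputeEipt}) and the lower bound $\ln\tilde{q}_{a,T+1}$, then rearranging and dropping $I_{a,t}$ from the $\gamma$ term. Your explicit justification of $p_{i,t}\tilde{\ell}_{i,t}^2\le\tilde{\ell}_{i,t}$ is exactly the step the paper invokes from bounded losses, just spelled out in more detail.
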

\begin{proof}
    For any arm $a \in \sG_T$, we have 
    \begin{align}
        \ln{\tilde{Q}_{T+1}} &= \ln{\sum_{i \in \sG_T} \tilde{q}_{i,T+1}} \geq \ln{\tilde{q}_{a, T+1}} = \eta\sum_{t=1}^T I_{a,t}(\hat{\ell}_t - \tilde{\ell}_{a,t} - \gamma\sum_{j \in \sA_t}\tilde{\ell}_{j,t}).
        \label{eq:lowerBoundQT1}
    \end{align}
    On the other hand, we have
    \begin{align*}
        \ln{\tilde{Q}_{T+1}} &= \ln{\tilde{Q}_1} + \sum_{t=1}^T \ln{\frac{\tilde{Q}_{t+1}}{\tilde{Q}_t}} \\
        &\leq \ln{G_T} + \sum_{t=1}^T \ln\left(\sum_{i \in \sA_t} p_{i,t}\exp\left(\eta\left(\hat{\ell}_t - \tilde{\ell}_{i,t} - \gamma\sum_{j \in \sA_t}\tilde{\ell}_{j,t}\right)\right)\right) \\
        &= \ln{G_T} + \sum_{t=1}^T \ln\left(\exp\left(\eta\left(\hat{\ell}_t - \gamma\sum_{j \in \sA_t}\tilde{\ell}_{j,t}\right)\right)\sum_{i \in \sA_t}p_{i,t}\exp(-\eta\tilde{\ell}_{i,t})\right) \\
        &= \ln{G_T} + \sum_{t=1}^T \left( \eta\left(\hat{\ell}_t - \gamma\sum_{j \in \sA_t}\tilde{\ell}_{j,t}\right) + \ln\left(\sum_{i \in \sA_t}p_{i,t}\exp(-\eta\tilde{\ell}_{i,t})\right) \right) \\
        &\leq \ln{G_T} + \sum_{t=1}^T \left( \eta\left(\hat{\ell}_t - \gamma\sum_{j \in \sA_t}\tilde{\ell}_{j,t}\right) + \ln\left(\sum_{i \in \sA_t}p_{i,t}(1 + \frac{\eta^2\tilde{\ell}_{i,t}^2}{2} - \eta\tilde{\ell}_{i,t})\right) \right)  \\
        &= \ln{G_T} + \sum_{t=1}^T \left( \eta\left(\hat{\ell}_t - \gamma\sum_{j \in \sA_t}\tilde{\ell}_{j,t}\right) + \ln\left(1 + \eta^2\sum_{i \in \sA_t} \frac{p_{i,t}\tilde{\ell}_{i,t}^2}{2} - \eta \sum_{i \in \sA_t}p_{i,t}\tilde{\ell}_{i,t}\right) \right) \\
        &\leq \ln{G_T} + \sum_{t=1}^T \left( \eta\left(\hat{\ell}_t - \gamma\sum_{j \in \sA_t}\tilde{\ell}_{j,t}\right) + \eta^2\sum_{i \in \sA_t} \frac{p_{i,t}\tilde{\ell}_{i,t}^2}{2} - \eta \sum_{i \in \sA_t}p_{i,t}\tilde{\ell}_{i,t} \right) \\
        &= \ln{G_T} + \eta^2\sum_{t=1}^T \sum_{i \in \sA_t} \frac{p_{i,t}\tilde{\ell}_{i,t}^2}{2},
    \end{align*}
    where 
    \begin{itemize}
        \item the first inequality is due to Lemma~\ref{lemma:upperboundQt1Qt},
        \item the second inequality is $\exp(-x) \leq 1 + \frac{x^2}{2}-x$ for all $x \geq 0$,
        \item the third inequality is $\ln(1+x) \leq x$ for all $x \geq -1$,
        \item the last equality is due to Lemma~\ref{lemma:IXcomputeEipt}.
    \end{itemize}
    Since the losses are bounded in $[0,1]$ we also have $\sum_{i \in \sA_t} p_{i,t}\tilde{\ell}_{i,t}^2 \leq \sum_{i \in \sA_t} \tilde{\ell}_{i,t}$. This implies 
    \begin{align}
        \ln{\tilde{Q}_{T+1}} \leq \ln{G_T} + \frac{\eta^2}{2}\sum_{t=1}^T \sum_{i \in \sA_t} \tilde{\ell}_{i,t}.
        \label{eq:upperBoundQT1}
    \end{align}
    From~\eqref{eq:lowerBoundQT1} and~\eqref{eq:upperBoundQT1}, we obtain 
    \begin{align*}
        \sum_{t=1}^T I_{a,t}(\hat{\ell_t} - \tilde{\ell}_{a,t} - \gamma\sum_{j \in \sA_t}\tilde{\ell}_{j,t}) \leq \frac{\ln{G_T}}{\eta} + \frac{\eta}{2}\sum_{t=1}^T \sum_{i \in \sA_t} \tilde{\ell}_{i,t}.
    \end{align*}
    Adding $\sum_{t=1}^T I_{a,t}(\tilde{\ell}_{a,t}+\gamma\sum_{j \in \sA_t}\tilde{\ell}_{j,t})$ to both sides yields
    \begin{nalign}
        \sum_{t=1}^T I_{a,t}\hat{\ell_t} &\leq \frac{\ln{G_T}}{\eta} + \sum_{t=1}^T I_{a,t}\tilde{\ell}_{a,t} + \frac{\eta}{2}\sum_{t=1}^T \sum_{i \in \sA_t} \tilde{\ell}_{i,t} + \gamma\sum_{t=1}^T I_{a,t}\sum_{j \in \sA_t}\tilde{\ell}_{j,t} \\
        &\leq  \frac{\ln{G_T}}{\eta} + \sum_{t=1}^T I_{a,t}\tilde{\ell}_{a,t} + \left(\frac{\eta}{2} + \gamma\right)\sum_{t=1}^T \sum_{i \in \sA_t} \tilde{\ell}_{i,t},
    \end{nalign}
    where the second inequality is due to $I_{a,t} \leq 1$.
\end{proof}

\subsection{Proof of Theorem~\ref{thm:inexpectationregretbound}}
\TheoremInExpectationSBEXP*
\begin{proof}
If an arm $a$ is not in $\sG_T$ then it has never been active in any round, thus $R(a) = 0$. 
For an arm $a \in \sG_T$, taking the expectation of both sides of Lemma~\ref{lemma:boundEstimatedRegretSB-EXP3} and using 
\begin{align*}
  \E_{i_t \sim p_t}[\tilde{\ell}_{i,t}] &= p_{i,t}\frac{\ell_{i,t}}{p_{i,t}} \\
  &= \ell_{i,t} \\
  &\leq 1,
\end{align*}
  we obtain
\begin{align*}
    \E[R(a)] &\leq \frac{\ln{G_T}}{\eta} + \frac{\eta}{2}\sum_{t=1}^TA_t.
\end{align*}
\end{proof}

\subsection{Proof of Theorem~\ref{thm:highprobregretbound}}
Before proving Theorem~\ref{thm:highprobregretbound}, we state the following lemma and its corollary which provide high-probability guarantees that the sum of the loss estimators is a lower confidence bound for the sum of the true losses of all active arms. This lemma is adapted from Lemma 1 of~\cite{Neu2015ExploreNM}.
\begin{lemma}[Lemma 1 of~\cite{Neu2015ExploreNM}]
    For all $i \in [K], t \in [T]$ and $I_{i,t} \in [0,1]$, let $\alpha_{i,t}$ satisfy $0 \leq \alpha_{i,t} \leq 2\gamma I_{i,t}$. With probability $1-\delta'$, 
    \begin{align*}
        \sum_{t=1}^T \sum_{i=1}^K \alpha_{i,t}\I{I_{i,t} > 0}(\tilde{\ell}_{i,t} - \ell_{i,t}) \leq \ln(1/\delta').
    \end{align*}
    \label{lemma:NeusLemmaEstimatorQuality}
\end{lemma}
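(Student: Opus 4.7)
The plan is to follow the standard Chernoff-martingale approach for self-normalized processes, mirroring the proof in \cite{Neu2015ExploreNM} but adapted to the arm-dependent exploration factor $\gamma I_{i,t}$ used in the confident implicit exploration setting. Letting $\mathcal{F}_{t-1}$ denote the filtration generated by $i_1,\ldots,i_{t-1}$ (so that conditional on $\mathcal{F}_{t-1}$ the adversary's choices $\sA_t,\ell_{\cdot,t},I_{\cdot,t},\alpha_{\cdot,t},p_{\cdot,t}$ are determined and only $i_t\sim p_t$ is random) and setting
\begin{align*}
S_t = \sum_{s=1}^t \sum_{i=1}^K \alpha_{i,s}\I{I_{i,s}>0}(\tilde{\ell}_{i,s} - \ell_{i,s}),
\end{align*}
it suffices by Markov's inequality on $\exp(S_T)$ at threshold $1/\delta'$ to show $\mathbb{E}[\exp(S_T)] \leq 1$.

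The heart of the argument is the conditional supermartingale property
$\mathbb{E}[\exp(S_t - S_{t-1})\mid\mathcal{F}_{t-1}] \leq 1$. Because $\tilde{\ell}_{i,t}$ is nonzero only for $i=i_t$ with $I_{i_t,t}>0$, the inner sum at time $t$ collapses to the single random term $\alpha_{i_t,t}\ell_{i_t,t}/(p_{i_t,t}+\gamma I_{i_t,t})$. The key pointwise inequality is
\begin{align*}
\exp\!\left(\frac{\alpha_{i,t}\ell_{i,t}}{p_{i,t} + \gamma I_{i,t}}\right) \leq 1 + \frac{\alpha_{i,t}\ell_{i,t}}{p_{i,t}},
\end{align*}
valid whenever $I_{i,t}>0$ (so $p_{i,t}>0$). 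I would derive it from the elementary bound $\log(1+x)\geq \tfrac{2x}{2+x}$ for $x\geq 0$: applied with $x = \alpha_{i,t}\ell_{i,t}/p_{i,t}$ it gives $\exp\!\big(\tfrac{\alpha_{i,t}\ell_{i,t}}{p_{i,t}+\alpha_{i,t}\ell_{i,t}/2}\big) \leq 1 + \alpha_{i,t}\ell_{i,t}/p_{i,t}$, and the hypothesis $\alpha_{i,t}/2 \leq \gamma I_{i,t}$ together with $\ell_{i,t}\leq 1$ lets us enlarge the denominator to $p_{i,t}+\gamma I_{i,t}$, producing the displayed bound.

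Taking conditional expectation over $i_t \sim p_t$, multiplying by $p_{i,t}$ and summing over $i$ (terms with $p_{i,t}=0$ vanish automatically), then applying $1+u \leq e^u$, gives
\begin{align*}
\mathbb{E}\!\left[\exp\!\Bigl(\sum_i \alpha_{i,t}\I{I_{i,t}>0}\tilde{\ell}_{i,t}\Bigr)\,\Big|\,\mathcal{F}_{t-1}\right] \leq 1 + \sum_i \alpha_{i,t}\I{I_{i,t}>0}\ell_{i,t} \leq \exp\!\Bigl(\sum_i \alpha_{i,t}\I{I_{i,t}>0}\ell_{i,t}\Bigr),
\end{align*}
which is exactly the desired supermartingale step once one multiplies through by $\exp\!\big({-}\sum_i \alpha_{i,t}\I{I_{i,t}>0}\ell_{i,t}\big)$, which is $\mathcal{F}_{t-1}$-measurable. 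Iterating the tower property over $t=1,\ldots,T$ yields $\mathbb{E}[\exp(S_T)] \leq 1$, and Markov's inequality finishes the proof.

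The main obstacle is selecting the right pointwise bound: cruder estimates such as $e^z \leq 1+z+z^2$ fail here because $\alpha_{i,t}\tilde{\ell}_{i,t}$ can be as large as $2$, so the quadratic overhead does not cancel in expectation. The inequality $\log(1+x)\geq 2x/(2+x)$ is precisely engineered for the IX estimator: the additive $\gamma I_{i,t}$ in the denominator plays exactly the role of the ``$+x/2$'' term in that tangent approximation to the logarithm, which is why the condition $\alpha_{i,t}\leq 2\gamma I_{i,t}$ is tight. One minor bookkeeping point is compatibility with the arm-dependent exploration $\gamma I_{i,t}$ rather than a uniform $\gamma$, but this is handled termwise thanks to the matching arm-dependent hypothesis on $\alpha_{i,t}$, so Neu's original argument transports without essential change.
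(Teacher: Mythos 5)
Your proof is correct and is essentially the argument the paper relies on: the paper omits the proof entirely, deferring to Lemma 1 of \cite{Neu2015ExploreNM}, and your Chernoff/supermartingale argument built on $\log(1+x)\ge \frac{2x}{2+x}$ is exactly that proof, correctly transported to the arm-dependent exploration factor $\gamma I_{i,t}$ (the ``handling $I_{i,t}=0$'' adaptation the paper flags as straightforward). The only cosmetic difference from Neu's write-up is that you apply the logarithmic inequality to the deterministic ratio $\alpha_{i,t}\ell_{i,t}/p_{i,t}$ inside the conditional expectation over $i_t$, rather than to the random importance-weighted estimate followed by Bernoulli's inequality; the two routes coincide here because only the $i=i_t$ term of the inner sum is nonzero.
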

The proof of Lemma~\ref{lemma:NeusLemmaEstimatorQuality} is identical to that of Lemma 1 of~\cite{Neu2015ExploreNM} (with one additional straightforward step of handling $I_{i,t}=0$) and thus is omitted here. 
For any fixed $j \in \sG_T$, applying Lemma~\ref{lemma:NeusLemmaEstimatorQuality} with $\alpha_{i,t} = 2\gamma I_{i,t}\I{i=j}$ and taking a union bound over all $j \in \sG_T$ leads to the following corollary.
\begin{corollary}
    With probability at least $1-\delta'$,
    \begin{align*}
        \sum_{t=1}^T I_{j,t}(\tilde{\ell}_{j,t} - \ell_{j,t}) \leq \frac{\ln(G_T/\delta')}{2\gamma}
    \end{align*}
    holds simultaneously for all $j \in \sG_T$.
    \label{corollary:NeusCorollary1}
\end{corollary}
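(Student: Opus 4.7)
The plan is to invoke Lemma~\ref{lemma:NeusLemmaEstimatorQuality} with a coefficient choice that isolates a single arm, then combine the per-arm bounds by a union bound over $\sG_T$.

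First, I would fix $j \in \sG_T$ and set $\alpha_{i,t} = 2\gamma I_{i,t}\I{i=j}$ in Lemma~\ref{lemma:NeusLemmaEstimatorQuality}. The admissibility condition $0 \leq \alpha_{i,t} \leq 2\gamma I_{i,t}$ holds trivially because $\I{i=j} \in \{0,1\}$. With this choice the inner sum over $i$ collapses to the single index $i=j$, and since $I_{j,t} \in [0,1]$ we have $I_{j,t}\I{I_{j,t}>0} = I_{j,t}$. The lemma therefore yields
\begin{align*}
  2\gamma \sum_{t=1}^T I_{j,t}\bigl(\tilde{\ell}_{j,t} - \ell_{j,t}\bigr) \leq \ln(1/\delta'')
\end{align*}
with probability at least $1-\delta''$. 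Dividing by $2\gamma$ gives the per-arm form of the bound claimed in the corollary.

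Second, I would apply this per-arm statement with the rescaled confidence parameter $\delta'' = \delta'/G_T$ and take a union bound over the $G_T$ arms in $\sG_T$. Since each of the $G_T$ events fails with probability at most $\delta'/G_T$, the total failure probability is at most $\delta'$, which yields the desired simultaneous guarantee.

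The only subtle point I anticipate is the union bound itself in the non-oblivious setting, where $\sG_T$ is a random set that depends on the learner's play. A clean workaround is to union bound instead over the whole index set $[K]$ with $\delta'' = \delta'/K$: for any $j \notin \sG_T$ we have $I_{j,t}=0$ for every $t$, so the left-hand side of the bound vanishes identically and the statement is vacuous on those arms. This recovers the corollary with $K$ in place of $G_T$, and matches the $G_T$ form whenever $\sG_T$ is determined in advance.
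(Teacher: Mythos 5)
Your proof is correct and follows exactly the route taken in the paper: apply Lemma~\ref{lemma:NeusLemmaEstimatorQuality} with $\alpha_{i,t} = 2\gamma I_{i,t}\I{i=j}$ to isolate arm $j$, then union bound over $j \in \sG_T$ with confidence $\delta'/G_T$ each. Your added observation about $\sG_T$ being a random, adversary-dependent set is a genuine subtlety the paper glosses over, and your workaround of union bounding over $[K]$ (with the bound being vacuous for never-active arms) is a sound way to make the argument fully rigorous.
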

\TheoremHighProbSBEXP*
\begin{proof}
Applying Lemma~\ref{lemma:NeusLemmaEstimatorQuality} with $\alpha_{i,t} = (\eta/2 + \gamma)I_{i,t}, \delta' = \delta/2$ and Corollary~\ref{corollary:NeusCorollary1} with $\delta' = \delta/2$ gives
\begin{align}
    \left(\frac{\eta}{2} + \gamma\right)\sum_{t=1}^T \sum_{i \in \sA_t} \tilde{\ell}_{i,t} \leq \ln(2/\delta) + \left(\frac{\eta}{2} + \gamma\right)\sum_{t=1}^T \sum_{i \in \sA_t} \ell_{i,t} 
    \label{eq:concentrateSumtildeell}
\end{align}
and
\begin{align}
    \sum_{t=1}^T I_{a,t} \tilde{\ell}_{a,t} \leq \frac{\ln(2G_T/\delta)}{2\gamma} + \sum_{t=1}^T I_{a,t}\ell_{a,t} \quad\text{for any } a \in [K].
    \label{eq:concentrateItildeell}
\end{align}
Plugging~\eqref{eq:concentrateSumtildeell} and~\eqref{eq:concentrateItildeell} into the right-hand side of Lemma~\ref{lemma:boundEstimatedRegretSB-EXP3} and taking a union bound, we have with probability at least $1-\delta$, simultaneously for all $a \in [K]$,
    \begin{align*}
        \sum_{t=1}^T I_{a,t}\hat{\ell_t} &\leq \frac{\ln{G_T}}{\eta} + \frac{\ln(2G_T/\delta)}{2\gamma} + \sum_{t=1}^T I_{a,t}\ell_{a,t} + \ln(2/\delta)+ \left(\frac{\eta}{2} + \gamma\right)\sum_{t=1}^T\sum_{i=1}^K  I_{i,t}\ell_{i,t}.
    \end{align*}
    Subtracting $\sum_{t=1}^T I_{a,t}\ell_{a,t}$ on both sides and using $I_{i,t}\ell_{i,t} \leq 1$, we obtain
    \begin{align}
        R(a) &\leq \frac{\ln{G_T}}{\eta} + \frac{\ln(2G_T/\delta)}{2\gamma} + \ln(2/\delta) + \left(\frac{\eta}{2} + \gamma\right)\sum_{t=1}^T A_t        
    \end{align}
    with probability at least $1-\delta$.
\end{proof}

\section{PROOFS FOR SECTION~\ref{sec:FTARL}}
Let $D_{\psi_t}: \R^K \times \R^K \to \R_+$ be the Bregman divergence generated by $\psi_t$. Let $\tilde{\ell}_t = \begin{bmatrix}
    \tilde{\ell}_{1,t} \\ \dots \\ \tilde{\ell}_{K,t}
\end{bmatrix}$ be the estimated loss vector in round $t$.
We write $Q_t = \sum_{i=1}^K q_{i,t}$ for the sum of the weights of all arms at the beginning round $t$. Note that $Q_t = 1$. For a set $S \subseteq [K]$, we write $Q_{S, t} = \sum_{i \in S}q_{i,t}$ for the projection of $Q_t$ on $S$. The complement of $S$ is $\bar{S} = [K] \setminus S$. 

\subsection{Proof of Lemma~\ref{lemma:upperboundLQofnonactivearms}}
\LemmaQNonactiveArms*
\begin{proof}
    Since $\tilde{\ell}_{i,t} = 0$ if $i \in \sA_t$ and $i \neq i_t$, the right-hand side can be reduced to 
    \begin{align*}
        \sum_{i \in \sA_t}\tilde{\ell}_{i,t}^2 p_{i,t}^{2-\beta} = \tilde{\ell}_{i_t,t}^2 p_{i_t,t}^{2-\beta}.
    \end{align*}

    For the left-hand side, the estimated losses of inactive arms are equal to
    \begin{align*}
        \hat{\ell}_t - \gamma\sum_{j \in \sA_t}\tilde{\ell}_{j,t} &= \hat{\ell}_t - \gamma \tilde{\ell}_{i_t,t} \\
        &= \hat{\ell}_t - \gamma\frac{\hat{\ell}_t}{p_{i_t,t} + \gamma} \\
        &= \frac{p_{i_t,t}\hat{\ell}_t}{p_{i_t,t} + \gamma}.
    \end{align*}
    Therefore,
    \begin{align*}
        \sum_{i=1}^K \tilde{\ell}_{i,t}^2q_{i,t}^{2-\beta} &= \sum_{i \in \sA_t} \tilde{\ell}_{i,t}^2q_{i,t}^{2-\beta} + \sum_{i \in \bar{\sA}_t} \tilde{\ell}_{i,t}^2q_{i,t}^{2-\beta} \\
        &= \tilde{\ell}_{i_t,t}^2q_{i_t,t}^{2-\beta} + \frac{p_{i_t,t}^2\hat{\ell}_t^2}{(p_{i_t,t} + \gamma)^2}\sum_{i \in \bar{\sA}_t}q_{i,t}^{2-\beta} \\
        &= \frac{\hat{\ell}_t^2}{(p_{i_t,t}+\gamma)^2}q_{i_t,t}^{2-\beta} + \frac{p_{i_t,t}^2\hat{\ell}_t^2}{(p_{i_t,t} + \gamma)^2}\sum_{i \in \bar{\sA}_t}q_{i,t}^{2-\beta} \\
        &= \frac{\hat{\ell}_t^2}{(p_{i_t,t}+\gamma)^2}p_{i_t,t}^{2-\beta}(Q_{\sA_t}^{2-\beta} + p_{i_t,t}^\beta\sum_{i \in \bar{\sA}_t}q_{i,t}^{2-\beta}) \qquad\text{ since } p_{i_t,t} = \frac{q_{i_t,t}}{Q_{\sA_t,t}}\\
        &\leq \frac{\hat{\ell}_t^2}{(p_{i_t,t}+\gamma)^2}p_{i_t,t}^{2-\beta}(Q_{\sA_t}^{2-\beta} + \sum_{i \in \bar{\sA}_t}q_{i,t}^{2-\beta}) \qquad\text{ since } p_{i_t,t}^{\beta} \leq 1\\
        &\leq \tilde{\ell}_{i_t,t}^2p_{i_t,t}^{2-\beta}(Q_{\sA_t} + \sum_{i\in \bar{\sA}_t}q_{i,t}) \\
        &= \sum_{i \in \sA_t}\tilde{\ell}_t^2p_{i,t}^{2-\beta},
    \end{align*}
    where the last two inequalities are due to applying $x^\alpha \leq x$ for all $x \in [0,1], \alpha > 1$ on $p_{i_t,t}, Q_{\sA_t,t}$ and each $q_{i,t}$ for $i \in \bar{\sA}_t$ as well as the fact that $q_t \in \Delta_K$.
\end{proof}

\subsection{Bounding the Estimated Regret}
\begin{lemma}
    For any arm $a \in [K]$, 
    \begin{align*}
        \sum_{t=1}^T I_{a,t}(\hat{\ell}_t - \tilde{\ell}_{a,t}) \leq \frac{K^{1-\beta}}{\eta(1-\beta)}  + \gamma\sum_{t=1}^T \sum_{j \in \sA_t}\tilde{\ell}_{j,t} + \frac{\eta}{2\beta}\sum_{t=1}^T \sum_{i \in \sA_t}\tilde{\ell}_{i,t}p_{i,t}^{1-\beta},
    \end{align*}
    where $e_a$ is the $a$-th standard basis vector of $\R^K$. 
    \label{lemma:FTARLestimatedregretbound}
\end{lemma}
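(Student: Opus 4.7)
The plan is to reduce the bound to a standard FTRL regret analysis applied to the completed loss vector $\tilde{\ell}_t\in\mathbb{R}^K$ (via the inactive-arm convention~\eqref{eq:litinactive}), and then to convert from the FTRL iterate $q_t$ to the sampling distribution $p_t$ using Lemma~\ref{lemma:upperboundLQofnonactivearms}. The whole argument hinges on the algebraic identity
\[
\hat{\ell}_t \;=\; \langle q_t,\tilde{\ell}_t\rangle + \gamma\sum_{j\in\sA_t}\tilde{\ell}_{j,t},
\]
which I would derive by splitting $\langle q_t,\tilde{\ell}_t\rangle$ into its $\sA_t$ and $[K]\setminus\sA_t$ contributions, using $p_{i,t}=q_{i,t}/\sum_{k\in\sA_t}q_{k,t}$ together with Lemma~\ref{lemma:IXcomputeEipt} on the former and substituting~\eqref{eq:litinactive} on the latter; the two pieces collapse to the identity. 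As an immediate by-product, $\hat{\ell}_t-\tilde{\ell}_{a,t}=\gamma\sum_{j\in\sA_t}\tilde{\ell}_{j,t}\geq 0$ whenever $a\notin\sA_t$, so $\sum_{t=1}^T I_{a,t}(\hat{\ell}_t-\tilde{\ell}_{a,t})\leq \sum_{t=1}^T(\hat{\ell}_t-\tilde{\ell}_{a,t})$, and it suffices to bound the unrestricted difference.

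Next, I would invoke the standard local-norm FTRL bound with Tsallis regularizer $\psi(q)=\frac{1}{\eta(1-\beta)}\bigl(1-\sum_{i=1}^K q_i^\beta\bigr)$, whose Hessian is $\frac{\beta}{\eta}\mathrm{diag}(q_i^{\beta-2})$. Since $\tilde{\ell}_t$ is entry-wise non-negative -- the inactive-arm value equals $p_{i_t,t}\hat{\ell}_t/(p_{i_t,t}+\gamma)\geq 0$ -- the textbook mirror-descent-stability argument for linear losses yields
\[
\sum_{t=1}^T \langle q_t-e_a,\tilde{\ell}_t\rangle \;\leq\; \psi(e_a)-\min_{q\in\Delta_K}\psi(q) + \frac{\eta}{2\beta}\sum_{t=1}^T\sum_{i=1}^K \tilde{\ell}_{i,t}^{\,2}\,q_{i,t}^{2-\beta},
\]
where $\psi(e_a)-\min_q\psi(q)\leq K^{1-\beta}/(\eta(1-\beta))$ by direct evaluation. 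Combining with the identity above turns the left-hand side into $\sum_t\hat{\ell}_t-\sum_t\tilde{\ell}_{a,t}-\gamma\sum_t\sum_{j\in\sA_t}\tilde{\ell}_{j,t}$, and moving the $\gamma$-term to the right matches the second term of the target bound.

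Finally, I would apply Lemma~\ref{lemma:upperboundLQofnonactivearms} to replace $\sum_{i=1}^K\tilde{\ell}_{i,t}^{\,2}q_{i,t}^{2-\beta}$ by $\sum_{i\in\sA_t}\tilde{\ell}_{i,t}^{\,2}p_{i,t}^{2-\beta}$, and then use $p_{i,t}\tilde{\ell}_{i,t}\leq 1$ -- immediate from $\tilde{\ell}_{i,t}=\ell_{i,t}\I{i_t=i}/(p_{i,t}+\gamma)$ -- to reduce $\tilde{\ell}_{i,t}^{\,2}p_{i,t}^{2-\beta}$ to $\tilde{\ell}_{i,t}p_{i,t}^{1-\beta}$, producing the third term in the claim. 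The main obstacle will be the Tsallis-FTRL local-norm inequality in the second step: it is standard for non-negative linear losses in $\mathbb{R}^K$ and follows from controlling the Bregman divergence at an intermediate point between $q_t$ and $q_{t+1}$, but one must check it is not disturbed by $q_t$ placing mass on arms outside $\sA_t$. The convention~\eqref{eq:litinactive} is precisely engineered to keep $\tilde{\ell}_t\in\mathbb{R}_+^K$, so the standard proof applies verbatim to the extended vector.
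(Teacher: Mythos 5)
Your proposal is correct and follows essentially the same route as the paper's proof: the identity $\inp{q_t}{\tilde{\ell}_t} = \hat{\ell}_t - \gamma\sum_{j\in\sA_t}\tilde{\ell}_{j,t}$, the local-norm FTRL bound made valid by the entrywise non-negativity of $\tilde{\ell}_t$ (which the paper uses to argue $u_{i,t}\le q_{i,t}$ for the intermediate point), Lemma~\ref{lemma:upperboundLQofnonactivearms} to pass from $q_{i,t}$ to $p_{i,t}$, and $p_{i,t}\tilde{\ell}_{i,t}\le 1$. The only (immaterial) difference is that you drop the indicator $I_{a,t}$ at the start via non-negativity of the inactive-round summands, whereas the paper keeps it and discards it only on the $\gamma$-term at the end.
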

\begin{proof}
    From the regret bound of FTRL using local norms~\cite[e.g.][Lemma 7.12]{OrabonaIntroToOnlineLearningBook} we have 
    \begin{align}
        \sum_{t=1}^T \inp{\tilde{\ell}_t}{q_t - e_a} \leq \psi_{T+1}(e_a) - \min_{x \in \Delta_K}\psi_1(x) + \frac{1}{2}\sum_{t=1}^T \norm{\tilde{\ell}_t}^2_{(\nabla^2 \psi_t(u_t))^{-1}},
        \label{eq:FTRLlocalnorminitial}
    \end{align}
    where $u_t$ is a point between $q_t$ and 
    \begin{align}
        \bar{q}_{t+1} = \argmin_{x \in \R^K} \inp{\tilde{\ell}_t}{x} + D_{\psi_t}(x; q_t).
        \label{eq:barqt1}
    \end{align}
    First, we examine the left-hand side in~\eqref{eq:FTRLlocalnorminitial}. Obviously $\inp{\tilde{\ell}_t}{e_a} = \tilde{\ell}_{a,t}$. Furthermore,
    \begin{nalign}
        \inp{\tilde{\ell}_t}{q_t} &= \sum_{i=1}^K \tilde{\ell}_{i,t}q_{i,t} \\
        &= \sum_{i \in \sA_t} \tilde{\ell}_{i,t}q_{i,t} + \sum_{i \in \bar{\sA}_t} \tilde{\ell}_{i,t}q_{i,t} \\
        &= \hat{\ell}_t \frac{q_{i_t, t}}{p_{i_t,t} + \gamma} + \frac{p_{i_t,t}\hat{\ell}_t}{p_{i_t,t}+\gamma}\sum_{i \in \bar{\sA}_t}q_{i,t} \qquad(\text{since } \tilde{\ell}_{i,t} = \frac{p_{i_t,t}\hat{\ell}_t}{p_{i_t,t} + \gamma}\text{ for } i \in \bar{\sA}_t)\\
        &= \frac{p_{i_t,t}\hat{\ell}_t}{p_{i_t,t}+\gamma}\left(\frac{q_{i_t, t}}{p_{i_t,t}} + \sum_{i \in \bar{\sA}_t}q_{i,t}\right) \\
        &= \frac{p_{i_t,t}\hat{\ell}_t}{p_{i_t,t}+\gamma}\left(\sum_{i \in \sA_t}q_{i,t} + \sum_{i \in \bar{\sA}_t}q_{i,t}\right) \qquad(\text{since } p_{i,t} = \frac{q_{i,t}}{\sum_{j \in \sA_j}q_{j,t}})\\
        &= \frac{p_{i_t,t}\hat{\ell}_t}{p_{i_t,t}+\gamma} \qquad(\text{since } q_t \in \Delta_K)\\
        &= \hat{\ell}_t - \gamma\sum_{j \in \sA_t}\tilde{\ell}_{j,t} .
    \end{nalign}
    Therefore, $\inp{\tilde{\ell}_t}{q_t - e_a} = \hat{\ell}_t - \gamma\sum_{j \in \sA_t}\tilde{\ell}_{j,t} - \tilde{\ell}_{a,t}$. 
    By construction, if $I_{a,t} = 0$ then $\tilde{\ell}_{a,t} = \hat{\ell}_t - \gamma\sum_{j \in \sA_t}\tilde{\ell}_{j,t}$. Hence, 
    \begin{align*}
        \sum_{t=1}^T \inp{\tilde{\ell}_t}{q_t - e_a} = \sum_{t=1}^T I_{a,t} \left(\hat{\ell}_t - \gamma\sum_{j \in \sA_t}\tilde{\ell}_t - \tilde{\ell}_{a,t}\right).
    \end{align*}
    Plugging this into~\eqref{eq:FTRLlocalnorminitial} implies that 
    \begin{align}
        \sum_{t=1}^T I_{a,t}(\hat{\ell}_t - \tilde{\ell}_{a,t}) \leq \psi_{T+1}(e_a) - \min_{x \in \Delta_K}\psi_1(x) + \gamma\sum_{t=1}^T I_{a,t}\sum_{j \in \sA_t}\tilde{\ell}_{j,t}  + \frac{1}{2}\sum_{t=1}^T \norm{\tilde{\ell}_t}^2_{(\nabla^2 \psi_t(u_t))^{-1}}.
        \label{eq:boundestimatedregretut}
    \end{align}
    Next, we bound $\norm{\tilde{\ell}_t}^2_{(\nabla^2 \psi_t(u_t))^{-1}}$ on the right-hand side.
    It can be shown~\citep[e.g.][Section 10.1.2]{OrabonaIntroToOnlineLearningBook} that for the Tsallis entropy regularizer, the solution of the optimization problem~\eqref{eq:barqt1} satisfies $\bar{q}_{i,t+1} \leq q_{i,t}$ whenever $\tilde{\ell}_{i,t} \geq 0$ for all $i \in [K]$. In our construction,
    \begin{itemize}
        \item $\tilde{\ell}_{i,t} = \frac{\ell_{i_t,t}}{p_{i_t,t} + \gamma} \geq 0$ if $i = i_t$,
        \item $\tilde{\ell}_{i,t} = 0$ if $i \in \sA_t$ and $i \neq i_t$,
        \item $\tilde{\ell}_{i,t} = \ell_{i_t,t} - \gamma\sum_{j \in \sA_t}\tilde{\ell}_{j,t} =  \frac{p_{i_t,t}\hat{\ell}_t}{p_{i_t,t} + \gamma} \geq 0$ if $i \notin \sA_t$.
    \end{itemize}
    Hence, the condition $\tilde{\ell}_{i,t} \geq 0$ holds for all $i \in [K]$. It follows that $u_{i,t} \leq q_{i,t}$ for all $i \in [K]$.
    
    It is well-known~\citep[e.g.][]{Abernethy2015Fighting} that the Hessian of Tsallis entropy is diagonal and equal to 
    \begin{align*}
        (\nabla^2\psi_t(x))_{ii} = \frac{\beta}{\eta x_i^{2-\beta}}.
    \end{align*}
    It follows that its inverse is a diagonal matrix with entries $\left(\eta\frac{x_i^{2-\beta}}{\beta}\right)_{i=1,2,\dots,K}$ on the main diagonal. Hence,
    \begin{align*}
        \norm{\tilde{\ell}_t}^2_{(\nabla^2 \psi_t(u_t))^{-1}} &= \frac{\eta}{\beta}\sum_{i=1}^K \tilde{\ell}_{i,t}^2 u_{i,t}^{2-\beta} \\
        &\leq \frac{\eta}{\beta}\sum_{i=1}^K \tilde{\ell}_{i,t}^2 q_{i,t}^{2-\beta} \\
        &\leq \frac{\eta}{\beta}\sum_{i \in \sA_t} \tilde{\ell}_{i,t}^2p_{i,t}^{2-\beta} \\
        &\leq \frac{\eta}{\beta}\sum_{i \in \sA_t} \tilde{\ell}_{i,t}p_{i,t}^{1-\beta},
    \end{align*}
    where the first inequality is due to $u_{i,t} \leq q_{i,t}$, the second inequality is due to Lemma~\ref{lemma:upperboundLQofnonactivearms} and the last inequality is due to $\tilde{\ell}_{i,t}p_{i,t} \leq 1$ for all $i \in \sA_t$. Plugging this into~\eqref{eq:boundestimatedregretut} and using $I_{a,t} \leq 1$ gives 
    \begin{align*}
        \sum_{t=1}^T I_{a,t}(\hat{\ell}_t - \tilde{\ell}_{a,t}) &\leq \psi_{T+1}(e_a) - \min_{x \in \Delta_K}\psi_1(x) + \gamma\sum_{t=1}^T \sum_{j \in \sA_t}\tilde{\ell}_{j,t} + \frac{\eta}{2\beta}\sum_{t=1}^T \sum_{i \in \sA_t}\tilde{\ell}_{i,t}p_{i,t}^{1-\beta} \\
        &\leq \frac{K^{1-\beta}}{\eta(1-\beta)} + \gamma\sum_{t=1}^T \sum_{j \in \sA_t}\tilde{\ell}_{j,t} + \frac{\eta}{2\beta}\sum_{t=1}^T \sum_{i \in \sA_t}\tilde{\ell}_{i,t}p_{i,t}^{1-\beta},
    \end{align*}
    where, in the last inequality, where we used $\psi_{T+1}(e_a) - \min_{x \in \Delta_K}\psi_1(x) \leq \frac{K^{1-\beta}}{\eta(1-\beta)}$ as a standard property of the Tsallis entropy regularizer~\citep[e.g.][]{Abernethy2015Fighting}.
\end{proof}

\subsection{Proof of Theorem~\ref{thm:FTARLinexpectationbound}}
\TheoremInExpectationFTARL*
\begin{proof}
With $\gamma = 0$, Lemma~\ref{lemma:FTARLestimatedregretbound} implies that 
\begin{align*}
    \sum_{t=1}^T I_{a,t}(\hat{\ell}_t - \tilde{\ell}_{a,t}) &\leq \frac{K^{1-\beta}}{\eta(1-\beta)} + \frac{\eta}{2\beta}\sum_{t=1}^T \sum_{i \in \sA_t}\tilde{\ell}_{i,t}p_{i,t}^{1-\beta} \\
    &= \frac{K^{1-\beta}}{\eta(1-\beta)} + \frac{\eta}{2\beta}\sum_{t=1}^T \ell_{i_t,t}p_{i_t,t}^{-\beta},
\end{align*}
where the equality is due to the fact that for $i \in \sA_t$, $\ell_{i,t} = 0$ if $i \neq i_t$ and $\tilde{\ell}_{i_t,t} = \frac{\ell_{i_t,t}}{p_{i_t,t}}$.
Taking the expectation over $i_t \sim p_t$ on both sides and using 
\begin{align*}
     \E_{i_t \sim p_t}[\ell_{i_t,t}p_{i_t,t}^{-\beta}] &= \sum_{i \in \sA_t} p_{i,t}^{1-\beta}\ell_{i,t} \\
    &\leq \sum_{i \in \sA_t} p_{i,t}^{1-\beta} \\
    &\leq A_t^{\beta},
\end{align*}
where the first inequality is due to $\ell_{i,t} \in [0,1]$ and the second inequality is Holder's inequality, we obtain 
\begin{align*}
    \E[R(a)] &\leq \frac{K^{1-\beta}}{\eta(1-\beta)} + \frac{\eta}{2\beta}TA^\beta.
\end{align*}
\end{proof}

\subsection{Proof of Theorem~\ref{thm:FTARLhighprobbound}}
\TheoremHighProbFTARL*
\begin{proof}
We apply Lemma~\ref{lemma:NeusLemmaEstimatorQuality} twice: 
\begin{itemize}
    \item the first time with $\delta' = \delta/3, \alpha_{i,t}=2\gamma I_{i,t} p_{i,t}^{1-\beta}$ to obtain 
    \begin{align}
        \sum_{t=1}^T \sum_{i \in \sA_t}p_{i,t}^{1-\beta}\tilde{\ell}_{i,t} \leq \frac{\ln(3/\delta)}{2\gamma} + \sum_{t=1}^T \sum_{i \in \sA_t}p_{i,t}^{1-\beta}\ell_{i,t}
        \label{eq:concentrate1}
    \end{align}
    with probability at least $1-\delta/3$,
    \item the second time with $\delta' = \delta/3, \alpha_{i,t}=2\gamma I_{i,t}$ to obtain 
    \begin{align}
        \sum_{t=1}^T \sum_{i \in \sA_t}\tilde{\ell}_{i,t} \leq \frac{\ln(3/\delta)}{2\gamma} + \sum_{t=1}^T\sum_{i \in \sA_t}\ell_{i,t}
        \label{eq:concentrate2}
    \end{align}    
    with probability at least $1-\delta/3$.
\end{itemize}
We also apply Corollary~\ref{corollary:NeusCorollary1} once with $\delta' = \frac{\delta}{3}$ to obtain 
\begin{align}
    \sum_{t=1}^T I_{a,t}\tilde{\ell}_{a,t} \leq \frac{\ln(3G_T/\delta)}{2\gamma} + \sum_{t=1}^T I_{a,t}\ell_{a,t}
    \label{eq:concentrate3}
\end{align}
with probability at least $1-\delta/3$. 
Plugging~\eqref{eq:concentrate1},~\eqref{eq:concentrate2},~\eqref{eq:concentrate3} into Lemma~\ref{lemma:FTARLestimatedregretbound} and taking a union bound, we obtain with probability at least $1-\delta$,
\begin{align*}
    \sum_{t=1}^T I_{a,t}\hat{\ell}_t &\leq \frac{K^{1-\beta}}{\eta(1-\beta)} + \frac{\ln(3G_T/\delta)}{2\gamma} + \sum_{t=1}^T I_{a,t}\ell_{a,t} + \frac{\ln(3/\delta)}{2} + \gamma \sum_{t=1}^T\sum_{i \in \sA_t}\ell_{i,t} + \frac{\eta\ln(3/\delta)}{4\beta\gamma} + \frac{\eta}{2\beta}\sum_{t=1}^T \sum_{i \in \sA_t} p_{i,t}^{1-\beta}\ell_{i,t}.
\end{align*}
Subtracting $\sum_{t=1}^T I_{a,t}\ell_{a,t}$ on both sides, we obtain
\begin{align*}
    R(a) &\leq \frac{K^{1-\beta}}{\eta(1-\beta)} + \frac{\ln(3G_T/\delta)}{2\gamma} + \frac{\ln(3/\delta)}{2} + \gamma \sum_{t=1}^T\sum_{i \in \sA_t}\ell_{i,t} + \frac{\eta\ln(3/\delta)}{4\beta\gamma} + \frac{\eta}{2\beta}\sum_{t=1}^T \sum_{i \in \sA_t} p_{i,t}^{1-\beta}\ell_{i,t}\\
    &\leq \frac{K^{1-\beta}}{\eta(1-\beta)} + \frac{\ln(3G_T/\delta)}{2\gamma} + \left(\frac{\eta}{4\beta\gamma} + \frac{1}{2}\right)\ln(3/\delta) + \gamma AT + \frac{\eta}{2\beta}\sum_{t=1}^T \sum_{i \in \sA_t} p_{i,t}^{1-\beta} \\
    &\leq \frac{K^{1-\beta}}{\eta(1-\beta)} + \frac{\eta}{2\beta}TA^{\beta} + \gamma AT + \left(\frac{\eta}{4\beta\gamma} + \frac{1}{2}\right)\ln(3/\delta) + \frac{\ln(3G_T/\delta)}{2\gamma} \\
    &\leq \Qchanged{\frac{K^{1-\beta}}{\eta(1-\beta)} + \frac{\eta}{2\beta}TA^{\beta} + \gamma AT + \left(\frac{\eta}{4\beta\gamma} + \frac{1}{2}\right)\ln(3/\delta) + \frac{\ln(3K/\delta)}{2\gamma}},
\end{align*}
where 
\begin{itemize}
    \item the second inequality is due to $\ell_{i,t} \in [0,1]$ and $A_t \leq A$,
    \item the third inequality is Holder's inequality and $A_t \leq A$,
    \item \Qchanged{the last inequality is due to $G_T \leq K$.}
\end{itemize}
\end{proof}

\section{PROOFS FOR SECTION~\ref{sec:RealValuedCases}}
\label{appendix:confidenceregret}
\begin{algorithm2e}[t]
	\KwIn{$\eta > 0, \gamma > 0, \eta \leq \gamma$}
	Initialize $\tilde{q}_{i, 1} = 1$ for $i = 1, 2, \dots, K$\; 
	\For{each round $t = 1, \dots, T$}{
        An adversary selects and reveals $K$ values $I_{i,t} \in [0,1]$\;	
        Compute $w_{i,t} = I_{i,t}\tilde{q}_{i,t}$ and $W_t = \sum_{i=1}^K w_{i,t}$\;
        Compute $p_{i,t} = \frac{w_{i,t}}{W_t}$ \;
		Draw $i_t \sim p_t$ and observe $\hat{\ell}_t = \ell_{i_t, t}$\;
		Construct loss estimate $\tilde{\ell}_{i, t}$ by Equation~\eqref{eq:lossestimatorIXReal} \;
		Update $\tilde{q}_{i,t+1}$ by Equation~\eqref{eq:qtildeupdateruleReal}.
	}
	\caption{SB-EXP3 for experts that report their confidences with bandit feedback}
	\label{algo:SBEXP3IXReal}
\end{algorithm2e}

First, we state a more general definition of the active sets of arms. An arm $i$ is active in round $t$ if $I_{i,t} > 0$ i.e. $\sA_t = \{i \in [K]: I_{i,t} > 0\}$. Let $\sG_t = \cup_{s=1}^t \sA_t$ and $G_t = \abs{\sG_t}$. Let the potential function $\tilde{Q}_t$ be
\begin{align*}
    \tilde{Q}_{t} = \sum_{i \in \sG_T} \tilde{q}_{i,t}.
\end{align*}
The protocol is given in Algorithm~\ref{algo:SBEXP3IXReal}.
In round $t$, the loss estimator is 
\begin{align}
    \tilde{\ell}_{i,t} = \begin{cases}
        \frac{\ell_{i,t}\I{i_t = i}}{p_{i,t} + \gamma I_{i,t}} \quad\text{if } p_{i,t} > 0,\\
        0 \quad\text{otherwise}.
    \end{cases}
    \label{eq:lossestimatorIXReal}
\end{align}
and the update rule for $\tilde{q}_{i,t+1}$ becomes
\begin{align}
    \tilde{q}_{i, t+1} = \tilde{q}_{i, t}\exp\left(\eta I_{i,t}\left(\hat{\ell}_t - \tilde{\ell}_{i,t} - \gamma\sum_{j=1}^K I_{j,t}\tilde{\ell}_{j,t}\right)\right).
    \label{eq:qtildeupdateruleReal}
\end{align}

Before proving Theorem~\ref{thm:highprobregretboundReal}, we present the following lemma which bounds the growth of $\frac{\tilde{Q}_{t+1}}{\tilde{Q}_t}$.
\begin{lemma}
    For any $t \geq 1$, 
    \begin{align*}
        \frac{\tilde{Q}_{t+1}}{\tilde{Q}_t} &\leq 1 + \eta \frac{W_t}{\tilde{Q}_t}\sum_{i=1}^K p_{i,t}\left((\hat{\ell}_t - \tilde{\ell}_{i,t} - \gamma\sum_{j=1}^K I_{j,t}\tilde{\ell}_{j,t}) + \eta I_{i,t}(\hat{\ell}_t - \tilde{\ell}_{i,t} - \gamma\sum_{j=1}^K I_{j,t}\tilde{\ell}_{j,t})^2\right) 
    \end{align*}
    \label{lemma:upperboundQt1QtfirstReal}
\end{lemma}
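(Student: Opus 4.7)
The plan is to begin with the identity $\tilde{Q}_{t+1} = \sum_{i \in \sG_T}\tilde{q}_{i,t+1}$ and expand each summand using the update rule~\eqref{eq:qtildeupdateruleReal}. For brevity, introduce the shorthand $y_{i,t} = \hat{\ell}_t - \tilde{\ell}_{i,t} - \gamma\sum_{j=1}^K I_{j,t}\tilde{\ell}_{j,t}$, so the update reads $\tilde{q}_{i,t+1} = \tilde{q}_{i,t}\exp(\eta I_{i,t} y_{i,t})$. Unlike the binary-activity setting of Lemma~\ref{lemma:upperboundQt1Qt}, where Jensen's inequality on the $\sA_t$/$\bar{\sA}_t$ decomposition produced an EXP3-style ratio bound, here the fractional activities $I_{i,t}\in[0,1]$ make that clean split unavailable, so I would expand the exponential directly to second order instead.

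Concretely, I would apply the elementary inequality $e^x \leq 1 + x + x^2$ (valid for all $x \leq 1$) to each factor $\exp(\eta I_{i,t}y_{i,t})$, obtaining
\begin{equation*}
\tilde{Q}_{t+1} \;\leq\; \sum_{i \in \sG_T}\tilde{q}_{i,t}\bigl(1 + \eta I_{i,t} y_{i,t} + \eta^2 I_{i,t}^2 y_{i,t}^2\bigr).
\end{equation*}
The first term is exactly $\tilde{Q}_t$. For the other two, the key identity is $I_{i,t}\tilde{q}_{i,t} = w_{i,t} = W_t p_{i,t}$, which is immediate from the definitions of $w_{i,t}$ and $p_{i,t}$. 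This converts the linear term to $\eta W_t\sum_{i}p_{i,t} y_{i,t}$ and, using the further identity $I_{i,t}^2\tilde{q}_{i,t} = I_{i,t}(I_{i,t}\tilde{q}_{i,t}) = I_{i,t}W_t p_{i,t}$, converts the quadratic term to $\eta^2 W_t\sum_{i}I_{i,t}p_{i,t} y_{i,t}^2$. Dividing by $\tilde{Q}_t$ and factoring out $\eta W_t/\tilde{Q}_t$ then yields precisely the bound in the statement.

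The main obstacle is justifying $e^x \leq 1 + x + x^2$ at the point $x = \eta I_{i,t} y_{i,t}$, since this version of the inequality requires $x \leq 1$. I would verify this by observing that $\tilde{\ell}_{i,t} \geq 0$ and $\sum_j I_{j,t}\tilde{\ell}_{j,t} \geq 0$, so $y_{i,t} \leq \hat{\ell}_t \leq 1$, hence $\eta I_{i,t}y_{i,t} \leq \eta I_{i,t} \leq \eta$; the algorithm input condition $\eta \leq \gamma \leq 1$ then closes the gap. Note that $y_{i,t}$ itself can be strongly negative, since the IX estimator can make $\tilde{\ell}_{i_t,t}$ large; this is harmless because $e^x \leq 1 + x + x^2$ also holds for every $x \leq 0$, so only the upper tail of $\eta I_{i,t} y_{i,t}$ actually needs to be controlled.
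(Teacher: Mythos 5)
Your proposal is correct and follows essentially the same route as the paper's proof: expand $\tilde{Q}_{t+1}-\tilde{Q}_t$ via the update rule, apply $e^x - 1 \leq x + x^2$ for $x \leq 1$ to each term, and use $I_{i,t}\tilde{q}_{i,t} = W_t p_{i,t}$ to rewrite the sums before dividing by $\tilde{Q}_t$. Your explicit verification that the argument of the exponential is at most $1$ (via $y_{i,t} \leq \hat{\ell}_t \leq 1$ and $\eta \leq \gamma < 1$) is a detail the paper leaves implicit, and is a welcome addition.
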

\begin{proof}
    By the definition of $\tilde{Q}_{t+1}$, we have 
    \begin{align*}
        \tilde{Q}_{t+1} - \tilde{Q}_t &= \sum_{i \in \sG_T} \tilde{q}_{i,t}\exp\left(\eta I_{i,t}\left(\hat{\ell}_t - \tilde{\ell}_{i,t}- \gamma\sum_{j=1}^K I_{j,t}\tilde{\ell}_{j,t}\right)\right) - \sum_{i \in \sG_T} \tilde{q}_{i,t}        \\
        &= \sum_{i \in \sG_T} \tilde{q}_{i,t}\left(\exp\left(\eta I_{i,t}(\hat{\ell}_t - \tilde{\ell}_{i,t} - \gamma\sum_{j=1}^K I_{j,t}\tilde{\ell}_{j,t})\right)-1\right) \\
        &\leq \sum_{i \in \sG_T} \tilde{q}_{i,t}\left( \eta I_{i,t}(\hat{\ell}_t - \tilde{\ell}_{i,t}- \gamma\sum_{j=1}^K I_{j,t}\tilde{\ell}_{j,t}) +  \eta^2 I_{i,t}^2(\hat{\ell}_t - \tilde{\ell}_{i,t}- \gamma\sum_{j=1}^K I_{j,t} \tilde{\ell}_{j,t})^2\right) \\
        &= \eta \sum_{i \in \sG_T} I_{i,t}\tilde{q}_{i,t}\left( (\hat{\ell}_t - \tilde{\ell}_{i,t}- \gamma\sum_{j=1}^K I_{j,t} \tilde{\ell}_{j,t}) +  \eta I_{i,t}(\hat{\ell}_t - \tilde{\ell}_{i,t}- \gamma\sum_{j=1}^K I_{j,t} \tilde{\ell}_{j,t})^2\right) \\
        &= \eta W_t \sum_{i \in \sG_T} p_{i,t}\left( (\hat{\ell}_t - \tilde{\ell}_{i,t}- \gamma\sum_{j=1}^K I_{j,t} \tilde{\ell}_{j,t}) +  \eta I_{i,t}(\hat{\ell}_t - \tilde{\ell}_{i,t}- \gamma\sum_{j=1}^K I_{j,t} \tilde{\ell}_{j,t})^2\right),
    \end{align*}
    where 
    \begin{itemize}
        \item the inequality is obtained by applying $\exp(x) - 1 \leq x + x^2$ for any $x \leq 1$ on $x = \eta I_{i,t}(\hat{\ell}_t - \tilde{\ell}_{i,t}- \gamma\sum_{j=1}^K I_{j,t}\tilde{\ell}_{j,t})$ and multiplying both sides by $\tilde{q}_{i,t} \geq 0$,
        \item the last equality is due to the computation of $p_{i,t} = \frac{I_{i,t}\tilde{q}_{i,t}}{W_t}$.
    \end{itemize}    
    Dividing both sides by $\tilde{Q}_{t} > 0$, we obtain the desired expression.
\end{proof}
Note that in each round $t$, Lemma~\ref{lemma:IXcomputeEipt} still holds for $I_{i,t} \in [0,1]$. This implies the following corollary.
\begin{corollary}
    For any $t \geq 1$, 
    \begin{align*}
        \frac{\tilde{Q}_{t+1}}{\tilde{Q}_t} \leq 1 + \eta^2 \sum_{i=1}^K p_{i,t}I_{i,t}\left(\hat{\ell}_t - \tilde{\ell}_{i,t} - \gamma\sum_{j=1}^K I_{j,t}\tilde{\ell}_{j,t}\right)^2.
    \end{align*}    
    \label{corollary:upperboundQt1QtsecondReal}
\end{corollary}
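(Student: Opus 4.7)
The plan is to start from Lemma~\ref{lemma:upperboundQt1QtfirstReal} and show that the linear-in-residual term vanishes identically, leaving only the quadratic term. Write $c_t := \hat{\ell}_t - \gamma\sum_{j=1}^K I_{j,t}\tilde{\ell}_{j,t}$, which does not depend on $i$. Then the linear piece in Lemma~\ref{lemma:upperboundQt1QtfirstReal} is
\begin{align*}
\sum_{i=1}^K p_{i,t}\bigl(c_t - \tilde{\ell}_{i,t}\bigr) = c_t - \sum_{i=1}^K p_{i,t}\tilde{\ell}_{i,t} = c_t - \E_{i \sim p_t}[\tilde{\ell}_{i,t}].
\end{align*}

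The first thing I would do is verify that Lemma~\ref{lemma:IXcomputeEipt} transfers to the real-valued regime $I_{i,t}\in[0,1]$; inspecting its proof, the only facts used are that $\tilde{\ell}_{j,t}$ is supported on $j=i_t$ with $\tilde{\ell}_{i_t,t} = \hat{\ell}_t/(p_{i_t,t}+\gamma I_{i_t,t})$ and that $I_{j,t}=0$ whenever $j\notin\sA_t$, so that $\sum_{j\in\sA_t}I_{j,t}\tilde{\ell}_{j,t} = \sum_{j=1}^K I_{j,t}\tilde{\ell}_{j,t}$. Neither step uses binary $I_{i,t}$, so the identity $\E_{i \sim p_t}[\tilde{\ell}_{i,t}] = c_t$ still holds, and the linear piece is exactly zero.

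With the linear term gone, Lemma~\ref{lemma:upperboundQt1QtfirstReal} reduces to
\begin{align*}
\frac{\tilde{Q}_{t+1}}{\tilde{Q}_t} \leq 1 + \eta^2\frac{W_t}{\tilde{Q}_t}\sum_{i=1}^K p_{i,t}I_{i,t}\bigl(\hat{\ell}_t - \tilde{\ell}_{i,t} - \gamma\textstyle\sum_{j=1}^K I_{j,t}\tilde{\ell}_{j,t}\bigr)^2.
\end{align*}
The final step is to drop the factor $W_t/\tilde{Q}_t$ using $W_t/\tilde{Q}_t \leq 1$. This bound follows because every $i$ with $I_{i,t}>0$ lies in $\sA_t\subseteq\sG_t\subseteq\sG_T$, and $I_{i,t}\in[0,1]$, so
\begin{align*}
W_t = \sum_{i=1}^K I_{i,t}\tilde{q}_{i,t} = \sum_{i\in\sA_t} I_{i,t}\tilde{q}_{i,t} \leq \sum_{i\in\sA_t}\tilde{q}_{i,t} \leq \sum_{i\in\sG_T}\tilde{q}_{i,t} = \tilde{Q}_t.
\end{align*}
Since the sum of squared terms is non-negative, weakening $W_t/\tilde{Q}_t$ to $1$ preserves the inequality and yields the statement of the corollary.

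There is no serious obstacle here; the argument is essentially a two-line computation once one recognizes the linear-term cancellation. The only mildly non-trivial check is that Lemma~\ref{lemma:IXcomputeEipt} still applies for $I_{i,t}\in[0,1]$ (as noted above this is immediate), and that the $W_t/\tilde{Q}_t\leq 1$ bound is valid under the real-valued protocol.
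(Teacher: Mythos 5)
Your proposal is correct and follows essentially the same route as the paper's own proof: split the bound from Lemma~\ref{lemma:upperboundQt1QtfirstReal} into the linear and quadratic pieces, kill the linear piece via Lemma~\ref{lemma:IXcomputeEipt} (which, as you and the paper both note, survives the passage to $I_{i,t}\in[0,1]$), and then discard the factor $W_t/\tilde{Q}_t \leq 1$ in front of the non-negative quadratic sum. Your justification of $W_t \leq \tilde{Q}_t$ via $\sA_t \subseteq \sG_T$ is in fact slightly more careful than the paper's, which sums over all of $[K]$ even though $\tilde{Q}_t$ is defined as a sum over $\sG_T$; this is a cosmetic difference only.
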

\begin{proof}
    We write the second term in the sum on the right-hand side of Lemma~\ref{lemma:upperboundQt1QtfirstReal} as follows: 
    \begin{align*}
        \frac{\tilde{Q}_{t+1}}{\tilde{Q}_t} &\leq 1 + \eta \frac{W_t}{\tilde{Q}_t}\sum_{i=1}^K p_{i,t}\left((\hat{\ell}_t - \tilde{\ell}_{i,t}- \gamma\sum_{j=1}^K I_{j,t} \tilde{\ell}_{j,t}) + \eta I_{i,t}(\hat{\ell}_t - \tilde{\ell}_{i,t}- \gamma\sum_{j=1}^K I_{j,t} \tilde{\ell}_{j,t})^2\right) \\
        &= 1 + \underbrace{\frac{\eta W_t}{\tilde{Q}_t}\sum_{i=1}^K p_{i,t}(\hat{\ell}_t - \tilde{\ell}_{i,t}- \gamma\sum_{j=1}^K I_{j,t} \tilde{\ell}_{j,t})}_{(a)} + \underbrace{\frac{\eta^2 W_t}{\tilde{Q}_t} \sum_{i=1}^K p_{i,t}I_{i,t}(\hat{\ell}_t - \tilde{\ell}_{i,t}- \gamma\sum_{j=1}^K I_{j,t} \tilde{\ell}_{j,t})^2}_{(b)} \\
    \end{align*}
We bound $(a)$ and $(b)$ separately. By Lemma~\ref{lemma:IXcomputeEipt} we have 
\begin{align*}
    \sum_{i=1}^K p_{i,t}(\hat{\ell}_t - \tilde{\ell}_{i,t}- \gamma\sum_{j=1}^K I_{j,t} \tilde{\ell}_{j,t}) &= \hat{\ell}_t - \sum_{i=1}^K p_{i,t}\tilde{\ell}_{i,t}- \gamma\sum_{j=1}^K I_{j,t} \tilde{\ell}_{j,t} \\
    &= 0,
\end{align*}
and thus the quantity $(a)$ is equal to $0$. To bound $(b)$, we  have
\begin{align*}
    W_t &= \sum_{i=1}^K I_{i,t}\tilde{q}_{i,t} \\
    &\leq \sum_{i=1}^K \tilde{q}_{i,t} \\
    &= \tilde{Q}_t,
\end{align*}
where the inequality is due to $I_{i,t} \in [0,1]$ and $\tilde{q}_{i,t} > 0$. This implies that $0 < \frac{W_t}{\tilde{Q}_t} \leq 1$. Multiplying both sides by $\eta^2\sum_{i=1}^K p_{i,t}I_{i,t}(\hat{\ell}_t - \tilde{\ell}_{i,t}- \gamma\sum_{j=1}^K \tilde{\ell}_{j,t})^2 \geq 0$, we obtain
\begin{align*}
    (b) \leq \eta^2\sum_{i=1}^K p_{i,t}I_{i,t}\left(\hat{\ell}_t - \tilde{\ell}_{i,t}- \gamma\sum_{j=1}^K I_{j,t} \tilde{\ell}_{j,t}\right)^2,
\end{align*}
which implies the desired statement.
\end{proof}

\subsection{Proof of Theorem~\ref{thm:highprobregretboundReal}}
\TheoremHighProbRealCase*
\begin{proof}
    We still employ the standard strategy of lower and upper bounding $Q_{T+1}$. We have 
    \begin{nalign}
        \ln{\tilde{Q}_{T+1}} \geq \ln{\tilde{q}_{a, T+1}} = \eta\sum_{t=1}^T I_{a,t}\left(\hat{\ell}_t - \tilde{\ell}_{a,t} - \gamma\sum_{j=1}^K I_{j,t} \tilde{\ell}_{j,t}\right).
        \label{eq:lowerBoundQT1Real}
    \end{nalign}
    On the other hand, 
    \begin{nalign}
        \ln{\tilde{Q}_{T+1}} &= \ln{\tilde{Q}_1} + \sum_{t=1}^T \ln{\frac{\tilde{Q}_{t+1}}{\tilde{Q}_t}} \\
        &\leq \ln{G_T} + \sum_{t=1}^T \ln\left(1 + \eta^2 \sum_{i=1}^K p_{i,t}I_{i,t}(\hat{\ell}_t - \tilde{\ell}_{i,t}- \gamma\sum_{j=1}^K I_{j,t} \tilde{\ell}_{j,t})^2\right) \\
        &\leq \ln{G_T} + \eta^2\sum_{t=1}^T \sum_{i=1}^K p_{i,t}I_{i,t}\left(\hat{\ell}_t - \tilde{\ell}_{i,t}- \gamma\sum_{j=1}^K I_{j,t} \tilde{\ell}_{j,t}\right)^2
        \label{eq:upperBoundQT1Reala}
    \end{nalign}
    where the first inequality is due to Corollary~\ref{corollary:upperboundQt1QtsecondReal} and the second inequality is due to $\ln(1+x) \leq x$ for all $x \geq -1$.
    Let $c_t = \gamma\sum_{j=1}^K I_{j,t} \tilde{\ell}_{j,t}$. For any $t \geq 1$, we have: 
    \begin{align*}
        \hat{\ell}_t - c_t &= \hat{\ell}_t - \gamma I_{i_t,t} \tilde{\ell}_{i_t,t} \qquad(\text{ since }\tilde{\ell}_{j,t} = 0\text{ if } j \neq i_t)\\
        &= \hat{\ell}_t - \frac{\gamma I_{i_t,t}\hat{\ell}_t}{p_{i_t,t} + \gamma I_{i_t,t}}\\
        &= \frac{p_{i_t,t}\hat{\ell}_t}{p_{i_t,t}+\gamma I_{i_t,t}} \\
        &= p_{i_t,t}\tilde{\ell}_{i_t,t}.
    \end{align*}
    It follows that
    \begin{nalign}
        \sum_{i=1}^K p_{i,t}I_{i,t}(\hat{\ell}_t - \tilde{\ell}_{i,t}- c_t)^2 &= \sum_{i=1}^K p_{i,t}I_{i,t}(\hat{\ell}_t - c_t)^2 + \sum_{i=1}^K p_{i,t}I_{i,t}\tilde{\ell}_{i,t}^2 - 2\sum_{i=1}^K p_{i,t}I_{i,t} (\hat{\ell}_t - c_t)\tilde{\ell}_{i,t} \\
        &\leq (\hat{\ell}_t - c_t)^2\sum_{i=1}^K p_{i,t}I_{i,t} + \sum_{i=1}^K p_{i,t}I_{i,t}\tilde{\ell}_{i,t}^2 \\
        &\leq \sum_{i=1}^K p_{i,t}I_{i,t} + \sum_{i=1}^K I_{i,t}\tilde{\ell}_{i,t},
        \label{eq:upperBoundQT1Realb}
    \end{nalign}
    where 
    \begin{itemize}
        \item the first inequality is due to $\hat{\ell}_t - c_t = p_{i_t,t}\tilde{\ell}_{i_t,t} \geq 0$,
        \item the second inequality is due to $\hat{\ell}_t - c_t = p_{i_t,t}\tilde{\ell}_{i_t,t} \leq 1$.
    \end{itemize}
    
    Combining~\eqref{eq:lowerBoundQT1Real},~\eqref{eq:upperBoundQT1Reala} and~\eqref{eq:upperBoundQT1Realb}, we obtain 
    \begin{align}
        \sum_{t=1}^T I_{a,t}(\hat{\ell}_t - \tilde{\ell}_{a,t} - \gamma\sum_{j=1}^K I_{j,t} \tilde{\ell}_{j,t}) &\leq \frac{\ln{G_T}}{\eta} + \eta \sum_{t=1}^T \left( \sum_{i=1}^K p_{i,t}I_{i,t} + \sum_{i=1}^K I_{i,t}\tilde{\ell}_{i,t} \right),
        \label{eq:highprobregretboundReala}
    \end{align}
    which implies that 
    \begin{align}
        \sum_{t=1}^T I_{a,t}\hat{\ell}_t &\leq \sum_{t=1}^T I_{a,t}\tilde{\ell}_{a,t}  + \frac{\ln{G_T}}{\eta} + \eta\sum_{t=1}^T \sum_{i=1}^K p_{i,t}I_{i,t}  + \sum_{t=1}^T\sum_{i=1}^K \left(I_{a,t}\gamma + \eta \right)I_{i,t}\tilde{\ell}_{i,t} \\
        &\leq \sum_{t=1}^T I_{a,t}\tilde{\ell}_{a,t}  + \frac{\ln{G_T}}{\gamma} + \gamma\sum_{t=1}^T \sum_{i=1}^K p_{i,t}I_{i,t}  + 2\gamma\sum_{t=1}^T\sum_{i=1}^K I_{i,t}\tilde{\ell}_{i,t},
    \end{align}
    where the last inequality is due to $I_{a,t} \leq 1$ and picking $\eta = \gamma$.
    
     Note that Lemma~\ref{lemma:NeusLemmaEstimatorQuality} holds for $I_{i,t} \in [0,1]$. Applying Lemma~\ref{lemma:NeusLemmaEstimatorQuality} with $\alpha_{i,t} = 2\gamma I_{i,t}, \delta' = \delta/2$, applying Corollary~\ref{corollary:NeusCorollary1} with $\delta' = \delta/2$ and using $\sum_{i=1}^K p_{i,t}I_{i,t} \leq \sum_{i=1}^K I_{i,t}$, we obtain that with probability at least $1-\delta$, simultaneously for all $a \in [K]$,
    \begin{align*}
        \sum_{t=1}^T I_{a,t}\hat{\ell}_t &\leq \sum_{t=1}^T I_{a,t}\ell_{a,t} + \frac{\ln(2G_T/\delta)}{2\gamma} + \frac{\ln{G_T}}{\gamma} + \gamma\sum_{t=1}^T \sum_{i=1}^K I_{i,t} + \ln(2/\delta) + 2\gamma\sum_{t=1}^T\sum_{j=1}^K I_{j,t} \ell_{j,t}.
    \end{align*}    
    Subtracting $\sum_{t=1}^T I_{a,t}\ell_{a,t}$ on both sides and using $\ell_{j,t} \leq 1$ and $\gamma, \delta \in (0,1)$, we obtain 
    \begin{align}
        R(a) &\leq \frac{\ln(2G_T/\delta)}{\gamma} + \frac{\ln{G_T}}{\gamma} + \ln(2/\delta) + 3\gamma\sum_{t=1}^T \sum_{j=1}^K I_{j,t} \\
        &\leq \frac{3\ln(G_T/\delta)}{\gamma} + 3\gamma\sum_{t=1}^T\sum_{i=1}^K I_{i,t}.
    \end{align}
    Setting $\gamma = \eta = \sqrtfrac{\ln(G_T/\delta)}{\sum_{t=1}^T\sum_{i=1}^K I_{i,t}}$ leads to the desired bound.
\end{proof}

\section{PROOFS FOR SECTION~\ref{sec:EXP4}}
\label{appendix:SEEXP4}
Recall that the sleeping experts are considered \textit{sleeping augmented arms}. Note that $\sB_t = \{m \in [M]: I_{m,t} = 1\}$ is the set of awake experts as defined in the main text.
For an expert $u \in [M]$, the actual loss of expert $u$ in round $t$ is
\begin{align*}
    x_{u,t}  = \inp{E_{u,t}}{\ell_t}.
\end{align*} 
First, we prove a technical lemma showing that the $z_t$-weighted average of these estimated losses of the augmented arms is equivalent to $\hat{\ell}_t - \gamma\sum_{j=1}^K \tilde{\ell}_{j,t}$. 
This lemma is the counterpart of Lemma~\ref{lemma:IXcomputeEipt}.
\begin{lemma}
    For any $t \in [K]$ and $m \in \sB_t$,
    \begin{align*}
        \E_{m \sim z_t}[\tilde{x}_{m,t}] = \hat{\ell}_t - \gamma\sum_{j=1}^K \tilde{\ell}_{j,t}.
    \end{align*}
    \label{lemma:expectationofXmt}
\end{lemma}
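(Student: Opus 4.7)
The plan is to unfold the expectation directly and reduce it to the arm-level identity already available from Lemma~\ref{lemma:IXcomputeEipt}. First I would write
\[
\E_{m \sim z_t}[\tilde{x}_{m,t}] = \sum_{m \in \sB_t} z_{m,t}\, \inp{E_{m,t}}{\tilde{\ell}_t} = \sum_{m \in \sB_t} z_{m,t} \sum_{k=1}^K E_{m,k,t}\, \tilde{\ell}_{k,t},
\]
and then swap the order of summation to obtain
\[
\E_{m \sim z_t}[\tilde{x}_{m,t}] = \sum_{k=1}^K \tilde{\ell}_{k,t} \sum_{m \in \sB_t} z_{m,t}\, E_{m,k,t} = \sum_{k=1}^K p_{k,t}\, \tilde{\ell}_{k,t},
\]
where the last equality uses $p_t = z_t E_t$ together with the fact that $z_{m,t}=0$ for $m \notin \sB_t$ (from \eqref{eq:zmt}).

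Next I would invoke the arm-level identity: in SE-EXP4, the loss estimator $\tilde{\ell}_{k,t} = \ell_{k,t}\I{i_t=k}/(p_{k,t}+\gamma)$ is exactly an IX-type estimator over the $K$ underlying arms with all arms treated as active, so the specialization of Lemma~\ref{lemma:IXcomputeEipt} (with $I_{i,t} = 1$ for all $i$) gives
\[
\sum_{k=1}^K p_{k,t}\, \tilde{\ell}_{k,t} = p_{i_t,t}\, \tilde{\ell}_{i_t,t} = \hat{\ell}_t - \gamma\, \tilde{\ell}_{i_t,t} = \hat{\ell}_t - \gamma\sum_{j=1}^K \tilde{\ell}_{j,t},
\]
where the last equality uses $\tilde{\ell}_{j,t}=0$ for $j \neq i_t$. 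Combining the two displays yields the claim.

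There is essentially no obstacle: the proof is a one-line bookkeeping argument that pushes the expectation through the linear expert advice and then cites the already-proven arm-level identity. The only subtlety worth pointing out is that the IX-correction term $-\gamma\sum_{j=1}^K \tilde{\ell}_{j,t}$ on the right-hand side involves a sum over all $K$ arms rather than a sum over $\sB_t$; this is fine because $\tilde{\ell}_{j,t}$ only fires for the single sampled arm $i_t$, regardless of which expert recommended it.
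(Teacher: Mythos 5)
Your proposal is correct and follows essentially the same route as the paper: both expand the expectation, use $\tilde{\ell}_{k,t}=0$ for $k\neq i_t$ together with $p_{k,t}=\sum_{m\in\sB_t}z_{m,t}E_{m,t}^{(k)}$ to reduce to $p_{i_t,t}\tilde{\ell}_{i_t,t}$, and then apply the IX identity. The only cosmetic difference is that you cite Lemma~\ref{lemma:IXcomputeEipt} for the final step (a valid specialization, since SE-EXP4 treats all $K$ arms as active in its loss estimator) whereas the paper re-derives that short computation inline.
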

\begin{proof}
    Let $E_{m,t}^{(k)}$ be the value of the element at index $k$ in $E_{m,t}$. We have 
    \begin{align*}
        \E_{m \sim z_t}[\tilde{x}_{m,t}] &= \sum_{m \in \sB_t} z_{m,t} \tilde{x}_{m,t} \\
        &= \sum_{m \in \sB_t}z_{m,t}\sum_{k=1}^K E_{m,t}^{(k)}\tilde{\ell}_{k,t} \\
        &= \sum_{m \in \sB_t}z_{m,t} E_{m,t}^{(i_t)}\tilde{\ell}_{i_t,t} \\
        &= \frac{\hat{\ell}_t}{p_{i_t,t} + \gamma}\sum_{m \in \sB_t}z_{m,t} E_{m,t}^{(i_t)} \\
        &= \frac{\hat{\ell}_t p_{i_t,t}}{p_{i_t,t}+\gamma} \\
        &= \hat{\ell}_t - \gamma \frac{\hat{\ell}_t}{p_{i_t,t} + \gamma} \\
        &= \hat{\ell}_t - \gamma\tilde{\ell}_{i_t,t},
    \end{align*}
    where 
    \begin{itemize}
        \item the second equality is by Equation~\eqref{eq:xmt}
        \item the third equality is due to $\tilde{\ell}_{k,t} = 0$ whenever $k \neq i_t$
        \item the fourth equality is due to $p_{k,t} = \sum_{m \in \sB_t}z_{m,t}E_{m,t}^{(k)}$ for all $k \in [K]$.
    \end{itemize}    
\end{proof}
Let $\tilde{Q}_t = \sum_{m = 1}^M \tilde{q}_{m,t}$. For a set $S \in [M]$, let $\tilde{Q}_{S,t} = \sum_{m \in S}\tilde{q}_{m,t}$ be the projection of $\tilde{Q}_t$ on $S$. Let $\bar{S} = [M] \setminus S$ for any $S \subseteq [M]$. Lemma~\ref{lemma:expectationofXmt} leads to the following technical lemma that resembles Lemma~\ref{lemma:upperboundQt1Qt}.
\begin{lemma}
    For any $t \geq 0$,
    \begin{align*}
        \frac{\tilde{Q}_{t+1}}{\tilde{Q}_t} \leq \sum_{m=1}^M z_{m,t}\exp\left(\eta(\hat{\ell}_t - \gamma\sum_{j=1}^K \tilde{\ell}_{j,t}- \tilde{x}_{m,t})\right).
    \end{align*}
    \label{lemma:upperboundQt1QtEXP4}
\end{lemma}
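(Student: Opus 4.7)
The plan is to mimic the proof of Lemma~\ref{lemma:upperboundQt1Qt} almost step for step, with sleeping experts playing the role of sleeping arms, the distribution $z_t$ on experts replacing $p_t$ on arms, the estimated expert losses $\tilde{x}_{m,t}$ replacing the arm loss estimates $\tilde{\ell}_{i,t}$, and Lemma~\ref{lemma:expectationofXmt} replacing Lemma~\ref{lemma:IXcomputeEipt}. The key structural observation is that the ``bias correction'' $\hat{\ell}_t - \gamma\sum_{j=1}^K \tilde{\ell}_{j,t}$ appearing inside the exponent of the update rule~\eqref{eq:qmt} is exactly the conditional expectation $\E_{m\sim z_t}[\tilde{x}_{m,t}]$ supplied by Lemma~\ref{lemma:expectationofXmt}, so that Jensen's inequality lines up cleanly with the update.

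Concretely, I would first apply Jensen's inequality to the convex function $x\mapsto e^{-\eta x}$ combined with Lemma~\ref{lemma:expectationofXmt} to obtain
\begin{align*}
\sum_{m\in\sB_t} z_{m,t}\exp(-\eta\tilde{x}_{m,t}) \;\geq\; \exp\!\left(-\eta\Big(\hat{\ell}_t - \gamma\sum_{j=1}^K \tilde{\ell}_{j,t}\Big)\right).
\end{align*}
Multiplying both sides by the positive quantity $\exp\!\big(\eta(\hat{\ell}_t - \gamma\sum_{j=1}^K \tilde{\ell}_{j,t})\big)\,\tilde{Q}_{\sB_t,t}$ and using $z_{m,t} = \tilde{q}_{m,t}/\tilde{Q}_{\sB_t,t}$ for $m\in\sB_t$ yields
\begin{align*}
\sum_{m\in\sB_t} \tilde{q}_{m,t}\exp\!\left(\eta\Big(\hat{\ell}_t - \gamma\sum_{j=1}^K \tilde{\ell}_{j,t} - \tilde{x}_{m,t}\Big)\right) \;\geq\; \tilde{Q}_{\sB_t,t}.
\end{align*}

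The next step is to identify each summand on the left with $\tilde{q}_{m,t+1}$: for $m\in\sB_t$ the update~\eqref{eq:qmt} (with learning rate $\eta$) gives precisely this expression, while for $m\notin\sB_t$ the indicator $I_{m,t}$ vanishes and $\tilde{q}_{m,t+1}=\tilde{q}_{m,t}$. The previous inequality therefore reduces to $\sum_{m\in\sB_t}\tilde{q}_{m,t+1}\geq \tilde{Q}_{\sB_t,t}$. I then invoke Fact~2 with $a=\sum_{m\in\sB_t}\tilde{q}_{m,t+1}$, $b=\tilde{Q}_{\sB_t,t}$ and $c=\tilde{Q}_{\bar{\sB}_t,t}$; since $\tilde{q}_{m,t+1}=\tilde{q}_{m,t}$ on $\bar{\sB}_t$, we have $a+c = \tilde{Q}_{t+1}$ and $b+c=\tilde{Q}_t$, so
\begin{align*}
\frac{\sum_{m\in\sB_t}\tilde{q}_{m,t+1}}{\tilde{Q}_{\sB_t,t}} \;\geq\; \frac{\tilde{Q}_{t+1}}{\tilde{Q}_t}.
\end{align*}
Rewriting the left-hand side back in terms of $z_{m,t}$ and the desired exponential (and extending the sum from $\sB_t$ to $[M]$ using $z_{m,t}=0$ for $m\notin\sB_t$) gives exactly the inequality claimed in the lemma.

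Nothing in this argument is really an obstacle; the only point that needs slight care is checking that the update~\eqref{eq:qmt} produces exactly the exponential factor that appears after the Jensen/multiplication step, and that $\sB_t$ and $\bar{\sB}_t$ play the same bookkeeping role here that $\sA_t$ and $\bar{\sA}_t$ played in Lemma~\ref{lemma:upperboundQt1Qt}, so that Fact~2 applies verbatim. Everything else is routine calculation.
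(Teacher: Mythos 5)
Your proposal is correct and follows essentially the same route as the paper's own proof: Jensen's inequality on $x\mapsto e^{-\eta x}$ combined with Lemma~\ref{lemma:expectationofXmt}, multiplication by $\exp\bigl(\eta(\hat{\ell}_t - \gamma\sum_{j=1}^K \tilde{\ell}_{j,t})\bigr)\tilde{Q}_{\sB_t,t}$, identification of the summands with $\tilde{q}_{m,t+1}$ via the update rule, and then Fact~2 with $a=\sum_{m\in\sB_t}\tilde{q}_{m,t+1}$, $b=\tilde{Q}_{\sB_t,t}$, $c=\tilde{Q}_{\bar{\sB}_t,t}$. No gaps; the bookkeeping with $\sB_t$ and $\bar{\sB}_t$ works exactly as you describe.
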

\begin{proof}
    The proof makes use of the following two facts that can be proved easily:
    \begin{itemize}
        \item Fact 1: The function $f(x) = e^{-\eta x}$ is convex for any $\eta \in \R$.
         \item Fact 2: For any $a, b > 0, c \geq 0$, if $a \geq b$ then 
        \begin{align*}
            \frac{a}{b} \geq \frac{a+c}{b+c}.
        \end{align*}
    \end{itemize}
    By Jensen's inequality and Fact 1, we have 
    \begin{align*}
        \sum_{m=1}^M z_{m,t}\exp(-\eta\tilde{x}_{m,t}) &= \E_{m \sim z_t}[\exp(-\eta\tilde{x}_{m,t})]  \\
        &\geq \exp\left(-\eta\E_{m \sim z_t}[\tilde{x}_{m,t}]\right) \\
        &= \exp\left(-\eta(\hat{\ell}_t - \gamma\sum_{j=1}^K \tilde{\ell}_{j,t})\right),
    \end{align*}
    where the last equality is due to Lemma~\ref{lemma:expectationofXmt}. Since $z_{m,t} = 0$ for $m \notin \sB_t$, the expression above is equivalent to 
    \begin{align*}
        \sum_{m \in \sB_t} z_{m,t}\exp(-\eta\tilde{x}_{m,t}) \geq \exp\left(-\eta(\hat{\ell}_t - \gamma\sum_{j=1}^K \tilde{\ell}_{j,t})\right).
    \end{align*}
    Multiplying $\exp(\eta(\hat{\ell}_t - \gamma\sum_{j=1}^K \tilde{\ell}_{j,t}))\tilde{Q}_{\sB_t, t} > 0$ on both sides, we obtain 
    \begin{align}
        \sum_{m \in \sB_t} z_{m,t}\tilde{Q}_{\sB_t, t}\exp\left(\eta(\hat{\ell}_t- \gamma\sum_{j=1}^K \tilde{\ell}_{j,t}-\tilde{x}_{m,t})\right) \geq \tilde{Q}_{\sB_t, t}.
        \label{eq:pitQAtfirstEXP4}
    \end{align}
    By definition, $z_{m,t} = \frac{\tilde{q}_{m,t}}{\tilde{Q}_{\sB_t, t}}$. Hence, Equation~\eqref{eq:pitQAtfirstEXP4} is equivalent to 
    \begin{align*}
        \sum_{m \in \sB_t} \tilde{q}_{m,t}\exp\left(\eta(\hat{\ell}_t-\gamma\sum_{j=1}^K \tilde{\ell}_{j,t}-\tilde{x}_{m,t})\right) \geq \tilde{Q}_{\sB_t, t}.
        \label{eq:pitQAtsecondEXP4}
    \end{align*}
    By our update rule, $\tilde{q}_{m,t+1} = \tilde{q}_{m,t}$ for $m \notin \sB_t$ and $\tilde{q}_{m,t+1} = \tilde{q}_{m,t}\exp\left(\eta(\hat{\ell}_t - \gamma\sum_{j=1}^K \tilde{\ell}_{j,t} - \tilde{x}_{m,t})\right)$ for $m \in \sB_t$. Hence,
    \begin{align*}
        \sum_{m \in \sB_t} \tilde{q}_{m,t+1} = \sum_{m \in \sB_t} \tilde{q}_{m,t}\exp\left(\eta(\hat{\ell}_t-\gamma\sum_{j=1}^K \tilde{\ell}_{j,t}-\tilde{x}_{m,t})\right) \geq \tilde{Q}_{\sB_t, t}.
    \end{align*}
    Applying Fact 2 for $a = \sum_{m \in \sB_t} \tilde{q}_{m,t+1}, b = \tilde{Q}_{\sB_t, t}$ and $c = \tilde{Q}_{\bar{\sB}_t, t}$, we obtain 
    \begin{nalign}
        \sum_{i=m}^M z_{m,t}\exp\left(\eta(\hat{\ell}_t - \gamma\sum_{j=1}^K \tilde{\ell}_{j,t}-\tilde{x}_{m,t})\right) &= \sum_{m \in \sB_t} z_{m,t}\exp\left(\eta(\hat{\ell}_t - \gamma\sum_{j=1}^K \tilde{\ell}_{j,t}-\tilde{x}_{m,t})\right)\\
        &= \frac{\sum_{m \in \sB_t} \tilde{q}_{m,t}\exp\left(\eta(\hat{\ell}_t-\gamma\sum_{j=1}^K \tilde{\ell}_{j,t}-\tilde{x}_{m,t})\right)}{\tilde{Q}_{\sB_t, t}} \\
        &= \frac{\sum_{m \in \sB_t} \tilde{q}_{m,t+1}}{\tilde{Q}_{\sB_t, t}} \\
         &\geq \frac{\sum_{m \in \sB_t} \tilde{q}_{m,t+1} + \tilde{Q}_{\bar{\sB}_t, t}}{\tilde{Q}_{\sB_t, t} + \tilde{Q}_{\bar{\sB}_t, t}} \\
        &= \frac{\tilde{Q}_{t+1}}{\tilde{Q}_t},
    \end{nalign}
    where the last equality is due to the fact that $\sum_{m \in \bar{\sB}_t}\tilde{q}_{m,t+1} = \sum_{m \in \bar{\sB}_t}\tilde{q}_{m,t} = \tilde{Q}_{\bar{\sB}_t, t}$.
\end{proof}

\subsection{Bounding the Estimated Regret}
The following lemma bounds the estimated regret of each expert $u \in [M]$.

\begin{lemma}
    For any $\gamma \geq 0$, for any $u \in [M]$, SE-EXP4 guarantees that
    \begin{align}
        \sum_{t=1}^T I_{u,t}(\hat{\ell}_t - \tilde{x}_{u,t}) &\leq \frac{\ln{M}}{\eta} + \left(\gamma + \frac{\eta}{2}\right)\sum_{t=1}^T \sum_{j=1}^K  \tilde{\ell}_{j,t}.
    \end{align}
    \label{lemma:generalBoundEstimatedRegretSEEXP4}
\end{lemma}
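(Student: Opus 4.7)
The plan is to mirror the potential-based analysis of Lemma~\ref{lemma:boundEstimatedRegretSB-EXP3}, with experts playing the role of arms, awake sets playing the role of active sets, and $\tilde{Q}_t = \sum_{m=1}^M \tilde{q}_{m,t}$ serving as the potential. Lemma~\ref{lemma:upperboundQt1QtEXP4} and Lemma~\ref{lemma:expectationofXmt} are the direct counterparts of the growth and Jensen-cancellation lemmas used in that proof, so the skeleton of the argument carries over with almost no modification.

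The lower bound on the potential is immediate: since $\tilde{q}_{u,1}=1$, unrolling the update~\eqref{eq:qmt} yields
\begin{align*}
  \ln \tilde{Q}_{T+1} \geq \ln \tilde{q}_{u,T+1} = \eta\sum_{t=1}^T I_{u,t}\left(\hat{\ell}_t - \gamma\sum_{j=1}^K \tilde{\ell}_{j,t} - \tilde{x}_{u,t}\right).
\end{align*}
For the upper bound I would telescope $\ln \tilde{Q}_{T+1} = \ln M + \sum_t \ln(\tilde{Q}_{t+1}/\tilde{Q}_t)$, apply Lemma~\ref{lemma:upperboundQt1QtEXP4}, factor $\exp(\eta(\hat{\ell}_t - \gamma\sum_j \tilde{\ell}_{j,t}))$ out of the sum over $m$, and use $e^{-x} \leq 1 - x + x^2/2$ for $x \geq 0$ (legitimate since $\tilde{x}_{m,t} \geq 0$) followed by $\ln(1+y) \leq y$. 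Lemma~\ref{lemma:expectationofXmt} then cancels the first-order term $-\eta\sum_m z_{m,t}\tilde{x}_{m,t}$ against the factored-out exponent, leaving
\begin{align*}
  \ln \tilde{Q}_{T+1} \leq \ln M + \frac{\eta^2}{2}\sum_{t=1}^T \sum_{m=1}^M z_{m,t}\tilde{x}_{m,t}^2.
\end{align*}

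The main step is controlling the second-moment term by $\sum_{j=1}^K \tilde{\ell}_{j,t}$; this is the expert-advice analogue of the bound $\sum_{i\in\sA_t} p_{i,t}\tilde{\ell}_{i,t}^2 \leq \sum_{i\in\sA_t}\tilde{\ell}_{i,t}$ used in Lemma~\ref{lemma:boundEstimatedRegretSB-EXP3}. Since $\tilde{\ell}_{j,t}=0$ for $j \neq i_t$, we have $\tilde{x}_{m,t} = E_{m,t}^{(i_t)}\tilde{\ell}_{i_t,t}$, and then $(E_{m,t}^{(i_t)})^2 \leq E_{m,t}^{(i_t)}$ together with $p_{i_t,t} = \sum_m z_{m,t}E_{m,t}^{(i_t)}$ and $p_{i_t,t}\tilde{\ell}_{i_t,t} \leq \ell_{i_t,t} \leq 1$ give
\begin{align*}
  \sum_m z_{m,t}\tilde{x}_{m,t}^2 = \tilde{\ell}_{i_t,t}^2 \sum_m z_{m,t}(E_{m,t}^{(i_t)})^2 \leq p_{i_t,t}\tilde{\ell}_{i_t,t}^2 \leq \tilde{\ell}_{i_t,t} = \sum_{j=1}^K \tilde{\ell}_{j,t}.
\end{align*}
Combining the two bounds on $\ln \tilde{Q}_{T+1}$, dividing by $\eta$, and using $I_{u,t}\leq 1$ to merge $\gamma\sum_t I_{u,t}\sum_j \tilde{\ell}_{j,t}$ with the $(\eta/2)\sum_t\sum_j \tilde{\ell}_{j,t}$ term delivers the stated inequality.
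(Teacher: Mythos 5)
Your proposal is correct and follows essentially the same potential-based argument as the paper: the same lower bound via $\tilde{q}_{u,T+1}$, the same telescoping/Jensen/quadratic-expansion upper bound on $\ln\tilde{Q}_{T+1}$, and the same final merge using $I_{u,t}\leq 1$. The only (cosmetic) difference is in bounding $\sum_m z_{m,t}\tilde{x}_{m,t}^2$: you exploit that $\tilde{\ell}_t$ has a single nonzero coordinate and use $(E_{m,t}^{(i_t)})^2\leq E_{m,t}^{(i_t)}$, while the paper applies Jensen's inequality to $\langle E_{m,t},\tilde{\ell}_t\rangle^2$ — both reduce to the same bound $\sum_k p_{k,t}\tilde{\ell}_{k,t}^2\leq\sum_k\tilde{\ell}_{k,t}$.
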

\begin{proof}
    We have 
\begin{nalign}
    \ln{\tilde{Q}_{T+1}} &= \ln{\sum_{m=1}^K \tilde{q}_{m,T+1}} \\
    &\geq \ln{\tilde{q}_{u, T+1}} \\
    &= \eta\sum_{t=1}^T I_{u,t}(\hat{\ell}_t - \gamma\sum_{j=1}^K \tilde{\ell}_{j,t} -\tilde{x}_{u,t}).
    \label{eq:lowerBoundQT1EXP4}
\end{nalign}
On the other hand, we have
\begin{align*}
    \ln{\tilde{Q}_{T+1}} &= \ln{\tilde{Q}_1} + \sum_{t=1}^T \ln{\frac{\tilde{Q}_{t+1}}{\tilde{Q}_t}} \\
    &\leq \ln{M} + \sum_{t=1}^T \ln\left(\sum_{m \in \sB_t} z_{m,t}\exp\left(\eta(\hat{\ell}_t - \gamma\sum_{j=1}^K \tilde{\ell}_{j,t}- \tilde{x}_{m,t})\right)\right) \\
    &= \ln{M} + \sum_{t=1}^T \ln\left(\exp(\eta(\hat{\ell}_t-\gamma\sum_{j=1}^K \tilde{\ell}_{j,t}))\sum_{m \in \sB_t}z_{m,t}\exp(-\eta\tilde{x}_{m,t})\right) \\
    &= \ln{M} + \sum_{t=1}^T \left( \eta(\hat{\ell}_t-\gamma\sum_{j=1}^K \tilde{\ell}_{j,t}) + \ln\left(\sum_{m \in \sB_t}z_{m,t}\exp(-\eta\tilde{x}_{m,t})\right) \right) \\
    &\leq \ln{M} + \sum_{t=1}^T \left( \eta(\hat{\ell}_t - \gamma\sum_{j=1}^K \tilde{\ell}_{j,t}) + \ln\left(\sum_{m \in \sB_t}z_{m,t}(1 + \frac{\eta^2\tilde{x}_{m,t}^2}{2} - \eta\tilde{x}_{m,t})\right) \right)  \\
    &= \ln{M} + \sum_{t=1}^T \left( \eta(\hat{\ell}_t-\gamma\sum_{j=1}^K \tilde{\ell}_{j,t}) + \ln\left(1 + \eta^2\sum_{m \in \sB_t} \frac{z_{m,t}\tilde{x}_{m,t}^2}{2} - \eta \sum_{m \in \sB_t}z_{m,t}\tilde{x}_{m,t}\right) \right) \\
    &\leq \ln{M} + \sum_{t=1}^T \left( \eta(\hat{\ell}_t-\gamma\sum_{j=1}^K \tilde{\ell}_{j,t}) + \eta^2\sum_{m \in \sB_t} \frac{z_{m,t}\tilde{x}_{m,t}^2}{2} - \eta \sum_{m \in \sB_t}z_{m,t}\tilde{x}_{m,t} \right) \\
    &= \ln{M} + \eta^2\sum_{t=1}^T \sum_{m \in \sB_t} \frac{z_{m,t}\tilde{x}_{m,t}^2}{2},
\end{align*}
where 
\begin{itemize}
    \item the first inequality is due to Lemma~\ref{lemma:upperboundQt1QtEXP4},
    \item the second inequality is $\exp(-x) \leq 1 + \frac{x^2}{2}-x$ for all $x \geq 0$,
    \item the third inequality is $\ln(1+x) \leq x$ for all $x \geq -1$,
    \item the last equality is due to Lemma~\ref{lemma:expectationofXmt}.
\end{itemize}
We obtain 
\begin{align}
    \sum_{t=1}^T I_{u,t}(\hat{\ell}_t - \gamma\sum_{j=1}^K \tilde{\ell}_{j,t} - \tilde{x}_{u,t}) \leq \frac{\ln{M}}{\eta} + \frac{\eta}{2}\sum_{t=1}^T \sum_{m \in \sB_t} z_{m,t}\tilde{x}_{m,t}^2.
\end{align}
Next, we proceed in the same way as in~\cite{Neu2015ExploreNM}. We have
\begin{align}
    \sum_{m \in \sB_t}z_{m,t}\tilde{x}_{m,t}^2 &= \sum_{m \in \sB_t}z_{m,t} \left(\sum_{k=1}^K E_{m,t}^{(k)}\tilde{\ell}_{k,t}\right)^2 \\
    &\leq \sum_{m \in \sB_t}z_{m,t} \sum_{k=1}^K E_{m,t}^{(k)}(\tilde{\ell}_{k,t})^2 \\
    &= \sum_{k=1}^K (\tilde{\ell}_{k,t})^2\sum_{m \in \sB_t}z_{m,t}E_{m,t}^{(k)} \\
    &= \sum_{k=1}^K p_{k,t}(\tilde{\ell}_{k,t})^2 \\
    &\leq \sum_{k=1}^K \tilde{\ell}_{k,t},
\end{align}
where the first inequality is Jensen's inequality. This implies 
\begin{align}
    \sum_{t=1}^T I_{u,t}(\hat{\ell}_t - \gamma\sum_{j=1}^K \tilde{\ell}_{j,t} - \tilde{x}_{u,t}) &\leq \frac{\ln{M}}{\eta} + \frac{\eta}{2}\sum_{t=1}^T\sum_{k=1}^K \tilde{\ell}_{k,t}.
\end{align}
Moving $\gamma\sum_{t=1}^T I_{u,t}\sum_{j=1}^K \tilde{\ell}_{j,t}$ to the right-hand side and using $I_{u,t} \in [0,1]$, we obtain the desired statement.
\end{proof}

\subsection{Proof of Theorem~\ref{thm:highprobregretboundEXP4}}
\TheoremHighProbSEEXP*
\begin{proof}
    Lemma~\ref{lemma:generalBoundEstimatedRegretSEEXP4} implies that
    \begin{nalign}
        \sum_{t=1}^T I_{u,t}\hat{\ell}_t &\leq \frac{\ln{M}}{\eta} + \sum_{t=1}^T I_{u,t} \tilde{x}_{u,t} + \sum_{t=1}^T \sum_{j=1}^K \left(\gamma + \frac{\eta}{2}\right) \tilde{\ell}_{j,t} \\
        &= \frac{\ln{M}}{\eta} + \sum_{t=1}^T I_{u,t}\sum_{j=1}^K E_{u,t}^{(j)} \tilde{\ell}_{j,t} + \sum_{t=1}^T \sum_{j=1}^K \left(\gamma + \frac{\eta}{2}\right) \tilde{\ell}_{j,t}.
        \label{eq:SEEXP4EstimatedRegret}
    \end{nalign}
    Next, we apply Lemma~\ref{lemma:NeusLemmaEstimatorQuality} twice, where 
    \begin{itemize}
        \item the first time with $\alpha_{i,t} = 2\gamma I_{u,t}E_{u,t}^{(i)}, \delta' = \frac{\delta}{2M}$ and a union bound over $[M]$ implies
        \begin{align}
            \sum_{t=1}^T\sum_{j=1}^K I_{u,t}E_{u,t}^{(j)} (\tilde{\ell}_{j,t} - \ell_{j,t}) \leq \frac{\ln(2M/\delta)}{2\gamma}
            \label{eq:concentrateSEEXP4-1}
        \end{align}
        with probability at least $1-\delta/2$;
        \item the second time with $\alpha_{i,t} = \gamma + \eta/2, \delta' = \delta/2$ implies
        \begin{align}
            \left(\gamma + \frac{\eta}{2}\right)\sum_{t=1}^T \sum_{j=1}^K (\tilde{\ell}_{j,t} - \ell_{j,t}) \leq \ln(2/\delta)
            \label{eq:concentrateSEEXP4-2}
        \end{align}
        with probability at least $1-\delta/2$.        
    \end{itemize}    
    Plugging~\eqref{eq:concentrateSEEXP4-1} and ~\eqref{eq:concentrateSEEXP4-2} into~\eqref{eq:SEEXP4EstimatedRegret} yields
    \begin{align*}
        \sum_{t=1}^T I_{u,t}\hat{\ell}_t &\leq \frac{\ln{M}}{\eta} + \sum_{t=1}^T I_{u,t}\sum_{j=1}^K E_{u,t}^{(j)}\ell_{j,t} + \frac{\ln(2M/\delta)}{2\gamma} + \left(\gamma + \frac{\eta}{2}\right)\sum_{t=1}^T\sum_{j=1}^K \ell_{j,t} + \ln(2/\delta) \\
        &\leq \frac{\ln{M}}{\eta} + \sum_{t=1}^T I_{u,t}x_{u,t} + \frac{\ln(2M/\delta)}{2\gamma} + \left(\gamma + \frac{\eta}{2}\right)TK + \ln(2/\delta). 
    \end{align*}
    Moving $\sum_{t=1}^T I_{u,t}x_{u,t}$ to the left-hand side, we obtain 
    \begin{align}
        R(u) \leq \frac{\ln{M}}{\eta} + \frac{\ln(2M/\delta)}{2\gamma} + \left(\gamma + \frac{\eta}{2}\right)TK + \ln(2/\delta).
    \end{align}
Letting $\eta = 2\gamma$ and tuning $\eta$ implies the $O(\sqrt{TK\ln(M/\delta)})$ bound.
\end{proof}

\subsection{A Pseudo-Regret Bound of SE-EXP4}
We bound the pseudo-regret $\E[R(u)]$ for any expert $u \in [M]$. Note that $\gamma = 0$.
Taking the expectation on both side of Lemma~\ref{lemma:generalBoundEstimatedRegretSEEXP4} and using 
\begin{align*}
    \E_{i_t \sim p_t}\left[\tilde{\ell}_{j,t}\right] &= \ell_{j,t} \leq 1,
\end{align*}
we obtain
\begin{align}
    \E[\sum_{t=1}^T I_{u,t}(\hat{\ell}_t - \tilde{x}_{u,t})] &\leq \frac{\ln{M}}{\eta} + \frac{\eta}{2}TK.
\end{align}
On the other hand,
\begin{align*}
    \E_{i_t \sim p_t}[\tilde{x}_{u,t}] &= \sum_{k=1}^K p_{k,t}\E[\tilde{x}_{u,t} \mid i_t = k] \\
    &= \sum_{k=1}^K p_{k,t} \E[\sum_{j=1}^K E_{u,t}^{(j)}\tilde{\ell}_{j,t} \mid i_t = k] \\
    &= \sum_{k=1}^K p_{k,t}E_{u,t}^{(k)}\frac{\ell_{k,t}}{p_{k,t}} \\
    &= \sum_{k=1}^K E_{u,t}^{(k)}\ell_{k,t} \\
    &= \inp{E_{u,t}}{\ell_t}.
\end{align*}
We conclude that 
\begin{align}
    \E[R(u)] \leq \frac{\ln{M}}{\eta} + \frac{\eta TK}{2}.
\end{align}
By setting $\eta = \sqrt{\frac{2\ln{M}}{TK}}$ we obtain the bound 
\begin{align}
\E[R(u)] \leq \sqrt{2TK\ln{M}}.
\end{align}

\subsection{Proof of Theorem~\ref{thm:adaptiveBound}}
\TheoremAdaptiveBound*
\begin{proof}
For any triple $(k, t_1, t_2)$ we create a virtual expert that is active from round $t_1$ to round $t_2$ and give advice $e_k$. There are $M = K{T \choose 2} = \frac{KT(T+1)}{2}$ such experts. 
        Because $M \leq KT^2 \leq (KT)^2$, we have 
        \begin{align*}
            \ln{M} \leq \ln((KT)^2) = 2\ln(KT).
        \end{align*}
        Furthermore, for all $\delta \in (0,1)$,
        \begin{align*}
            \ln(2M/\delta) \leq \ln(4M/\delta^2) \leq \ln((2KT/\delta)^2) =  2\ln(2KT/\delta).
        \end{align*}
        Let $u_{k,t_1,t_2}$ denote the expert indexed by $(k,t_1,t_2)$. By Theorem~\ref{thm:highprobregretboundEXP4}, with probability at least $1-\delta$,
    \begin{nalign}
        R_{[t_1, t_2]}(k) &= R(u_{k,t_1,t_2}) \\
        &\leq \frac{\ln{M}}{\eta} + \frac{\ln(2M/\delta)}{2\gamma} + (\gamma + \frac{\eta}{2})TK + \ln(2/\delta) \\
        &\leq \frac{2\ln(KT)}{\eta} + \frac{\ln(2KT/\delta)}{\gamma} + (\gamma + \frac{\eta}{2})TK + \ln(2/\delta)
    \end{nalign}   
    holds simultaneously for all $u_{k, t_1, t_2}$.
\end{proof}

\subsection{Proof of Corollary~\ref{corollary:sqrtStrackingbound}}
\CorollaryTrackingBound*
\begin{proof}
Without loss of generality, assume that $T$ is divisible by $S$. We use the following algorithm called SE-EXP4-Restart, which runs in $S$ episodes. In each episode, a new instance of SE-EXP4 with virtual experts is run for $T/S$ rounds. Let $b = 1, 2, \dots, S$ be an index for the episodes, and $t_{(b)} = \frac{bT}{S}$ be the ending round of episode $b$. Note that each episode $b$ starts from round $\frac{(b-1)T}{S} + 1$ to round $\frac{bT}{S}$.

    We examine the regret of this SE-EXP4-Restart with respect to the competing arms $j_{1:T}$ in each episode $b$. Divide $T$ rounds into $Z = H(j_{1:T}) + 1$ non-overlapping segments, where the competing arms are the same within each segment $z = 1, \dots, Z$. Let $S_z, E_z$ be the first and last rounds of segment $z$. For every pair $(b, z)$ of episode $b$ and segment $z$, let
    \begin{align}
        F_{b, z} = [S_z, E_z] \cap [\frac{(b-1)T}{S}, \frac{bT}{S}]
    \end{align}
    be the intersection between the rounds of episode $b$ and segment $z$.
    Since the episodes are non-overlapping and the segments are non-overlapping, the intervals $F_{b, z}$'s are non-overlapping. In addition, their union is $\cup_{b,z}F_{b,z} = [T]$.

    Fix an episode $b$ and a segment $z$. There are two cases:
    \begin{itemize}
        \item $F_{b,z} = \emptyset$: Obviously, the regret of SE-EXP4-Restart on this empty interval is zero.
        \item $F_{b,z} \neq \emptyset$: in this case, because $F_{b,z}$ is an interval within episode $b$, the tracking regret of SE-EXP4-Restart on $F_{b,z}$ cannot exceed the adaptive regret of running (a new instance of) SE-EXP4 with virtual experts under horizon $T/S$. By Theorem~\ref{thm:adaptiveBound}, this is bounded by 
        \begin{align}
            R_{F_{b,s}} &\leq \frac{2}{\eta}\ln\left(\frac{KT}{S\delta}\right) + \eta TK/S + \ln(2S/\delta)
        \end{align}
        with probability at least $1-\delta/S$.       
    \end{itemize}
    Taking a union bound over all $S$ episodes and setting $\eta = 2\gamma$ implies that with probability at least $1-\delta$,
    \begin{align*}
        R_{F_{b,s}} &\leq \frac{2}{\eta}\ln\left(\frac{KT}{S\delta}\right) + \eta TK/S + \ln(2S/\delta)
    \end{align*}
    simultaneously for all intervals $F_{b,s}$.
    Because $F_{b,z}$'s are non-overlapping and their union is $[T]$, the tracking regret of SE-EXP4-Restart is bounded by 
    \begin{align}
        R(j_{1:T}) &= \sum_{b,z}R_{F_{b,z}} \\
        &= \left(\sum_{b,z} \I{F_{b,z} \neq \emptyset}\right)\left(\frac{2}{\eta}\ln\left(\frac{KT}{S\delta}\right) + \eta TK/S + \ln(2S/\delta)\right).
    \end{align}
    Next, we show that the count $C = \sum_{b,z}\I{F_{b,z} \neq \emptyset}$ is smaller than $2S$. 
    Observe that the $S$ episodes split the sequence of $T$ rounds into $S$ intervals with $S-1$ splitting points (not counting the two ends at $0$ and $T$).
    Similarly, the $S+1$ segments of $j_{1:T}$ have $S$ splitting points.
    In total, there are at most $2S-1$ splitting points from the episodes and the segments of $j_{1:T}$.
    Each non-empty interval $F_{b,z}$ has an ending point that is either $T$ or one of the $2S-1$ splitting points. Therefore, there can be at most $2S$ such $F_{b,z}$. We conclude that $C \leq 2S$. As a result, with probability at least $1-\delta$,
    \begin{align*}
        R(j_{1:T}) \leq \frac{4S}{\eta}\ln\left(\frac{KT}{S\delta}\right) + 2\eta TK + 2S\ln(2S/\delta).
    \end{align*}
    Letting $\eta = \sqrtfrac{2S\ln\left(\frac{KT}{S\delta}\right)}{TK}$ leads to the desired bound.
    \end{proof}

\section{PROOFS FOR SECTION~\ref{sec:AdaptiveLowerBound}}
\TheoremLowerBound*
\begin{proof}
Let $\gA$ be any algorithm with the stated worst-case guarantee and $f(T, A) = O(T^{\gamma}A^\beta (\ln(T))^\mu)$ represent the worst-case regret bound of $\gA$. 
    We show a construction with $A_t = 2$ for all $t = 1, \dots, T$.     
    Our construction is adapted from the lower bound construction of~\cite{Daniely2015StronglyAdaptiveOL} for strongly adaptive regret in the standard adversarial MAB setting. Without loss of generality, assume $4f(T, A)$ divides $T$. Let $L = \frac{T}{4f(T, A)}$ and $K = 1 + 4f(T, A)$. Obviously $L \geq \Omega(T^{1-\gamma}A^{-\beta}(\ln(T))^{-\mu})$. Consider an environment $V_0$ defined as follows: 
    \begin{itemize}
        \item Arm $1$ is always active. Its losses are $\ell_{1, t} = 0.5$ for all $t \in [T]$.
        \item Arm $k = 2, 3, \dots, K$ are active only for the rounds in the interval $\gI_k = \left[\frac{(k-2)T}{4f(T, A)} + 1, \frac{(k-1)T}{4f(T,A)}\right]$, respectively. The length of each interval is $L$. Their losses are $\ell_{k,t} = 1$ for the rounds $t$ in which they are active.
    \end{itemize}
    Figure~\ref{fig:V0} illustrates this environment $V_0$.     
    \begin{figure}[t]
        \centering
        \begin{tikzpicture}
        \draw (0,0.2) -- + (0,-0.4) node[below] {1};
        \draw (12.5,0.2) -- + (0,-0.4) node[below] {$T$};
        \draw (0,0) -- node[above=2mm, pos=0.5] {arm 1, $\ell_{1,t}=0.5$} + (12.5,0);
        % arm 2
        \draw (0,-1.2) -- + (0,-0.4) node[below] {1};
        \draw (2.5,-1.2) -- + (0,-0.4) node[below] {};
        \draw (0,-1.4) -- node[above=2mm, pos=0.5] {arm 2} node[below=2mm, pos=0.5]{$\ell_{2,t}=1$} + (2.5,0);
        % arm 3
        \draw (5,-1.2) -- + (0,-0.4) node[below] {};
        \draw (2.5,-1.4) -- node[above=2mm, pos=0.5] {arm 3} node[below=2mm, pos=0.5]{$\ell_{3,t}=1$} + (2.5,0);
        % arm 4 to K-1
        \draw[dotted] (5, -1.4) -- + (5, 0);
        % arm K
        \draw (10, -1.2) -- + (0,-0.4) node[below]{};
        \draw (12.5,-1.2) -- + (0,-0.4) node[below] {$T$};
        \draw (10,-1.4) -- node[above=2mm, pos=0.5] {arm $K$} node[below=2mm, pos=0.5]{$\ell_{K,t}=1$} + (2.5,0);
        \end{tikzpicture}
        \caption{Environment $V_0$. All arms have loss equal 1 when they are active}
        \label{fig:V0}
\end{figure}
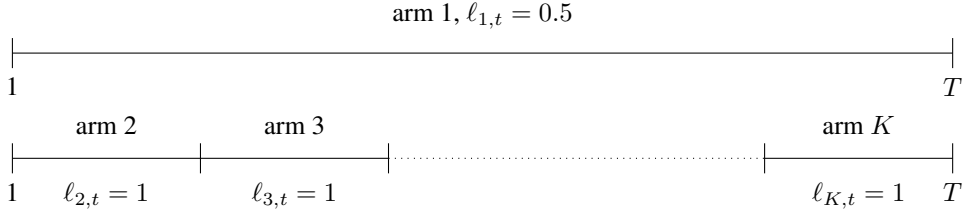
We also define $K-1$ competing environments $V_{k}$ for $k = 2, 3, \dots, K$, defined as follows: 
\begin{itemize}
    \item The number of arms and their active rounds are identical to that of $V_0$. That is, arm 1 is always active for all rounds while each of the arms $k = 2, 3, \dots, K$ are active for rounds within $\gI_k$, respectively.
    \item The losses of every arm are the same as in $V_0$, except for that of arm $k$: all losses of arm $k$ are $0$. 
    Figure~\ref{fig:Vkj} illustrates environment $V_{k,j}$.
\end{itemize}
%%%%%%%%%%%%%%%%%%
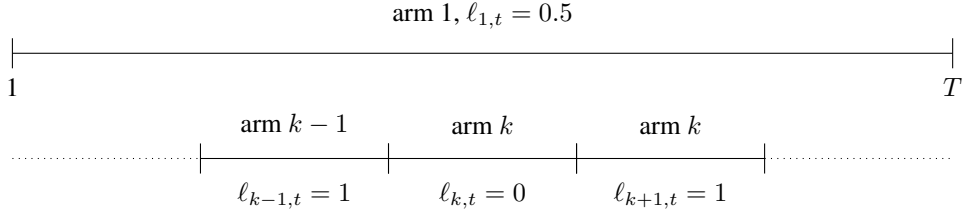
\begin{figure}[t]
    \centering
    \begin{tikzpicture}
    [dot/.style = {circle,inner sep=0pt, minimum size=2pt}]
    \draw (0,0.2) -- + (0,-0.4) node[below] {1};
    \draw (12.5,0.2) -- + (0,-0.4) node[below] {$T$};
    \draw (0,0) -- node[above=2mm, pos=0.5] {arm 1, $\ell_{1,t}=0.5$} + (12.5,0);
    % arm k-1
    \draw[dotted] (0, -1.4) -- + (2.5, 0);
    \draw (2.5,-1.2) -- + (0,-0.4) node[below] {};
    \draw (5,-1.2) -- + (0,-0.4) node[below] {};
    \draw (2.5,-1.4) -- node[above=2mm, pos=0.5] {arm $k-1$}node[below=2mm, pos=0.5]{$\ell_{k-1,t}=1$} + (2.5,0);
    % arm k
    \draw (5,-1.2) -- + (0,-0.4) node[below] {};
    \draw (7.5,-1.2) -- + (0,-0.4) node[below] {};
    \draw (5,-1.4) -- node[above=2mm, pos=0.5] {arm $k$} node[below=2mm, pos=0.5]{$\ell_{k,t}=0$}  + (2.5,0);
    % arm k+1
    \draw (7.5,-1.2) -- + (0,-0.4) node[below] {};
    \draw (10,-1.2) -- + (0,-0.4) node[below] {};
    \draw (7.5,-1.4) -- node[above=2mm, pos=0.5] {arm $k$} node[below=2mm, pos=0.5]{$\ell_{k+1,t}=1$}  + (2.5,0);
    \draw[dotted] (10, -1.4) -- + (2.5, 0);
    \end{tikzpicture}
    \caption{Environment $V_{k}$. Except for arm $k$ which has losses equal to $0$, all arms have losses equal to $1$ when they are active.}
    \label{fig:Vkj}
\end{figure}
Following the standard strategy of comparing the behavior of the algorithm on $V_0$ and competing environments~\citep{EXP3Auer2002b, Daniely2015StronglyAdaptiveOL}, we first consider the neutral environment $V_0$. 
Let $\E_0$ and $\Pr_0$ indicate the expectation and probability taken in this environment over the randomness of the algorithm $\gA$, respectively. Let $U = \{t: i_t \neq 1\}$ be the rounds in which the arm chosen by $\gA$ is not arm $1$. On $V_0$, since arm $1$ is the best arm and the gaps between the losses of arm $1$ and that of every other arm is $0.5$, the inequality $\sup_{a}\E_0[R(a)] \leq f(T, A)$ implies 
\begin{align}
    \E_0\left[\abs{U}\right] \leq 2f(T, A).
    \label{eq:boundE0U}
\end{align}
For any $k \in \{2, 3, \dots, K\}$, let 
\begin{align*}
    \gE_k = \{U \cap \gI_k = \emptyset\}
\end{align*}
be the event that only arm $1$ is chosen by $\gA$ on $\gI_k$. 
Because the $\gI_k$ are non-overlapping and $\cup_{k=2,\dots,K}\gI_k = [T]$, we can write 
\begin{align*}
    U = \cup_{k=2, \dots,K} (U \cap \gI_k),
\end{align*}
and 
\begin{align*}
    \abs{U}= \sum_{k=2}^K \abs{U \cap \gI_k}.
\end{align*}
Next, we show that for some segment $\gI_{k^*}$ of size $L = \frac{T}{4f(T,A)}$, we have $\E_0\left[\abs{U \cap \gI_{k^*}}\right] \leq \frac{1}{2}$. Assume on the contrary that $\E_0\left[\abs{U \cap \gI_{k}}\right] > \frac{1}{2}$ for all $k = 2, \dots, K$. Then,
\begin{align*}
    \E_0[U] &= \sum_{k=2}^K \E_0\left[U \cap \gI_k\right] \\
    &> \frac{K-1}{2} \\
    &= 2f(T, A),
\end{align*}
which contradicts~\eqref{eq:boundE0U}. Since $\abs{U \cap \gI_{k^*}}$ is a non-negative integer, the inequality $\E_0\left[\abs{U \cap \gI_{k^*}}\right] \leq \frac{1}{2}$ implies $\Pr_0\left[\gE_{k^*}\right] \geq \frac{1}{2}$.
Next, we consider the environment $V_{k^*}$. From round $1$ up to (and including) round $t^* = \frac{(k^*-2)T}{4f(T, A)}$, the set of active arms and their losses on $V_0$ and $V_{k^*}$ are identical. As a result, the distribution over the past chosen arms and observed losses induced by $\gA$ up to round $t^*$ is the same on both environment. Moreover, once the algorithm enters $\gI_{k^*}$ at round $t^*+1$ and chooses only arm $1$ subsequently, it also observes the same sequence of chosen arms and losses on both $V_0$ and $V_{k^*}$ until the end of $\gI_{k^*}$. Hence,
\begin{align}
    {\Pr}_{k^*}\left[\gE_{k^*}\right] = {\Pr}_0\left[\gE_{k^*}\right]\geq \frac{1}{2},
\end{align}
where the subscript $k^*$ indicates a probability measured in environment $V_{k^*}$.
In other words, on $V_{k^*}$, with probability at least $0.5$, arm 1 is always chosen on $I_{k^*}$ and arm $k^*$ is never chosen. Under this event $\gE_{k^*}$, the regret of $\gA$ with respect to arm $k^*$ is $(0.5 - 0)L = 0.5L$. When $\gE_{k^*}$ does not hold, the regret with respect to $k^*$ is non-negative because $\ell_{k^*,t} = 0$. Overall, on $V_{k^*}$ the expected regret is at least
\begin{align*}
    \E_{k^*}\left[R(k^*)\right] \geq \frac{L}{4}.
\end{align*}
\end{proof}

\section{PROOF OF THEOREM~\ref{thm:anytimeinexpectationbound}: DOUBLING-TRICK FOR ADAPTING TO $\sum_{t=1}^T A_t$ AND $G_T$}
\label{appendix:anytime}
For SB-EXP3, computing the optimal learning rates require the fraction $\sqrtfrac{\ln{G_T}}{\sum_{t=1}^T A_t}$ of $\sqrt{\ln{G_T}}$ and the sum $\sqrt{\sum_{t=1}^T A_t}$. Both of these quantities are monotonically non-decreasing. Therefore, we can apply the doubling trick on these two quantities. First, we prove a simple lemma justifying doing this.
\begin{lemma}
    Let $a, b, c, d > 0$ be constants such that $a \leq c$ and $b \leq d$. Let 
    \begin{align*}
        f(x) = \frac{a}{x} + \frac{bx}{2}
    \end{align*}
    be a function on $\R_+$. Then,
    \begin{align*}
        f\left(\sqrt\frac{2c}{d}\right) \leq \sqrt{2cd}.
    \end{align*}
    \label{lemma:justifydoublingtrick}
\end{lemma}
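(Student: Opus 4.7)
The plan is to simply substitute $x = \sqrt{2c/d}$ into $f$ and bound the two resulting terms separately using the hypotheses $a \le c$ and $b \le d$. Concretely, I would first compute
\begin{equation*}
f\!\left(\sqrt{\tfrac{2c}{d}}\right) = \frac{a}{\sqrt{2c/d}} + \frac{b}{2}\sqrt{\tfrac{2c}{d}} = a\sqrt{\tfrac{d}{2c}} + \frac{b}{2}\sqrt{\tfrac{2c}{d}},
\end{equation*}
so the expression splits cleanly into one term linear in $a$ and one term linear in $b$.

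Next I would bound each term. Since $a \le c$, the first term satisfies $a\sqrt{d/(2c)} \le c\sqrt{d/(2c)} = \sqrt{cd/2}$. Since $b \le d$, the second term satisfies $\frac{b}{2}\sqrt{2c/d} \le \frac{d}{2}\sqrt{2c/d} = \sqrt{cd/2}$. Adding the two bounds gives $f(\sqrt{2c/d}) \le 2\sqrt{cd/2} = \sqrt{2cd}$, which is exactly the claim.

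There is essentially no obstacle here; the only subtlety worth flagging is that the value $x = \sqrt{2c/d}$ is precisely the minimizer of the upper envelope $g(x) = c/x + dx/2$ (whose minimum value equals $\sqrt{2cd}$), and the monotonicity $a \le c$, $b \le d$ guarantees $f(x) \le g(x)$ pointwise. So one clean alternative writeup is: observe $f(x) \le g(x) = c/x + dx/2$ for all $x > 0$, then note that $g$ attains its minimum $\sqrt{2cd}$ at $x = \sqrt{2c/d}$, giving the conclusion in one line. Either presentation works, and I would choose whichever matches the calculational style used elsewhere in the appendix.
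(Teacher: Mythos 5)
Your proof is correct and is essentially the same argument as the paper's: the paper first bounds $f(x) \leq c/x + dx/2$ pointwise using $a \leq c$, $b \leq d$ and then substitutes $x = \sqrt{2c/d}$, which is exactly the ``alternative writeup'' you describe at the end. The order of substitution and bounding is immaterial, so there is nothing further to reconcile.
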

\begin{proof}
    Due to $a \leq c$ and $b \leq d$, for any $x \geq 0$ we have 
    \begin{align*}
        f(x) \leq \frac{c}{x} + \frac{dx}{2}.
    \end{align*}
    Plugging $x = \sqrtfrac{2c}{d}$ into the right-hand side gives
    \begin{align*}
        f\left(\sqrt\frac{2c}{d}\right) &\leq c\sqrtfrac{d}{2c} + d\sqrtfrac{c}{2d} \\
        &= \sqrt{2cd}.
    \end{align*}
\end{proof}
Lemma~\ref{lemma:justifydoublingtrick} implies that for any horizon $T$, if we set $\eta_t = \sqrt{\frac{2c}{d}}$ for $c$ and $c$ such that $c \geq \ln{G_T}$ and $d \geq \sum_{t=1}^T A_t$ then we obtain a regret of bound $\sqrt{2cd}$. 

We proceed to perform the doubling trick on $\ln{G_T}$ and $\sum_{t=1}^T A_t$. The full procedure is given in Algorithm~\ref{algo:AnyTimeSBEXP3}. The main idea is a two-level doubling trick which divides the learning process into episodes as follows: 
\begin{itemize}
    
\item The first level: throughout the learning process, we maintain a set $\sV$ for the arms that have been active at least once in each episode and an upper bound $2^C$ for $\ln(\abs{\sV})$. Initially, $\sV = \emptyset$ and $C = 1$. 
At the beginning of round $t$, we check if $\ln(\abs{\sV \cup \sA_t})$ exceeds $2^C$. If $\ln(\abs{\sV \cup \sA_t}) \leq 2^C$ then we continue the learning process and update $\sV = \sV \cup \sA_t$. Otherwise, we reset $\sV$ to $\emptyset$, increment $C$ by at least one until $2^C \geq \ln(A_t)$ and start a new episode from round $t$.

\item The second level: throughout the rounds of each episode, we maintain a cumulative sum $U$ for the sum of $A_t$ and an upper bound $2^b$ for $U$. Note that $C$ is fixed within these rounds. Before the first round of an episode, we initialize $U = 0, b = 1$. As long as $U + A_t \leq 2^b$, we run SB-EXP3 with $\eta = \sqrtfrac{2^{C+1}}{2^b}$ and update $U = U + A_t$. Once $U$ exceeds $2^b$ at some round $t$, we increment $b$ by at least one until $A_t \leq 2^b$, reset $U = 0$ and run a new instance of SB-EXP3 onwards.

\end{itemize}

\begin{algorithm2e}[t]
    \SetAlgoNoEnd
    Initialize $U = 0, C = 1, b = 1, \sV = \emptyset$\;
    Initialize $L_{i} = 0$ for $i = 1, 2, \dots, K$\;
	\For{\upshape {each round} $t = 1, \dots, $}{
        An adversary selects and reveals $\sA_t$\;	
        \If{$\ln(\abs{\sV \cup \sA_t}) > 2^C$}
        {
            $C = C + 1$\;
            \While{$\ln(A_t) > 2^C$}
            {
                $C = C + 1$\;
            }
            Set $\sV = \emptyset$\;
            Set $U = 0, b = 1$\;
            \For{\upshape arm $i \in \sG_t$}
            {
                $L_i = 0$\;
            }            
        }
        \If{$U + A_t > 2^b$}{            
            $b = b + 1$\;
            \While{$A_t > 2^b$}{
                $b = b + 1$\;
            }
            Set U = 0\;
            \For{\upshape arm $i \in \sG_t$}
            {
                $L_i = 0$\;
            }            
        }    
        Update $\sV = \sV \cup \sA_t$\;
        Update $U = U + A_t$\;
        Compute $\eta = \sqrtfrac{2^{C+1}}{2^b}$\;    
        \For{\upshape arm $i \in \sA_t$}
        {
            $\tilde{q}_{i,t} = \exp(\eta L_{i})$
        }
        Compute $p_t$ by~\eqref{eq:pit}\;
        Sample $i_t \sim p_t$\;
        Compute $\tilde{\ell}_{i,t}$ by~\eqref{eq:lossestimator}\;
        \For{\upshape arm $i \in \sA_t$}
        {
            $L_i = L_i + \hat{\ell}_t - \tilde{\ell}_{i,t} - \gamma\sum_{j \in \sA_t}\tilde{\ell}_{j,t}$\;
        }
	}
	\caption{SB-EXP3-ATGT adapted to $G_T$ and $\sum_{t=1}^T A_t$}
	\label{algo:AnyTimeSBEXP3}
\end{algorithm2e}
The pseudo-regret of Algorithm~\ref{algo:AnyTimeSBEXP3} is shown in the following theorem.
\TheoremAnytimeInExpectation*
\begin{proof} 
    Let $C_T$ be the last value of $C$ after $T$ rounds. Note that $C_T$ is also the number of episodes. Let $c = 1, 2, \dots, C_T$ be the index of the episodes. We first bound the regret within each episode $c$, and then sum up this bound over $C_T$ episodes to get the total regret bound.

\subsection*{Bounding The Regret Within Each Episode}
Fix an episode $c$.
Let $\sT_c$ be the rounds in this episode, and $T_c = \abs{\sT_c}$.
Let $\sV_c$ be the set of arms that are active at least once during this episode, and $V_c = \abs{\sV_c}$.
By construction, $b = 1$ at the beginning of this episode. 
Let $B$ be the last value of $b$ after $T_c$ rounds starting from the first round of episode $c$.
Divide the rounds in $\sT_c$ into $B$ time intervals, where $b$ does not change in each interval. For $b = 1, 2, \dots, B$, let $F_b$ be the time interval of $b$. Let $U_b = \sum_{t \in F_b}A_t$ be the sum of $A_t$ within $F_b$. Since $U_b \leq 2^b$ and $\ln(V_c) \leq 2^c$, by Lemma~\ref{lemma:justifydoublingtrick} and Theorem~\ref{thm:inexpectationregretbound}, the regret of the learner in this interval is bounded by
\begin{align*}
    \max_{a \in [K]}\E[R_{F_b}(a)] \leq \sqrt{2^b2^{c+1}}.
\end{align*}
Let $R_c(a)$ be the regret incurred during episode $c$ with respect to arm $a$. We have 
\begin{align*}
    \max_{a \in [K]}\E[R_c(a)] &\leq \sum_{b=1}^{B}\max_{a \in [K]}\E[R_{F_b}(a)] \\
    &\leq \sum_{b=1}^{B} \sqrt{2^b2^{c+1}} \\
    &\leq \frac{\sqrt{2}}{\sqrt{2}-1} \sqrt{2^{B}2^{c+1}}.
\end{align*}
For $b = 1, \dots, B$, let $t_b$ be the first round of $F_b$. Since $b$ is increased from $B-1$ to $B$ at the beginning of round $t_{B}$, we have 
$U_{B-1} + A_{t_{B}} > 2^{B - 1}$. It follows that $2^{B} \leq 2(U_{B-1} + A_{t_{B}}) \leq 2\sum_{t \in \sT_c}A_t$. Hence, 
\begin{align}
    \max_{a \in [K]}\E[R_c(a)] \leq \frac{\sqrt{2}}{\sqrt{2}-1}\sqrt{2^{c+1}\sum_{t \in \sT_c}A_t}.
    \label{eq:eachepisode}
\end{align}
\subsection*{Bounding The Total Regret}
Summing up~\eqref{eq:eachepisode} for $c = 1, \dots, C_T$, we obtain 
\begin{align*}
    \max_{a \in [K]}\E[R(a)] &\leq \sum_{c = 1}^{C_T} \max_{a \in [K]}\E[R_c(a)] \\
    &\leq \frac{\sqrt{2}}{\sqrt{2}-1}\sum_{c=1}^{C_T} \sqrt{2^{c+1} \sum_{t \in \sT_c}A_t} \\
    &\leq \frac{\sqrt{2}}{\sqrt{2}-1} \left(\sqrt{\sum_{t=1}^T A_t}\right)\sum_{c=1}^{C_T} \sqrt{2^{c+1}} \\
    &\leq \frac{2\sqrt{2}}{(\sqrt{2}-1)^2} \left(\sqrt{\sum_{t=1}^T A_t}\right) \sqrt{2^{C_T}},
\end{align*}
where the third inequality is due to $\sum_{t \in \sT_c}A_t \leq \sum_{t=1}^T A_t$ and the last inequality is due to $\sum_{c=1}^{C_T}\sqrt{2^c} = \frac{\sqrt{2}}{\sqrt{2}-1}(\sqrt{2^{C_T}}-1)$.

Assume $C$ was increased at least once i.e. $C_T > 1$, otherwise we immediately have an $O\left(\sqrt{\sum_{t=1}^T A_t}\right)$ total regret bound. 
Let $\tau$ be the first round of the last episode.
Since $C$ was increased from $C_T - 1$ to $C_T$ at round $\tau$, we have $2^{C_T-1} < \ln(\abs{\sV_{C_T-1} \cup \sA_{\tau}})$. On the other hand, since $(\sV_{C_T-1} \cup \sA_{\tau}) \subseteq \sG_T$, we have $\ln(\abs{\sV_{C_T-1} \cup \sA_{\tau}}) \leq \ln(G_T)$.  This implies that $2^{C_T} \leq 2\ln(G_T)$, hence the total regret is bounded by 
\begin{align*}
    \max_{a \in [K]}\E[R(a)] \leq \frac{4}{(\sqrt{2}-1)^2}\sqrt{\ln(G_T)\sum_{t=1}^T A_t}.
\end{align*}
\end{proof}
\begin{remark}
    While the two-level doubling trick works, for practical purposes we can set a small constant (e.g. $16$) to be an upper bound for $\ln{G_T}$ and perform a one-level doubling trick only on $\sum_{t=1}^T A_t$. This is because $\ln{G_T}$ increases exponentially slowly: if the upper bound for $\ln{G_T}$ is doubled in each increment starting from $2^0 = 1$, then to have $k$ such increments $G_T$ must be as large as $G_T \geq e^{2^k}$. For $k=5$, this is approximately $8 \times 10^{13}$, which is exceedingly large for the number of arms. Overall, this implies that the doubling level on $\ln{G_T}$ would change at most $4$ times in any practical scenario, and setting $\ln(G_T) \leq 16$ would contribute at most a multiplicative factor of $\sqrt{16} = 4$ on the total regret bound. 
\end{remark}

\section{FTARL WITH NEGATIVE ENTROPY IS EQUIVALENT TO SB-EXP3}
\label{appendix:SBEXP3ver2}
In Algorithm~\ref{algo:FTARL}, the loss estimate of non-active arms $a \notin \sA_t$ is $\tilde{\ell}_{a,t} = \hat{\ell}_t - \gamma\sum_{j \in \sA_t}\tilde{\ell}_{j,t}$. Because $I_{a,t} = 1$ for $a \in \sA_t$ and $I_{a,t}=0$ for $a \notin \sA_t$, it follows that in Algorithm~\ref{algo:FTARL}, for all $a \in [K]$,
\begin{align}
    \sum_{t=1}^T \left(\hat{\ell}_t - \gamma\sum_{j \in \sA_t}\tilde{\ell}_{j,t} - \tilde{\ell}_{a,t}\right) = \sum_{t=1}^T I_{a,t}\left(\hat{\ell}_t - \gamma\sum_{j \in \sA_t}\tilde{\ell}_{j,t} - \tilde{\ell}_{a,t}\right)
    \label{eq:notimeselection}
\end{align}
Next, we show that in each round, the sampling probability $p_t$ of FTARL with negative entropy is the same as that of SB-EXP3. With $\psi_t(x) = \frac{1}{\eta}\sum_{i=1}^K x_i\ln{x_i}$ the negative Shannon entropy, in Algorithm~\ref{algo:FTARL} the weight vector $q_t$ of FTARL is the solution of the optimization problem
\begin{align*}
    q_t = \min_{x \in \Delta_K} \frac{1}{\eta}\sum_{i=1}^K x_i\ln{x_i} + \sum_{i=1}^K x_i \tilde{L}_{i,t-1}.
\end{align*}
Solving for $q_t$, we obtain for any $i \in [K]$,
\begin{align*}
    q_{i,t} = \frac{\exp(-\eta \tilde{L}_{i,t-1})}{\sum_{k=1}^K \exp(-\eta \tilde{L}_{k,t-1})}.
\end{align*}
It follows that for an active arm $i \in \sA_t$,
\begin{nalign}
    p_{i,t} &= \frac{q_{i,t}}{\sum_{k \in \sA_t}q_{k,t}} \\
    &= \frac{\exp(-\eta \tilde{L}_{i,t-1})}{\sum_{k \in \sA_t}\exp(-\eta \tilde{L}_{k,t-1})} \\
    % &= \frac{\exp(-\eta\sum_{s=1}^{t-1}\tilde{\ell}_{i,s})}{\sum_{k \in \sA_t} \exp(-\eta\sum_{s=1}^{t-1}\tilde{\ell}_{k,s})} \\
    &= \frac{\exp(\eta\sum_{s=1}^{t-1}\hat{\ell}_s - \gamma\sum_{j \in \sA_s}\tilde{\ell}_{j,s} - \tilde{\ell}_{i,s} )}{\sum_{k \in \sA_t} \exp(\eta\sum_{s=1}^{t-1}\hat{\ell}_s - \gamma\sum_{j \in \sA_s}\tilde{\ell}_{j,s} -\tilde{\ell}_{k,s})} \\
    &= \frac{\exp(\eta\sum_{s=1}^{t-1}I_{i,s}(\hat{\ell}_s - \gamma\sum_{j \in \sA_s}\tilde{\ell}_{j,s} - \tilde{\ell}_{i,s}))}{\sum_{k \in \sA_t}\exp(\eta\sum_{s=1}^{t-1} I_{k,s}(\hat{\ell}_s - \gamma\sum_{j \in \sA_s}\tilde{\ell}_{j,s} - \tilde{\ell}_{k,s}))},
    \label{eq:pitFTARLnegativeentropy}
\end{nalign}
where
\begin{itemize}
    \item the second-to-last equality is by multiplying $\exp(\eta\sum_{s=1}^{t-1}\hat{\ell}_s - \gamma\sum_{j \in \sA_s}\tilde{\ell}_{j,s})$ to both the denominator and numerator.
    \item the last equality is due to~\eqref{eq:notimeselection}.
\end{itemize}
Observe that~\eqref{eq:pitFTARLnegativeentropy} is equal to the sampling probability of arm $i \in \sA_t$ computed in~\eqref{eq:pit} of Algorithm~\ref{sec:SB-EXP3}. Moreover, $p_{i,t} = 0$ for $i \notin \sA_t$ in both Algorithms~\ref{algo:SB-EXP3} and~\ref{algo:FTARL}. This implies that FTARL with negative Shannon entropy is equivalent to SB-EXP3.

%\vfill

\end{document}